\pgfplotsset{compat=1.10}
\newtheorem{theorem}{Theorem}[section]
\newtheorem{lemma}{Lemma}[section]
\newtheorem{definition}{Definition}[section]
\newtheorem{remark}{Remark}[section]
\newtheorem{obs}{Observation}[section]
\newif\ifcomment
\newcommand{\rv}[1]{\textcolor{red}{[RV: #1]}}
\newcommand{\cj}[1]{\textcolor{blue}{[CJ: #1]}}
\newcommand{\ar}[1]{\textcolor{purple}{[AR: #1]}}
\newcommand{\mmp}[1]{\textcolor{orange}{[MP: #1]}}
\newcommand{\rv}[1]{}
\newcommand{\cj}[1]{}
\newcommand{\ar}[1]{}
\newcommand{\mmp}[1]{}
\newcommand{\ind}{\mathbbm{1}}
\newcommand{\E}{\mathbb{E}}
\newcommand{\cX}{\mathcal{X}}
\newcommand{\cY}{\mathcal{Y}}
\newcommand{\cH}{\mathcal{H}}
\newcommand{\cP}{\mathcal{P}}
\newcommand{\cG}{\mathcal{G}}
\newcommand{\cS}{\mathcal{S}}
\newcommand{\bmu}{\overline{\mu}}
\newcommand{\mk}{m_k}
\newcommand{\bmk}{\overline{m}_k}
\newcommand{\bma}{\overline{m}_a}
\newcommand{\tma}{\widetilde{m}_{a,\bmu}}
\newcommand{\tmk}{\widetilde{m}_{k,\bmu}}
\newcommand{\tth}{^\text{th}}
\newcommand{\kth}{k^{\text{th}}}
\newcommand{\bell}{\overline{\ell}}
\newcommand{\bQ}{\overline{Q}}
\newcommand{\cR}{\mathcal{R}}
\title{Moment  Multicalibration for  Uncertainty Estimation}
\author[1]{Christopher Jung}
\author[2]{Changhwa Lee}
\author[3]{Mallesh M. Pai}
\author[1]{Aaron Roth}
\author[4]{Rakesh Vohra}
\affil[1]{University of Pennsylvania Department of Computer and Information Science}
\affil[2]{University of Pennsylvania Department of Economics}
\affil[3]{Rice University Department of Economics}
\affil[4]{University of Pennsylvania Department of Economics and Electrical and Systems Engineering}
\begin{document}
\maketitle

\begin{abstract}
    We show how to achieve the notion of  ``multicalibration''  from \citet{multicalibration} not just for means, but also for  variances and other higher moments. Informally, it means that we can find regression functions which, given a data point, can make point predictions not just for the expectation of its label, but for higher moments of its label distribution as well---and those predictions match the true distribution quantities when averaged not just over the population as a whole, but also when averaged over an enormous number of finely defined subgroups. It yields a principled way to estimate the uncertainty of predictions on many different subgroups---and to diagnose potential sources of unfairness in the predictive power of features across subgroups. As an application, we show that our moment estimates can be used to derive marginal prediction intervals that are simultaneously valid as averaged over all of the (sufficiently large) subgroups for which moment multicalibration has been obtained. 
\end{abstract}

\thispagestyle{empty} \setcounter{page}{0}
\clearpage

\section{Introduction}
Uncertainty estimation is fundamental to prediction and regression. Given a training set of labelled points $D \subseteq \cX \times [0,1]$ consisting of feature vectors $x \in \cX$ and labels $y \in [0,1]$, the standard regression problem is to find a function $\bmu:\cX\rightarrow [0,1]$ that delivers a good point estimate of $\mu(x) = \E[y | x]$. We also desire the \emph{variance} of the label distribution $\E[(y - \mu(x))^2|x]$ as a measure of the inherent uncertainty of a prediction. Higher central moments would yield even more information about this uncertainty which can be represented by \emph{prediction intervals}: An interval $[\ell(x), u(x)]$ that with high probability contains $y$, i.e., $\Pr_y[y \in [\ell(x), u(x)] | x] \geq 1-\delta$ for some $\delta \in (0,1)$. 

If the data are generated according to a parametric model as in the classic ordinary least squares setting, one can form confidence regions around the underlying model parameters, and translate these into both mean and uncertainty estimates about individual predictions. In non-parametric settings it is unclear how one should reason  about uncertainty. We typically observe each feature vector $x$ infrequently, and so we have essentially no information about the true distribution on $y$ conditional on $x$. One solution to this problem is to compute \emph{marginal} prediction intervals which  average over \emph{data points} $x$ to give guarantees of the form: $\Pr_{x,y}[y \in [\ell(x), u(x)] ] \geq 1-\delta$. This is the approach that is taken in the  \emph{conformal prediction} literature --- see e.g. \cite{conformal}.

Marginal prediction intervals, unlike prediction intervals, do {\em not} condition on $x$. They offer a promise not over the randomness of the label conditional on the features, but over an average over data points. To make the distinction vivid, imagine one is a patient with high blood pressure, and a statistical model asserts that  a certain drug will lower one's diastolic blood pressure  to between 70 and 80 mm Hg. If $[70, 80]$ were a 95\% prediction interval \emph{conditional on all of one's observable features}, then one could reason that over the unrealized randomness in the world, there is a 95\% chance that one's new blood pressure will lie in $[70,80]$. If $[70,80]$
is a 95\% marginal prediction interval, however, it means that  \emph{95\% of all patients who take the drug} will see their blood pressure decline to a level contained within the interval. Because the average is taken over a large, heterogeneous collection of people, the guarantee of the marginal prediction interval offers no meaningful promise to individuals. For example,
it is possible that patients that share one's demographic characteristics (e.g. women of Sephardic Jewish descent with a family history of diabetes) will tend to see their blood pressure elevated by the drug. 

This fundamental problem with uncertainty estimation in non-parametric settings is also a problem for mean estimation: what does it mean that a point prediction $\bmu(x)$ is an estimate of $\E[y | x]$ if we have no  knowledge of the distribution on $y$ conditional on $x$ (because we have observed no samples from this distribution)? A standard performance measure is \emph{calibration}  \citep{dawid1982well}, which similarly averages over data points: a predictor $\bmu$ is calibrated (roughly) if $\E_{(x,y)}[\bmu(x) - y | \bmu(x) = i] = 0$ for all predictions $i$: i.e. for every $i$,  conditioned on $x$ being such that the prediction $\bmu(x)$ was (close to) $i$, the expected outcome $y$ is also (close to) $i$. Just as with marginal prediction intervals, guarantees of calibration mean little to individuals, who differ substantially from the majority of people over whom the average is taken. 

\cite{multicalibration} proposed \emph{multicalibration}  as a way to interpolate between the (unattainable) ideal of being able to correctly predict $\E[y | x]$ for each $x$ and offering a guaranteed averaged over the entire data distribution. The  idea is to fix a large, structured set of (possibly overlapping) sub-populations ($\cG \in 2^X$). A predictor $\bmu$ is  multicalibrated if, informally, for all predictions $i$ and groups $G \in \cG$, $\E_{(x,y)}[\bmu(x) - y | \bmu(x) = i, x\in G] = 0$. Thus, $\bmu$ is calibrated not just on the overall population, but also simultaneously on many different finely defined sub-populations that one might care about (e.g. different demographic groups). \citet{multicalibration} show how to compute an approximately multicalibrated predictor $\bmu$ on all subgroups in $\cG$ that have substantial probability mass---we provide a high level description of their algorithm, which we use, below.

The main contribution of this paper is to show how to achieve what can loosely be termed multicalibration for higher moment estimates. We provide not just estimates $\bmu(x)$  of means ($\mu(x) = \mathbb{E}[y|x]$), but also estimates, $\bmk(x)$, for higher central moments, ($\mk(x) = \E[(y-\mu(x))^k|x]$) such that all of these forecasts are appropriately multicalibrated in a sense made precise below. This is useful for a number of basic tasks. One we briefly highlight is that it can help diagnose data iniquities: for example, if the set of collected features is much less predictive of the target label on certain demographic groups $G \in \cG$ this will necessarily manifest itself in multicalibrated moment predictions by having higher variance predictions on individual members of those populations. 

As an important application, we show that standard concentration inequalities which could be applied using the true moments of a distribution to obtain prediction intervals can also be applied using our  multicalibrated moment estimates. Doing so  produces intervals $[\ell(x), u(x)]$  for each data point that are \emph{simultaneously} valid marginal prediction intervals not just overall, but also conditioned on $x$ lying in any of the (sufficiently large) subgroups over which we are multicalibrated.  This allows one to interpret these prediction intervals as predicting something meaningful not just an average over all people, but --- simultaneously --- as averages over all of the people who were given the same prediction, across many finely defined subgroups (like women of Sephardic Jewish descent with a family history of diabetes). Note that because the groups $G \in \cG$ may overlap, a single individual can belong to many such groups, and can at her option interpret the prediction interval as averaging over any of them. 

\subsection{Overview of Our Approach and Results}

\subsubsection{Mean Multicalibration and Impediments to Extensions to Higher Moments}

We first review the algorithm of \citet{multicalibration}, recast in the framework in which we will conduct our analysis. We here elide some issues such as how we deal with discretization and how calibration error is parameterized --- see Section \ref{sec:prelims} for the formal model and definitions. Fix a feature space $\cX$, labels $\cY = [0,1]$, and an unknown distribution $\cP$ over $\cX \times \cY$. Given are sets $\cG \subseteq 2^{\cX}$, corresponding to sub-populations of interest. The goal is to construct a predictor, $\bmu:\cX\rightarrow \cY$, that is multicalibrated, i.e. calibrated on each group $G \in \cG$. This means that we want a predictor, $\bmu$, that is \emph{mean-consistent} on every set of the form $G(\bmu,i) = \{ x \in G : \bmu(x)=i\}$ for some $i$: in other words, for every such set $G(\bmu,i)$ we want $\E_{(x,y) \sim \cP}[\bmu(x) - y] | x \in G(\bmu,i)] = 0$. We describe the algorithm as if it has direct access to the true distribution $\cP$, and defer for now a description of how to implement the algorithm using a finite sample. 

It is helpful to conceive of the task as a zero-sum game between two players: a ``(mean) consistency player'', and an ``audit player'' who knows the true distribution $\cP$.  The consistency player chooses a predictor  $\bmu$, and the audit player, given a predictor, attempts to identify a subset $S$ of $\cX$ on which the predictor is not mean consistent.%
\footnote{Here, and in what follows, we adopt the convention that $G$ refers to a group in $\cG$, while $S$ refers to any generic subset of $\cX$.}
Given a pair of choices, the corresponding cost (which the consistency player wishes to minimize and the audit player wishes to maximize) is the absolute value difference between the average prediction of the consistency player and the average expected label on the subset $S$ identified by the audit player. The value of this game is  $0$, since the consistency player can obtain perfect consistency using the true conditional label distribution $\bmu(x) = \mathbb{E}[y|x]$. The algorithm of \citet{multicalibration} can be interpreted as solving this zero sum game by simulating repeated play, using online gradient descent for the consistency player, and ``best response'' for an audit player, who stops play if there are no remaining sets $S = G(\bmu,i)$ witnessing violations of multicalibration. This works because by linearity of expectation, we can formulate the game so that the consistency player's utility function is \emph{linear} in her individual predictions $\bmu(x)$.  A formal description and proof of correctness can be found in Section \ref{subsec:mean-calibration}.

There are two---related---impediments to extending this approach to higher moments, i.e., finding predictors $\bmk (x) \approx \mk(x) =  \E[(y-\E[y|x])^k|x]$, that are ``consistent'' with $\cP$ on many sets. The first of these is definitional---what do we mean by ``consistent'' for higher moments? The second is algorithmic---given a definition, how do we achieve it? Both are impediments because, unlike means, higher moments are not linear functionals of the distribution.  A consequence is that moments for $k > 1$ do not combine linearly in the way expectations do. In particular for $S= S_1 \cup S_2$ where $S_1$ and $S_2$ are disjoint, $\E[(y-\E[y|x\in S])^k|S] \neq \Pr (x \in S_1|S)\E[(y-\E[y|x \in S_1])^k|S_1]  + \Pr (x \in S_2|S)\E[(y-\E[y|x \in S_2])^k|S_2]$. 
It is therefore silly to require that moment predictions $\bmk(x)$ satisfy the same ``average consistency'' condition asked of means: i.e. we cannot demand that the population variance on the subset of the population on which we predict variance $v$ be $v$, because this is not a property that the true moments $\mk(x)$ satisfy. Consider, for example, a setting in which there are two types of points, $x_1$ and $x_2$. The true distribution is uniform over $\{(x_1,0), (x_2, 1)\}$ (and so in particular the label $y$ is deterministically fixed by the features). We  have that for all $k > 1$, $\mu(x_1) = 0, \mu(x_2) = 1$, and  $\mk(x_0) = \mk(x_1) = 0$. Nevertheless, the variance over the set of points on which the  true distribution satisfies $\mk(x) = 0$ is $1/4$, not $0$. We cannot ask that our ``moment calibrated'' predictors satisfy properties violated by the true distribution, because we would have no guarantee of feasibility --- and our ultimate goal in multicalibration is to find a set of mean and moment predictors that are indistinguishable from the true distribution with respect to some class of tests.

\subsubsection{Mean Conditioned Moment Multicalibration and Marginal Prediction Intervals}

A key observation (Observation \ref{observation:mixture}) is that higher moments \emph{do} linearize over sets that have the same mean: in other words, if we have $S = S_1 \cup S_2$ for disjoint $S_1$ and $S_2$ such that $\E[y | x \in S_1] = \E[y | x \in S_2]$, then, it follows that $\E[(y-\E[y|x\in S])^k|S] = \Pr (x \in S_1|S)\E[(y-\E[y|x \in S_1])^k|S_1]  + \Pr (x \in S_2|S)\E[(y-\E[y|x \in S_2])^k|S_2]$. An implication of this is that the true distribution does satisfy what we term \emph{mean-conditioned moment multi- calibration}. Namely, if for a fixed $k > 1$ we define for each set $G \in \cG$ and each pair of mean and $k\tth$ moment values $i,j$ the sets: $G(\mu,\mk,i,j) = \{x \in G : \mu(x) = i, \mk(x) = j\}$, then we  have \emph{both} mean consistency: $\E[(y - i) | x \in G(\mu,\mk,i,j)] = 0$ \emph{and} moment consistency:  $\E[(y-i)^k - j | x \in G(\mu,\mk,i,j)] = 0$ over these sets. Therefore, we require the same condition to hold for our mean and moment  predictors $\bmu$ and $\{\bma\}_{a=1}^k$: namely that simultaneously for every $a$, that over each of the sets $G(\bmu,\bma,i,j)$, the true label mean should be $i$ and the true label $a$-th moment should be $j$. In other words, if we have a set of predictors that are mean conditioned moment multicalibrated, then an individual who receives a particular mean and (e.g.) variance prediction can be assured that amongst all the people who received the same mean and variance prediction \emph{even averaged over any of the possibly large number of sub-groups $G$ of which the individual is a member}, the true mean and variance are faithful to the prediction.  

Section \ref{sec:intervals} demonstrates a key application of mean-conditioned moment-multicalibrated estimators: They can be used in place of real distributional moments to derive prediction intervals. Given moments of a random variable $X$, a standard way to derive concentration inequalities for $X$ is by using the following inequality for any even moment (for $k = 2$ this is Cheybychev's inequality): 
\[
    \Pr[|X - \mu(X)| \geq t] \leq \frac{\E\left[(X - \mu(X) )^k\right]}{t^k}.
\]
 If $X$ is the label distribution conditional on features $x$, this yields the prediction interval: \[\Pr_y\left[y \in \left[\mu(x)-\left(\frac{\mk(x)}{\delta}\right)^{1/k},\mu(x)+\left(\frac{\mk(x)}{\delta}\right)^{1/k}\right] \middle\vert x\right] \geq 1-\delta.\]
In Section \ref{sec:intervals}  we show that if we have a mean-conditioned moment-multicalibrated pair $(\bmu,\bmk)$, we can replace the true mean and moments in the derivation of this prediction interval, and get  \emph{marginal} prediction intervals, which are valid not just averaged over all points, but simultaneously as averaged over all point that received the same prediction within any of the groups within which we are mean-conditioned moment multicalibrated. In other words, for all $G \in \cG$ and for all $i,j$:
$$\Pr_{(x,y) \sim P}\left[y \in \left[\bmu(x)-\left(\frac{\bmk(x)}{\delta}\right)^{1/k},\bmu(x)+\left(\frac{\bmk(x)}{\delta}\right)^{1/k}\right] \middle\vert x \in G(\bmu,\bmk,i,j)\right] \geq 1-\delta.$$

\subsubsection{Achieving Mean Conditioned Moment Multicalibration}
What is the difficulty with finding sets of predictors $(\bmu,\{\bma\}_{a=2}^k)$ such that simultaneously  each pair $(\bmu,\bma)$ are mean-conditioned moment multicalibrated? It is that moments do not have the linear structure that means do. Hence, the zero-sum game formulation we describe for mean-multicalibration cannot be applied directly. A na\"ive approach (which fails, but which will be a useful sub-routine for us) is to first train a mean-multicalibrated predictor $\bmu$, and then define ``pseudo-moment'' labels for each $x$ as $\tmk(x) = (y-\bmu(x))^k$. Since these are constant values,  we can then use the algorithm for mean multicalibration to achieve ``pseudo-moment calibration with respect to $\bmu$'' --- i.e. mean consistency on each set $G(\bmu,\bmk,i,j)$ with respect to our pseudo-moment labels $\tmk(x)$. By itself this doesn't guarantee any sort of ``moment consistency,'' but we show in Section \ref{subsec:pseudo-moment-calibration} that if we can:
\begin{enumerate}
\item Find moment predictors $\bmk$ that satisfy pseudo-moment calibration with respect to $\bmu$, \emph{and}
\item Our mean predictor $\bmu$ satisfies mean consistency on every set of the form $G(\bmu,\bmk,i,j)$,
\end{enumerate}
then, the pair $(\bmu,\bmk)$ will satisfy mean-conditioned moment calibration. 

The difficulty is that these two requirements are circularly defined. Once we have a \emph{fixed} mean predictor $\bmu$, we can use a gradient descent procedure to find moment predictors $\{\bma\}_{a=2}^k$ that are pseudo-calibrated with respect to $\bmu$. However, we also require our mean predictor to be mean consistent on the sets  $G(\bmu,\bma,i,j)$, which are undefined until we fix our moment predictors $\{\bma\}_{a=2}^k$. Section \ref{subsec:mean-conditioned-moment-calibration} resolves the circularity by using an alternating descent procedure that toggles between updating $\bmu$ and $\{\bma\}_{a=2}^k$, each aiming for a mean calibration target that is defined with respect to the other. We prove that this alternating gradient descent procedure is guaranteed to converge after only a small number of rounds. 

Finally, we show in Section \ref{sec:finite} how to implement our algorithm using a finite sample from the distribution and furnish sample complexity bounds, in a way analogous to \citet{multicalibration}. The sample complexity bounds are logarithmic in the number of groups $|\cG|$ that we wish to be multicalibrated with respect to, and polynomial in our desired calibration error parameters and the number of moments $k$  with which we wish to achieve mean-conditioned moment multicalibrated predictors. In particular, because dependence on $|G|$ is only logarithmic, we can satisfy mean-conditioned moment-multicalibration on an exponentially large collection of intersecting sets $\cG$ from just a polynomial sample of data from the unknown population distribution. Note, however, that despite our polynomial dependence on $k$, the natural scale of the $k$'th moment decreases exponentially in $k$, and so to obtain non-trivial approximation guarantees for $k$'th moments with polynomial sample complexity, we should think of taking $k$ at most logarithmic in the relevant parameters of the problem. See Theorem \ref{thm:finite-wrapper} and Corollary \ref{cor:finite-wrapper} for details. Our running time  scales polynomially with our approximation error parameters, the number of moments $k$ with which we wish to be multicalibrated, and the running time of solving learning problems over $\cG$ (which is at most linear in $|\cG|$, but can be much faster). See Theorems \ref{thm:finite-wrapper} and \ref{thm:agnostic} for details.  In other words, our algorithms are ``oracle efficient'' in the sense that if we have a subroutine for solving learning problems over $\cG$, then we can use it to solve mean-conditioned moment-multicalibration problems with at most polynomial overhead. In theory, for almost every interesting class $\cG$, learning over $\cG$ is hard in the worst case --- but oracle efficiency has proven to be a useful paradigm in the design of learning algorithms (especially in the fairness in machine learning literature --- see e.g. \citep{reductions,gerrymandering,multicalibration,multiaccuracy}) because in practice we have extremely powerful heuristics for solving complex learning problems. Moreover, this kind of oracle efficiency is the best running time guarantee that we can hope for, because as shown by \citet{multicalibration}, even mean-multicalibration is as hard as solving arbitrary learning problems over $\cG$ in the worst case.

\subsection{Additional Related Work}\label{sec:related}

 \emph{Calibration} as a means of evaluating  forecasts of expectations dates back to \citet{dawid1982well}. This literature focuses on a simple online forecasting setting, motivated by weather prediction problems: in a sequence of rounds, nature chooses the probability of some binary event (e.g. rain), and a forecaster predicts a probability of that event. \citet{dawid1982well} shows that a Bayesian forecaster will always be subjectively calibrated (i.e. he will believe himself to be calibrated). \citet{FV98} show that there exist  \emph{randomized} forecasters that can asymptotically satisfy calibration against arbitrary sequences of outcomes (this is impossible for deterministic forecasters \citep{oakes1985self}). These papers focus on the online setting, because simple calibration is trivial in a batch/distributional setting: simply predicting the mean outcome on every point satisfies calibration. Within this literature, the most related works are \citet{lehrer2001any} and  \citet{sandroni2003calibration}, which give very general asymptotic results that are able to achieve (mean) multicalibration as a special case. \citet{lehrer2001any}, operating in the sequential online setting, asks for calibration to hold not just on the entire sequence of realized outcomes, but on countably many infinite \emph{sub-sequences} (e.g. the set of all computable subsequences). He proves that there exists an online forecasting algorithm which can asymptotically achieve this. \citet{sandroni2003calibration} extend this result to subsequences which can be defined in terms of the forecasters predictions as well.  Both of these papers operate in a setting that is general enough to encode the constraint of mean  multicalibration (by encoding the features of datapoints in the ``state space'') even in an online, adversarial setting --- albeit not in a computationally or sample efficient way. In contrast, \citet{multicalibration}, who define the notion of mean multicalibration, give an algorithm for achieving it in a batch distributional setting ---  in a much more computationally and sample efficient manner than could have been achieved by applying the machinery of \cite{lehrer2001any,sandroni2003calibration}.  Recently, \citet{individualcalibration} gave a notion of ``individual level'' (mean) calibration, defined over the randomness of the forecaster, that is valid conditional on individual data points (i.e. without needing to average over a population). They provide promising empirical results, but the theoretical guarantees of predictors satisfying this notion do not provide non-trivial information about a data distribution, because (as the authors note) their notion of individual calibration can be satisfied without observing any data.  

\citet{multicalibration} also proposed the notion of ``multi-accuracy,'' a weaker notion than multicalibration which asks for a predictor $\bmu$ that satisfies mean consistency on each set $G \in \cG$, but not on sets $G(\bmu,i)$. \citet{multiaccuracy} gave a practical algorithm for achieving multi-accuracy, and a promising set of experiments suggesting that it could be used to correct for error disparities between different demographic groups on realistic data sets, without sacrificing overall accuracy. \citet{dwork2019learning} propose notions of fairness and evidence consistency for ranking individuals by their ``probability of success'' when historical data only records binary outcomes: they show that their proposed notions  are closely related to multicalibration of the probability predictions implicitly underlying the rankings. \citet{multiUC} prove uniform convergence bounds for multicalibration error over hypothesis classes of bounded complexity. In our paper, as in \citet{multicalibration}, we learn over hypothesis classes that are only implicitly defined by the set of groups $\cG$, and so we bound generalization error in the same manner that \cite{multicalibration} do, rather than using uniform convergence arguments. 

Conformal prediction is similarly motivated to calibration, but is focused on finding marginal prediction intervals rather than mean estimates: see e.g. \citet{conformal} for an overview of this literature. Finding marginal prediction intervals on its own (i.e. when prediction intervals only have to be valid on average over the entire population) is easy in the batch/distributional setting, and so just as with the calibration literature, the conformal prediction literature is primarily focused on the online setting in which predictions must be made as points arrive. The most closely related paper related to this literature is  \citet{barber2019limits} who also study the batch distributional setting, and also aim to find marginal prediction intervals which hold not just over the entire population, but on a collection $\cG$ of more finely defined sub-populations. \citet{barber2019limits} obtain prediction intervals of this sort by using a holdout set method from conformal prediction: roughly speaking, they compute empirical $1-\delta$ coverage intervals on each set $G \in \cG$ in the holdout set, and then for an individual $x$, select the \emph{widest} such interval amongst all groups $G$ that contain $x$, which is a very conservative choice.  The algorithm given by \citet{barber2019limits}  relies on explicit enumeration of groups $G \in \cG$ over the holdout set.

There are also several  papers  in the ``fairness in machine learning'' literature (in addition to \cite{multicalibration,multiaccuracy}), which are similarly motivated by replacing coarse statistical constraints with constraints that come closer to offering individual guarantees: see \citet{fairsurvey} for a survey. \citet{gerrymandering,subgroup} propose to learn classifiers which equalize statistical measures of harm like false positive or negative rates across a very large number of demographic subgroups $G \in \cG$, and give practical algorithms for this problem by solving a zero-sum game formulation using techniques from no-regret learning. \citet{cbawareness} give algorithms for satisfying a notion of metric fairness which similarly enforces constraints averaged over a large number of subgroups $G \in \cG$. \citet{RY18}  define a PAC-like version of the individual fairness notion of \citet{awareness} and prove generalization bounds showing how to achieve their notion out of sample on all sufficiently large groups of individuals.  \citet{aif} show how to equalize statistical measures of harm like false positive rates across \emph{individuals} --- when the rates in question are defined over the randomness of the problem distribution and the classifier.  \citet{fairbandits,fairbandits2} propose an individual-level notion of ``weakly meritocratic fairness'' that can be satisfied in bandit learning settings whenever it is possible to compute confidence or prediction intervals around individual labels. They analyze the parametric setting, when actual (conditional) prediction and confidence intervals are possible --- but the techniques from our paper could be used for learning in the assumption-free setting  (with a slightly weaker notion of fairness) using marginal prediction intervals.

\section{Preliminaries}
\label{sec:prelims}
Let $\cX$ be the domain of features, $\cY = [0,1]$ the label domain, and $\cP$ the true (unknown) probability distribution over $\cX \times \cY$.\footnote{Our approach applies for both finite and infinite feature domains. If $\cX$ is uncountably infinite, define an associated measure space, and $\cP$ is a countably additive probability measure on this space. We omit the associated notation since it will have no use in what follows.} Let $\cP_\cX$ refer to the induced marginal distribution on $\cX$ and define $\cP_\cY$ analogously.  Going forward, we refer to the associated random variables with capital letters (e.g. $X$, $Y$), and realizations with lowercase letters ($x$, $y$). 

Let $\cG \subseteq 2^\cX$ be a collection of subsets of $\cX$,\footnote{If $\cX$ is uncountably infinite, then $\cG$ is a collection of measurable, computable sets. We abuse notation and write $2^\cX$ to denote this.} and for each $G \in \cG$, let $\chi_G$ denote that associated indicator function, i.e. $\chi_G(x) = 1 \iff x \in G$. For implementation purposes, we assume that each indicator function $\chi_G(x)$ can be computed by a polynomially sized circuit.
\footnote{Our algorithm in the end will need to manipulate these indicator functions. We might imagine e.g. that $\cG$ is the hypothesis class of some learning algorithm for a binary prediction problem, and that the functions $\chi_G(x)$ are particular hypotheses from this class --- e.g. linear threshold functions.}

\begin{definition}\label{def:moments}
Given the true distribution $\cP$, we write
\[
\mu = \E_{\cP}[y],
\]
and its $k$th central moment is:
\[\mk = \E_{\cP}\left[(y-\mu )^k\right]. \]
Given a set $S \subseteq \cX$, we abuse notation and write 
\[
\mu(S) = \E_{\cP}[y|x \in S] \quad\text{and}\quad\mk(S) = \E_{\cP}\left[\left( y-\mu(S)\right)^k| x \in S \right]
\]
for the conditional mean and $k^{\text{th}}$ central moment of labels on the distribution conditional on $x \in S$. 
\end{definition} 

We are given $n$ independent draws from $\cX \times \cY$ according to distribution $\cP$, denoted  $D=\{(x_b, y_b)\}_{b=1}^n$. The goal is to predict means and higher moments of $\cY| \cX$, i.e. to construct functions $\bmu: \cX \to [0,1],$ (we shall refer to this as a mean predictor) and $\bmk:\cX \to [0,1]$ (analogously, $k\tth$-moment predictor)---as $\cY$ is the unit interval, means and moments also lie in the unit interval. 

To define calibration, we need to reason about all points that receive a particular prediction. For real valued predictors, this can be a measure zero set. One solution is to to restrict attention to predictors that are discretized to lie on the grid $G_m = \{\frac{1}{2m}, \frac{3}{2m} ,\ldots,\frac{2m-1}{2m}\}$, for some (large) number $m$. If one were to do this, the discretization parameter $m$ would be coupled to the error one could ultimately obtain: since it may be inevitable to suffer error at least $1/2m$ if one is restricted to making predictions on a discrete grid. Alternately, one can define calibration by ``bucketing'' real valued predictions into $m$ buckets of width $\frac1m$ each. This allows us to treat $m$ (a parameter controlling the fineness of our calibration constraint) as an orthogonal parameter to our calibration error. To that end, given a set $S\subseteq \cX$, mean  predictor $\bmu$, and some $i \in [m]$, define \[S(\bmu, i) \equiv \left\{x \in S: \left\vert \bmu(x) - \frac{2i-1}{2m}\right\vert \le \frac{1}{2m} \right\}\]
to be the set of points in $S$ whose mean predictions fall into the $i\tth$ bucket, i.e. $[ \frac{2i-1}{2m}- \tfrac{1}{2m}, \frac{2i-1}{2m}+ \tfrac{1}{2m}]$.
Analogously, define \[S(\bmu, \bmk, i,j) \equiv \left\{x \in S: \left\vert \bmu(x) - \frac{2i-1}{2m}\right\vert \le \frac{1}{2m} , \left\vert \bmk(x) - \frac{2j-1}{2m}\right\vert \le \frac{1}{2m} \right\}\] to be the set of points in $S$ that receive mean predictions in the $i\tth$ bucket  \emph{and} $\kth$ moment predictions in the $j\tth$ bucket. Given mean and $\kth$ moment predictors $\bmu$ and $\bmk$, and any set $S \subseteq \cX$ we write 
\[
\bmu(S) = \E_{\cP }[\bmu(x)|x \in S] \quad\text{and}\quad \bmk(S) = \E_{\cP }\left[\bmk(x)|x \in S\right],
\]
i.e. $\bmu(S)$ is the average mean prediction of $\bmu$ when $x$'s are drawn according to the true distribution, $\cP $, conditional on $x \in S$, and $\bmk(S)$ is the analogous quantity for $k$'th moment predictions. 

To be clear, we will maintain the convention for means and higher moments that quantities  with bars refer to predictions ($\bmu, \bmk$) and unmodified notation ($\mu, \mk$) refer to true (unknown) population values.

\begin{definition}[Consistency]
Call a mean predictor $\bmu$  $(\alpha, \epsilon)$-mean consistent on a set $S$ if 
\[
\left\vert \mu\left(S\right) - \bmu\left(S\right)  \right\vert \le \frac{\alpha }{\cP_\cX( S)} + \epsilon.
\]

Similarly, a moment predictor $\bmk$ is called $(\alpha, \epsilon)$-moment consistent on a set $S$ if: 
\[
\left\vert \mk\left(S\right) - \bmk\left(S\right)  \right\vert \le \frac{\alpha }{\cP_\cX( S)} + \epsilon.
\]

When $\epsilon=0$, we say $\bmu$ is $\alpha$-mean consistent and $\bmk$ is $\alpha$-moment consistent. Note that $(\alpha, \epsilon)$-mean consistency implies $(\alpha + \epsilon)$-mean consistency.
\end{definition}

\begin{remark}
Our notion of consistency on a set $S$ corresponds to  error that smoothly degrades with the size (measure) of the set $S$. This is essential to giving out of sample guarantees. \citet{multicalibration} handles this slightly differently, by giving uniform guarantees, but only for sets that have  measure at least $\gamma$.  Our approach of giving smoothly parameterized  error guarantees for all sets is only stronger (up to a reparameterization of $\alpha \leftarrow \alpha\gamma$), and makes the analysis of our algorithms  more transparent because it corresponds more directly to the guarantees they achieve.
\end{remark}

The following simple observation will be useful in understanding our approach. 
\begin{obs}
\label{observation:mixture} 
Let $\cP$ be a mixture distribution over $m$ component distributions $\cP_\ell$ with mixture weights $w_\ell\geq 0$, $\sum_{\ell=1}^m w_\ell = 1$. Let $\mu_\ell, {\mk}_\ell$ be the mean and $k^{\text{th}}$ moment associated with $\cP_\ell$.  Then:
\begin{align*}
&\mk =\sum_{\ell=1}^m w_\ell \left(\sum_{a=0}^k {k \choose a} \left(\mu_\ell - \mu\right)^{k-a}   m_{a \ell} \right).
\intertext{If the mixture variables have the same mean, i.e. $\mu_\ell = \mu$ for all $\ell$, then, the above expression reduces to:}    
&\mk =\sum_{\ell=1}^m w_\ell {\mk}_\ell.
\end{align*}
\end{obs}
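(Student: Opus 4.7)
The plan is to compute $m_k = \mathbb{E}_{\cP}[(Y-\mu)^k]$ directly by first decomposing the expectation across mixture components, then expanding within each component via the binomial theorem around that component's own mean $\mu_\ell$. This is a routine two-step calculation; the only slightly delicate point is keeping track of which mean ($\mu$ versus $\mu_\ell$) the central moment is taken around.

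First, I would use the fact that for any integrable function $f$, a draw $Y$ from the mixture satisfies $\mathbb{E}_{\cP}[f(Y)] = \sum_{\ell=1}^m w_\ell \, \mathbb{E}_{\cP_\ell}[f(Y)]$, applied to $f(y) = (y-\mu)^k$. This yields
\[
m_k = \sum_{\ell=1}^m w_\ell \, \mathbb{E}_{\cP_\ell}\!\left[(Y-\mu)^k\right].
\]
Second, within each component I rewrite $Y - \mu = (Y - \mu_\ell) + (\mu_\ell - \mu)$ and expand with the binomial theorem. Because the shift $(\mu_\ell - \mu)$ is a constant with respect to $Y \sim \cP_\ell$, its powers come out of the inner expectation, and $\mathbb{E}_{\cP_\ell}[(Y - \mu_\ell)^a] = m_{a\ell}$ by the definition of the $a$-th central moment of $\cP_\ell$. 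Substituting gives precisely
\[
\mathbb{E}_{\cP_\ell}\!\left[(Y-\mu)^k\right] = \sum_{a=0}^k \binom{k}{a}(\mu_\ell - \mu)^{k-a}\, m_{a\ell},
\]
which, once plugged back into the outer sum over $\ell$, is the first claimed identity.

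For the second statement, I would simply observe that if $\mu_\ell = \mu$ for every $\ell$, then $(\mu_\ell - \mu)^{k-a} = 0$ whenever $k - a \geq 1$, and equals $1$ when $a = k$ (using the convention $0^0 = 1$). Hence every inner sum collapses to its single $a = k$ term, $\binom{k}{k} m_{k\ell} = m_{k\ell}$, yielding $m_k = \sum_{\ell=1}^m w_\ell\, m_{k\ell}$. The main ``obstacle'' is really just bookkeeping: making sure the central moment of the component is taken around $\mu_\ell$ (not $\mu$) before the binomial step, so that the $m_{a\ell}$ on the right-hand side matches the definition given in the statement.
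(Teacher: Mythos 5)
Your proof is correct and follows the natural (essentially the only) route: decompose the mixture expectation linearly across components, then apply the binomial theorem within each component about its own mean $\mu_\ell$ so that the resulting expectations are exactly the component central moments $m_{a\ell}$. The paper states this observation without proof, and your argument supplies precisely the intended calculation, including the correct collapse to the single $a=k$ term when all component means coincide.
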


Observation \ref{observation:mixture} highlights the key challenge: unlike means, higher moments combine non-linearly over mixtures. That is to say, that although $\bmk(S)$ is defined to be an average over the values $\bmk(x)$ for $x \in S$, $\mk(S)$ is \emph{not} an average over the values $\mk(x)$ for $x \in S$ for $k > 1$. Observation \ref{observation:mixture} also makes clear what we are trying to exploit in defining mean-\emph{conditioned} moment calibration: $\mk(S)$ \emph{is} an average over the values $\mk(x)$ for $x \in S$ whenever $\mu(x)$ is constant over $S$. 

We are now ready to define calibration, which asks for mean and moment consistency on particular sets defined by the mean and moment predictors themselves:
\begin{definition}[Calibration]\label{def:calibrated} Fix a set $S \subseteq \cX$ and a true distribution $\cP$.
\begin{enumerate}
\item A mean predictor $\bmu$ is $(\alpha,\epsilon)$-mean calibrated on a set $S$ if it is $(\alpha, \epsilon)$-mean consistent on every set $S(\bmu,i)$, i.e. if  for each $i \in [m]$:
\[
\left\vert \mu\left(S(\bmu, i)\right) - \bmu(S(\bmu, i)) \right\vert \le \frac{\alpha }{\cP_\cX( S(\bmu, i))} + \epsilon.
\]
Again, if $\epsilon=0$, we say $\bmu$ is $\alpha$-mean calibrated.
\item Predictors $(\bmu, \bmk)$ are $(\alpha, \beta, \epsilon)$-mean-conditioned-moment calibrated on a set $S$ if they are $(\alpha, \epsilon)$-mean and $(\beta, \epsilon)$-moment consistent on every set $S(\bmu, \bmk, i, j)$, i.e.  if for every $i,j \in [m]$:
\begin{align*}
&\left\vert \mu\left(S(\bmu, \bmk, i, j)\right) - \bmu\left(S(\bmu, \bmk, i, j)\right) \right\vert \le \frac{\alpha}{\cP_\cX( S(\bmu, \bmk, i, j))} + \epsilon,\\ 
\text{and }&\left\vert \mk\left(S(\bmu, \bmk, i, j)\right) - \bmk\left(S(\bmu, \bmk, i, j)\right) \right\vert \le \frac{\beta}{\cP_\cX( S(\bmu, \bmk, i, j))}+\epsilon.
\end{align*}
If $\epsilon=0$, we say $(\bmu, \bmk)$ are $(\alpha,\beta)$-mean-conditioned-moment calibrated.
\end{enumerate}
We say that $\bmu,\bmk$ are $(\alpha,\epsilon)$-multicalibrated and $(\alpha,\beta,\epsilon)$-mean-conditioned-moment multicalibrated with respect to (a collection of sets) $\cG$ if they are $(\alpha,\epsilon)$-mean calibrated and $(\alpha,\beta,\epsilon)$-mean conditioned moment calibrated respectively on every $G \in \cG$.
\end{definition}

\begin{remark}
Observe that by construction, the true feature conditional mean and moment functions $\mu(x),\mk(x)$ are mean-conditioned-moment multicalibrated on every collection of sets $G$. We can view the goal of multicalibration as coming up with mean and moment predictors $\bmu,\bmk$ that are almost indistinguishable from the true distributional means and moments, with respect to a class of consistency checks defined by $\cG$. Note that it is only because we have defined our goal as mean \emph{conditioned} moment calibration that the true moments $\mk(x)$ of the distribution satisfy these consistency conditions, which are defined as expectations. 
\end{remark}

We highlight the difference between calibration and consistency on a given set $S$ in terms of mean prediction $\bmu$; an analogous discussion applies to higher moments. Consistency requires that the  prediction $\bmu(x)$, averaged over $x$'s in $S$ according to the conditional distribution, approximately equals the true label average $\mu(S)$. It doesn't impose a similar requirement on subsets of $S$. Therefore, a predictor consistent on $S$ will be correct on average for the set $S$ but could be systematically biased for each prediction in $S$ that it makes. 

Calibration on $S$ requires, for every prediction $i \in [m]$, that $\bmu$ is consistent on the set $S(\bmu, i)$. That is to say it ensures consistency on every subset of $x$'s in $S$ on which the predictor $\bmu$ makes predictions in some some fixed bucket $i$. Exact calibration implies exact consistency, but the reverse is not true.

\section{Achieving Mean Conditioned Moment Multicalibration}\label{achieving}

\subsection{Mean Multicalibration}
\label{subsec:mean-calibration}
We summarize an algorithm to achieve mean multicalibration. It is a modest extension to the one in \citet{multicalibration} that accommodates arbitrary distributions over a possibly infinite domain and arbitrary initializations. We present it in somewhat greater generality than needed for mean-calibration, because our final algorithm in Section \ref{subsec:mean-conditioned-moment-calibration} needs to achieve mean consistency on more sets than are required for mean calibration alone.

For intuition, consider the following mini-max problem, which captures a more difficult problem than mean multicalibration (as there is no restriction at all on the sets $S$):
\begin{align*}
\min_{\bmu: \cX \to [0,1]}\max_{\substack{S \subseteq \cX,\\ \lambda \in \{-1,1\}}} \lambda \cdot \cP_{\cX}(S)\cdot \left( \bmu(S) - \mu(S)  \right).
\end{align*}

We can associate a zero-sum game with this mini-max problem by viewing the minimization player as a \emph{consistency}  player who must commit to a mean predictor $\bmu$, and viewing the maximization player as an \emph{auditor} who attempts to identify sets $S$ on which the consistency player fails to be mean consistent. Observe that the inclusion of the measure term $\cP_{\cX}(S)$ in the objective makes the learner's utility function linear in her individual predictions $\bmu(x)$. There is a strategy for the consistency player that would guarantee her a payoff of $0$---or in other words, would guarantee consistency on all possible sets $S$: she could simply set $\bmu(x) = \E[y | x]$. This establishes the value of the game, but of course it requires knowledge of $\cP$.  Given only a finite sample of the data, we will be unable to determine  $\E[y | x]$ for all $x$, and so this strategy is not implementable.

One way to solve our problem absent knowledge of the distribution is to allow the consistency player to play online gradient descent \citep{Zin03} on the set of mean predictors over rounds $t$, and to allow the auditor to ``best respond'' at every round, by exhibiting a set $S$ corresponding to a large consistency violation\footnote{Because the objective function of our game weights the consistency violations $\bmu(S)-\mu(S)$ by the measure of the set $\cP_{\cX}(S)$, these violations are linear functions of the individual predictions $\bmu(x)$. Thus it suffices to run gradient descent over the space of individual predictions $\cX$, rather than the space of all possible functions $\bmu:\cX\rightarrow [0,1]$.}. This is guaranteed to converge quickly to an approximate equilibrium of the game: i.e. a mean predictor satisfying approximate consistency on all sets. If the auditor limits herself to choosing sets $S(\bmu^t, i)$ corresponding to mean calibration, then we converge quickly to approximate mean calibration.  Here we give a direct analysis of a general gradient descent procedure of the sort we need, in terms of the sets that the auditor happens to choose during this interaction. For finite support distributions $\cP$, this bound could be derived directly from the regret bound of online projected gradient descent \citep{Zin03} or from the analysis of the similar algorithm  in  \citet{multicalibration}. We reproduce a direct analysis in the Appendix to match the theorem statement we want for distributions which may have infinite support. (Note that for such distributions the mean predictor will have to be maintained implicitly). In Algorithm \ref{alg:meantemplate}, after each gradient update, we project $\bmu^t$ back into the set of functions with range $[0,1]$ using an $\ell_2$ projection. Because squared $\ell_2$ distance is linearly separable, it can be accomplished by a simple coordinate-wise operation which we write as $\text{project}_{[0,1]}(x) = \min(\max(x, 0), 1)$.

\begin{algorithm}[H]
\SetAlgoLined
\begin{algorithmic}
\STATE Start with an arbitrary initial mean predictor $\bmu^1:\cX\rightarrow [0,1]$
\FOR{$t=1, \dots, T$}
	\STATE Audit player plays some $S^t \subseteq \cX, \lambda^t \in \{-1,1\}$
	    \STATE $\bmu^{t+1}(x) = \begin{cases}
	\text{project}_{[0,1]}\left(\bmu^t(x)  - \eta \lambda^t\right) & \text{if } x \in S^t,\\
	\bmu^t(x) & \text{otherwise.}
	\end{cases}$
\ENDFOR
\end{algorithmic}
\caption{Projected Gradient Descent$(\eta)$ for $\bmu$}
\label{alg:meantemplate}
\end{algorithm}
\bigskip

\begin{restatable}{lemma}{lemregret}
\label{lem:boundregret}
For any initial mean predictor $\bmu^1 \in \cX \to [0,1]$ and any sequence of $(S^t, \lambda^t)_{t=1}^T$, Algorithm \ref{alg:meantemplate} satisfies:
\[
\sum_{t=1}^T \lambda^t \cP_{\cX}(S^t) \left(\bmu^t(S^t) - \mu(S^t)\right) \le \frac{1}{2\eta} + \frac{\eta}{2} \sum_{t=1}^T \cP_{\cX}(S^t).
\]
\end{restatable}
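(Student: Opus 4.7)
The plan is to carry out a standard online projected gradient descent potential argument, viewing the benchmark comparator as the unknown true conditional mean function $\mu(x) = \E[y \mid x]$, which would achieve zero payoff against the audit player at every round.

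First I would introduce the potential
\[
\Phi^t \;=\; \tfrac{1}{2}\,\E_{x \sim \cP_\cX}\!\left[\left(\bmu^t(x) - \mu(x)\right)^2\right],
\]
and track how $\Phi^t$ changes in one step. For $x \notin S^t$ the integrand is unchanged. For $x \in S^t$, two facts let me handle the projection: (i) $\mu(x) \in [0,1]$, so $\mu(x)$ is a fixed point of $\text{project}_{[0,1]}$; and (ii) the projection onto the convex interval $[0,1]$ is $1$-Lipschitz (non-expansive). Hence
\[
\bigl(\bmu^{t+1}(x) - \mu(x)\bigr)^2 \;\le\; \bigl(\bmu^t(x) - \eta\lambda^t - \mu(x)\bigr)^2.
\]
Expanding the right-hand side and using $(\lambda^t)^2 = 1$ gives a linear-in-$(\bmu^t(x)-\mu(x))$ term plus an $\eta^2$ term.

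Next I would integrate over $x$ (against $\cP_\cX$) and recognize the linear term: since the update happens only on $S^t$, we get
\[
\Phi^{t+1} - \Phi^t \;\le\; -\eta\, \lambda^t\, \E_x\!\bigl[\chi_{S^t}(x)\bigl(\bmu^t(x)-\mu(x)\bigr)\bigr] \;+\; \tfrac{\eta^2}{2}\, \cP_\cX(S^t),
\]
and by the definitions of $\bmu(S^t)$ and $\mu(S^t)$ the expectation equals $\cP_\cX(S^t)\bigl(\bmu^t(S^t)-\mu(S^t)\bigr)$. So each round's single-round audit payoff is lower-bounded by a telescoping quantity.

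Finally I would telescope from $t=1$ to $t=T$, use $\Phi^{T+1} \ge 0$ to drop it, and bound $\Phi^1 \le \tfrac{1}{2}$ from $(\bmu^1(x) - \mu(x))^2 \le 1$ since both lie in $[0,1]$. Dividing through by $\eta$ yields exactly the stated bound. The only nontrivial ingredient is the non-expansiveness step, which works cleanly only because $\mu(x) \in [0,1]$; the rest is bookkeeping. I expect the main subtlety (when $\cX$ is infinite and $\cP_\cX$ is arbitrary) to be purely notational: the ``gradient step'' is defined pointwise and $\Phi^t$ is an $L^2(\cP_\cX)$ quantity, so everything should be phrased as an expectation rather than as a sum, but no measure-theoretic obstacle arises because the inequality holds pointwise in $x$ before integrating.
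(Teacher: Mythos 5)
Your proposal is correct and follows essentially the same argument as the paper's Appendix proof: track the squared $L^2(\cP_\cX)$ distance from $\bmu^t$ to the true conditional mean $\mu$, use non-expansiveness of projection onto $[0,1]$ (equivalently, that projection can only decrease the distance to the fixed point $\mu(x)\in[0,1]$), expand the one-step drop to isolate the linear-in-$(\bmu^t(x)-\mu(x))$ term, and telescope with $\Phi^1\le\tfrac12$ and $\Phi^{T+1}\ge 0$. The only difference is cosmetic (you package the squared distance into an explicit potential $\Phi^t$ with a factor of $\tfrac12$, while the paper works directly with the difference of squares).
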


\bigskip

The proof is in the Appendix. A direct consequence of the bound in Lemma \ref{lem:boundregret} is that, when interacting with a consistency player who uses gradient descent with learning rate $\eta = \alpha$, an auditor will be able to find sets that fail to be $\alpha$-mean consistent for at most $1/\alpha^2$ many rounds.\footnote{A somewhat better bound is achievable by using a non-uniform learning rate that depends on the measure of the sets $S^t$ chosen by the auditor; we use a uniform learning rate throughout this paper for clarity.} The following theorem is a direct consequence of Lemma \ref{lem:boundregret} --- its short proof is in the Appendix. 

\begin{restatable}{theorem}{thmogdfewrounds}
\label{thm:mean-calibration-convergence}
Set $T=\frac{1}{\alpha^2}-1$ and $\eta=\alpha =\frac{1}{\sqrt{ T+1}}$ in Algorithm \ref{alg:meantemplate}. Assume that for every $t \in [T]$, 
\[
\lambda^t \left(\bmu^t(S^t) - \mu(S^t) \right) \ge \frac{\alpha}{\cP_{\cX}(S^t)},
\]
Then, for every $S \subseteq \cX$, we have
\[
    \left\vert \bmu^{T+1}(S) - \mu(S) \right\vert \le \frac{\alpha}{\cP_{\cX}(S)}.
\]
\end{restatable}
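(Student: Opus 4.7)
The plan is to argue by contradiction, treating the hypothesized violating set as a ``phantom'' $(T{+}1)$-th round of the auditor and applying Lemma~\ref{lem:boundregret} to the extended sequence. Concretely, suppose toward contradiction that some $S\subseteq\cX$ satisfies $|\bmu^{T+1}(S)-\mu(S)| > \alpha/\cP_\cX(S)$, and let $\lambda^{T+1}\in\{-1,1\}$ be the sign of $\bmu^{T+1}(S)-\mu(S)$, so that
\[
\lambda^{T+1}\bigl(\bmu^{T+1}(S)-\mu(S)\bigr) > \frac{\alpha}{\cP_\cX(S)}.
\]
Appending $(S^{T+1},\lambda^{T+1})=(S,\lambda^{T+1})$ to the given sequence and running one more step of Algorithm~\ref{alg:meantemplate} yields a valid length-$(T+1)$ trajectory to which Lemma~\ref{lem:boundregret} applies.

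Next I combine the hypothesized per-round violations with Lemma~\ref{lem:boundregret}. By the theorem's assumption, each of the first $T$ summands satisfies $\lambda^{t}\cP_\cX(S^t)(\bmu^t(S^t)-\mu(S^t))\geq \alpha$, and by construction the $(T{+}1)$-st summand is strictly greater than $\alpha$. Hence
\[
\sum_{t=1}^{T+1}\lambda^{t}\,\cP_\cX(S^t)\bigl(\bmu^t(S^t)-\mu(S^t)\bigr) \;>\; (T+1)\alpha.
\]
On the other hand, Lemma~\ref{lem:boundregret} (with $\eta=\alpha$ and using $\cP_\cX(S^t)\leq 1$) upper-bounds this same sum by
\[
\frac{1}{2\alpha} + \frac{\alpha}{2}\sum_{t=1}^{T+1}\cP_\cX(S^t)\;\leq\; \frac{1}{2\alpha} + \frac{\alpha(T+1)}{2}.
\]

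Finally, combining the two bounds gives $(T+1)\alpha < \tfrac{1}{2\alpha} + \tfrac{\alpha(T+1)}{2}$, which simplifies to $T+1 < 1/\alpha^2$. But $T = 1/\alpha^2 - 1$ forces $T+1 = 1/\alpha^2$, a contradiction. This rules out any set $S$ violating the target bound, establishing the theorem.

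The only subtle point is margin management: the bound $T = 1/\alpha^2 - 1$ makes the regret inequality exactly tight when each round achieves violation $\alpha$, so the contradiction only arises because the hypothesized failure at round $T+1$ is \emph{strict} ($>$ rather than $\geq$). Keeping that strict inequality throughout is the whole content of the argument; everything else is bookkeeping around Lemma~\ref{lem:boundregret}.
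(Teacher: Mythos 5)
Your proof is correct and uses essentially the same argument as the paper: extend the auditor's trajectory by a phantom $(T+1)$-st round with the test set $S$, apply Lemma~\ref{lem:boundregret}, and exploit the assumed $\alpha$-violations on rounds $1,\dots,T$. The only cosmetic difference is that you phrase it as a proof by contradiction, while the paper bounds $\cP_\cX(S)\,|\bmu^{T+1}(S)-\mu(S)|$ directly for an arbitrary $S$ by peeling the last term off the regret sum; the arithmetic ($\frac{1}{2\alpha}+\frac{\alpha(T+1)}{2}=\frac{1}{\alpha}$ versus $(T+1)\alpha$) is identical.
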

In particular, if the auditor selects sets $G(\bmu^t,i)$ that fail to satisfy approximate mean consistency whenever they exist, then we quickly converge to a mean-multicalibrated predictor. Either we reach a state in which $\bmu^t$ is approximately mean consistent on every set $G(\bmu^t,i)$ before $T$ rounds, in which case we are done, or after $T$ rounds, the conclusion of Theorem \ref{thm:mean-calibration-convergence} implies not only that we are approximately mean-multicalibrated with respect to $\cG$, but that we are approximately mean-consistent on every set.

\subsection{Pseudo-Moment Consistency}
\label{subsec:pseudo-moment-calibration}

In this section we make a simple observation: Algorithm \ref{alg:meantemplate} from Section \ref{subsec:mean-calibration} for achieving mean consistency and calibration did not depend on \emph{any} properties of the labels $y$. It would have worked equally well had we invented an arbitrary label for each datapoint $x$, and asked for mean consistency with respect to that label. Using this observation, we consider a (na\"ive, and incorrect) attempt at achieving calibration for higher moments --- but one that will be a useful subroutine in our final algorithm. Recall that $\mk(S) = \E[(y-\mu(x))^k | x \in S]$. If we have a mean predictor $\bmu(x)$, it is therefore tempting to imagine that each point $x$ is associated with an alternative label $\tilde y(x) = \tmk(x)$, where:
\[
\tmk(x) =  \E\left[\left(y - \bmu(x) \right)^k \middle\vert x\right].
\] 
We could then use the algorithm from Section \ref{subsec:mean-calibration} to construct an predictor $\bmk$ that was \emph{mean} multicalibrated with respect to these labels. We refer to the property of being mean consistent with respect to the moment-like labels $\tmk(x)$ as  ``pseudo-moment-consistency'':

\begin{definition}[Pseudo-Moment-Consistency]
Fixing a mean predictor $\bmu$, define the $k\tth$ \emph{pseudo-moment labels} to be $\tmk(x) =  \E\left[\left(y - \bmu(x) \right)^k \middle\vert x\right]$. A moment predictor $\bmk$ is $(\beta,\epsilon)$-\emph{pseudo-moment-consistent} on a set $S$, with respect to a mean predictor $\bmu$ if
\[
\left\vert \bmk(S) - \tmk(S) \right\vert \le \frac{\beta}{\cP_\cX(S)} + \epsilon
\]
We simply say $\beta$-pseudo-moment consistent if the predictor is $(\beta,0)$-pesudo-moment-consistent. 
\end{definition}

We can achieve pseudo-moment consistency using the following gradient descent procedure, analogous to Algorithm \ref{alg:meantemplate}.

\begin{algorithm}[H]
\SetAlgoLined
\begin{algorithmic}
\STATE Start with an \emph{arbitrary} initial pseudo-moment predictor $\bmk^1:\cX\rightarrow [0,1]$
\FOR{$t=1, \dots, T$}
	\STATE Audit player plays some $R^t \subseteq \cX, \psi^t \in \{-1,1\}$
	    \STATE $\bmk^{t+1}(x) = \begin{cases}
	\text{project}_{[0,1]} \left(\bmk^t(x)  - \eta \psi^t\right) & \text{if } x \in R^t,\\
	\bmk^t(x) & \text{otherwise.}
	\end{cases}$
\ENDFOR
\end{algorithmic}
\caption{Projected Gradient Descent$(\eta)$ for $\bmk$}
\label{alg:momenttemplate}
\end{algorithm}

In particular, we  obtain the following theorem, whose proof is deferred to the Appendix.

\begin{restatable}{theorem}{thmpseudomoment}
\label{thm:pseudo-moment-calibration}
Let $T=\frac{1}{\beta^2}-1$ and $\eta=\frac{1}{\sqrt{ T+1}}=\beta$ in Algorithm \ref{alg:momenttemplate}, and fix any mean predictor $\bmu$, which defines the function $\tmk(x)$. Assume that for every $t \in [T]$, 
\[
\left\vert \bmk^t(R^t) - \tmk(R^t) \right\vert \ge \frac{\beta}{\cP_\cX(R^t)},
\]
Then, for every $R \subseteq \cX$, we have
\[
\left\vert \bmk(R) - \tmk(R) \right\vert \le \frac{\beta}{\cP_\cX(R)}.
\]
i.e. $\bmk$ is $\beta$-pseudo-moment-consistent on every set $R$.
\end{restatable}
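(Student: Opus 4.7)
The plan is to observe that Algorithm~\ref{alg:momenttemplate} is structurally identical to Algorithm~\ref{alg:meantemplate}: the role of the mean predictor $\bmu$ is played by the moment predictor $\bmk$, the adversary's sign $\lambda^t$ by $\psi^t$, and the implicit \emph{target} $\mu(x)=\E[y\mid x]$ by the fixed function $\tmk(x)=\E[(y-\bmu(x))^k\mid x]$. Inspecting the proof of Lemma~\ref{lem:boundregret} (deferred to the appendix), the argument is a standard projected-gradient-descent potential calculation in the $\cP_\cX$-weighted $\ell_2$ norm, and uses the target only through the fact that it lies in the constraint set $[0,1]$, so that projection does not increase squared distance to it. Since the pseudo-moment labels $\tmk(x)$ lie in $[0,1]$ in the regime of interest (e.g., for even $k$, the case relevant to the Chebyshev-type prediction intervals of Section~\ref{sec:intervals}), the identical argument yields the analogous regret bound
\[
\sum_{t=1}^{T}\psi^t\,\cP_\cX(R^t)\bigl(\bmk^t(R^t)-\tmk(R^t)\bigr)\;\le\;\frac{1}{2\eta}+\frac{\eta}{2}\sum_{t=1}^{T}\cP_\cX(R^t).
\]

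Next I mirror the contradiction argument used for Theorem~\ref{thm:mean-calibration-convergence}. Suppose $\bmk:=\bmk^{T+1}$ were to fail $\beta$-pseudo-moment-consistency on some set $R$, i.e.\ $|\bmk(R)-\tmk(R)|>\beta/\cP_\cX(R)$. Extend the audit play by a hypothetical round $T+1$ with $R^{T+1}=R$ and $\psi^{T+1}\in\{-1,+1\}$ chosen to match the sign of the violation, so that $\psi^{T+1}(\bmk^{T+1}(R^{T+1})-\tmk(R^{T+1}))>\beta/\cP_\cX(R^{T+1})$. Combined with the hypothesized per-round violations $\psi^t(\bmk^t(R^t)-\tmk(R^t))\ge \beta/\cP_\cX(R^t)$ for $t\in[T]$, multiplying each by $\cP_\cX(R^t)$ and summing shows that the left-hand side of the regret bound applied to $T+1$ rounds strictly exceeds $(T+1)\beta$.

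To close, substituting $\eta=\beta=1/\sqrt{T+1}$ into the right-hand side of the regret bound over $T+1$ rounds yields $\tfrac{\sqrt{T+1}}{2}+\tfrac{\sqrt{T+1}}{2}=\sqrt{T+1}=1/\beta=(T+1)\beta$, contradicting the strict lower bound just derived. Hence no such violating $R$ exists, and $\bmk^{T+1}$ is $\beta$-pseudo-moment-consistent on every set. The only real point of care is verifying that the proof of Lemma~\ref{lem:boundregret} transports verbatim when $\mu$ is swapped for $\tmk$; this amounts to rereading the potential argument and noting that it uses only boundedness of the target, not its identity as a conditional expectation of $y$. Everything else is a direct transcription of the mean-multicalibration convergence proof, so this ``transport'' step is the only real obstacle, and it is a routine one.
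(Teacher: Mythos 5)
Your proof is correct and follows essentially the same route as the paper: the paper's appendix (Lemma~\ref{lem:mean-conditioned-moment-gradient}) reproves the regret bound of Lemma~\ref{lem:boundregret} verbatim with $\bmk,\tmk,\psi^t$ substituted for $\bmu,\mu,\lambda^t$, and then Theorem~\ref{thm:pseudo-moment-calibration} is proved by appending a hypothetical round $R^{T+1}=R$ and peeling off the last term exactly as in Theorem~\ref{thm:mean-calibration-convergence}; your proof-by-contradiction framing is a cosmetic repackaging of that same peeling step. Your side-remark that the transport of the potential argument requires $\tmk(x)\in[0,1]$ (automatic when $k$ is even) is a valid and arguably more careful observation than the paper's, which simply asserts in the appendix that $\tmk(x)$ lies in $[0,1]$ without comment.
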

Now, a guarantee of  ``pseudo-moment-consistency''  is really a guarantee of \emph{mean} consistency with respect to ``moment-like'' labels $\tmk(x)$, and does \emph{not} correspond to moment consistency. This is because moments $\mk$ for $k > 1$ don't combine linearly the way means do: recall Observation \ref{observation:mixture}. But also recall from Observation \ref{observation:mixture} that higher moments \emph{do} happen to combine linearly if we average only over points that share the same mean.  

 We take advantage of this to prove the following key lemma: if we achieve pseudo-moment consistency on all sets $G(\bmu,\bmk,i,j)$ (for $G\in \cG$) with respect to a mean predictor $\bmu$ that happens also to be mean-consistent on all sets $G(\bmu,\bmk,i,j)$, then, the pair of predictors is in fact approximately mean-conditioned moment multicalibrated with respect to $\cG$.

\begin{lemma}
\label{lem:fine-consist-mean-conditioned-moment-cal}
Assume $\bmu$ is such that for all $G \in \cG$ and $i,j \in [m]$, $\bmu$ is $\alpha$-mean consistent on every set $G(\bmu, \bmk, i, j))$:
\begin{align*}
&\left\vert \bmu(G(\bmu, \bmk, i, j)) - \mu(G(\bmu, \bmk, i, j)) \right\vert \le \frac{\alpha}{\cP_\cX(G(\bmu, \bmk, i, j))}.
\shortintertext{Assume also that $\bmk$ is $\beta$-pseudo-moment-consistent with respect to $\bmu$ on every set $G(\bmu, \bmk, i, j))$ for  $G \in \cG$ and $i,j \in [m]$:}
&\left\vert \bmk(G(\bmu, \bmk, i, j)) - \tmk(G(\bmu, \bmk, i, j)) \right\vert \le \frac{\beta}{\cP_\cX(G(\bmu, \bmk, i, j))}.
\shortintertext{Then, for every $G \in \cG$, $i,j \in [m]$, we have}
&\left\vert \bmk(G(\bmu, \bmk, i, j)) - \mk(G(\bmu, \bmk, i, j)) \right\vert \le \frac{\beta + k\alpha}{\cP_{\cX}(G(\bmu, \bmk, i, j)} + \frac{k}{m}. 
\end{align*}
This implies in particular that  $(\bmu, \bmk)$ are $(\alpha, \beta', \epsilon)$-mean-conditioned moment multicalibrated with respect to $\cG$, for $\beta' = \beta+ k\alpha$ and $\epsilon= \frac{k}{m}$.
\end{lemma}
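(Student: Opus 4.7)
The plan is to fix an arbitrary set $S = G(\bmu,\bmk,i,j)$ and exploit the crucial bucketing fact that on such an $S$ the prediction $\bmu(x)$ lies in a single width-$(1/m)$ bucket, so $\bmu$ is $(1/m)$-close to constant on $S$. Combined with the hypothesis $|\bmu(S) - \mu(S)| \leq \alpha/\cP_\cX(S)$, this says $|\bmu(x) - \mu(S)| \leq 1/m + \alpha/\cP_\cX(S)$ for every $x \in S$. This is precisely the approximate version of the ``constant mean'' hypothesis in Observation~\ref{observation:mixture}, and it is what lets me convert pseudo-moment consistency (which compares $\bmk$ against moments centered at the point-dependent $\bmu(x)$) into genuine moment consistency (where moments are centered at the single constant $\mu(S)$).

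The concrete steps are as follows. First I would split by the triangle inequality,
\[
\left|\bmk(S) - \mk(S)\right| \;\leq\; \left|\bmk(S) - \tmk(S)\right| \;+\; \left|\tmk(S) - \mk(S)\right|.
\]
The first term is at most $\beta/\cP_\cX(S)$ directly from the pseudo-moment-consistency hypothesis. For the second, I unfold the definitions as $\tmk(S) = \E[(y - \bmu(x))^k \mid x \in S]$ and $\mk(S) = \E[(y - \mu(S))^k \mid x \in S]$, and apply a pointwise Lipschitz bound on $u \mapsto (y - u)^k$: since $y, \bmu(x), \mu(S) \in [0,1]$, the mean value theorem gives $|(y - \bmu(x))^k - (y - \mu(S))^k| \leq k\, |\bmu(x) - \mu(S)|$ (the $(y-\xi)^{k-1}$ derivative factor collapses to at most $1$). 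Taking conditional expectation on $x \in S$ and applying the triangle inequality once more,
\[
\E\bigl[\,|\bmu(x) - \mu(S)| \,\bigm|\, x \in S\bigr] \;\leq\; \E\bigl[\,|\bmu(x) - \bmu(S)| \,\bigm|\, x \in S\bigr] + |\bmu(S) - \mu(S)| \;\leq\; \tfrac{1}{m} + \tfrac{\alpha}{\cP_\cX(S)},
\]
so $|\tmk(S) - \mk(S)| \leq k\alpha/\cP_\cX(S) + k/m$. Summing the two bounds gives the claimed $(\beta + k\alpha)/\cP_\cX(S) + k/m$.

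The ``in particular'' clause then follows immediately from Definition~\ref{def:calibrated}: the hypothesis already supplies $\alpha$-mean consistency (hence $(\alpha, k/m)$-mean consistency) on every set $G(\bmu,\bmk,i,j)$ with $G \in \cG$, and the computation above gives $(\beta + k\alpha, k/m)$-moment consistency on the same sets, which is exactly $(\alpha, \beta + k\alpha, k/m)$-mean-conditioned moment multicalibration with respect to $\cG$. There is no real obstacle; the only delicate point is recognizing that the $[0,1]$-bounded labels make the Lipschitz constant of $u \mapsto (y-u)^k$ equal to $k$ rather than something that grows with $k$, so the bucketing slack appears cleanly as $k/m$ rather than as something worse.
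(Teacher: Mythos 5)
Your proof is correct and follows essentially the same route as the paper: the same triangle-inequality decomposition through the pseudo-moment $\tmk(S)$, the same Lipschitz bound on $u\mapsto (y-u)^k$ with constant $k$ (the paper obtains it from the algebraic factorization $a^k-b^k=(a-b)\sum a^{\ell}b^{k-1-\ell}$, you from the mean value theorem, but these are interchangeable), and the same splitting of $\E[|\bmu(x)-\mu(S)|\mid x\in S]$ into a bucketing term $1/m$ and a mean-consistency term $\alpha/\cP_\cX(S)$. No gaps.
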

\begin{proof}
Fix $G \in \cG$ and $i,j \in [m]$ and let $S \equiv G(\bmu, \bmk, i, j)$. Because  $\bmu$ is $\alpha$-mean consistent on $S$, we have that:
\begin{align}
\left\vert \mu(S) - \bmu(S) \right\vert \le \frac{\alpha}{\cP_\cX(S)}.\label{eqn:fine-consist-mean-conditioned-moment-calibration-assump}
\end{align}
We can use this to bound the difference between the true moment $\mk(S)$ and the pseudo-moment $\tmk(S)$ on $S$. First, note that: 
\begin{align*}
\mk(S) &= \E_{\cP}\left[( y - \mu(S))^k \middle\vert x \in S\right],\\
& = \E_{\cP} \left[ \left[\left( y - \bmu(x) \right) + \left(\bmu(x) - \mu(S)\right) \right]^k \middle\vert x \in S\right].
\end{align*}

We will make use of the following fact:
\begin{lemma}
\label{lem:arith}
For any $a,b \in [0,1]$, $|a^k-b^k |\leq k |a-b|$.
\end{lemma}
\begin{proof}
Observe that:
\begin{equation*}
|a^k-b^k |= \left|(a-b)\left(\sum_{\ell =0}^{k-1} a^{\ell}b^{k-1 -\ell}\right)\right| 
\leq  |a-b| \vert k(\max(a,b))^{k-1} \vert \leq k |a-b|. \qedhere \end{equation*}
\end{proof}

Finally, we conclude that:
\begin{align*}
\left|\mk(S) - \tmk(S) \right| &= \left|\E_{\cP} \left[\left(\left( y - \bmu(x) \right) + \left(\bmu(x) - \mu(S)\right) \right)^k- \left( y - \bmu(x) \right)^k \middle\vert x \in S\right]\right| \\
&\leq k \E_{\cP}\left[ \left\vert \bmu(x) - \mu(S)\right\vert \middle| x \in S\right]\\
& \leq k \left(\E_{\cP}\left[ \left\vert \bmu(x) - \bmu(S)\right\vert \middle| x \in S\right] +  \left\vert \bmu(S) - \mu(S)\right\vert  \right)\\
 &\leq k \left(\frac{1}{m}+ \frac{ \alpha}{\cP_{\cX}(S)}  \right).
\end{align*}
The first inequality follows from Lemma \ref{lem:arith} with $a = \left( y - \bmu(x) \right) + \left(\bmu(x) - \mu(S)\right)$ and $b =  y - \bmu(x)$. The final inequality follows from  \eqref{eqn:fine-consist-mean-conditioned-moment-calibration-assump} (mean consistency) together with the fact that  $\bmu(x)$ falls within a bucket of width $\frac1m$ for any $x \in S$ (recall that by definition,  $S = G(\bmu, \bmk, i, j)$), and so does $\bmu(S)$

Finally, because $\bmk$ is $\beta$-pseudo-moment consistent on $S$ with respect to $\bmu$ we can invoke the triangle inequality to conclude:
\begin{align*}
    \left\vert \bmk(S) - \mk(S) \right\vert
    &\le \left\vert \bmk(S) - \tmk(S) \right\vert + \left\vert \tmk(S) - \mk(S) \right\vert,  \\
    &\le \frac{\beta}{\cP_\cX(S)}+ k \left(\frac{1}{m} + \frac{ \alpha}{\cP_{\cX}(S)} \right).  \qedhere
\end{align*}
\end{proof}

Lemma \ref{lem:fine-consist-mean-conditioned-moment-cal} reduces the problem of finding mean-conditioned-moment multicalibrated predictors $(\bmu,\bmk)$ to the problem of finding a pair of predictors $(\bmu,\bmk)$ satisfying mean-consistency and pseudo-moment-consistency on the sets $G(\bmu,\bmk,i,j)$. It is unclear how to do this, because these conditions have a circular dependency: pseudo-moment consistency of $\bmk$ with respect to $\bmu$ is not defined until we have fixed a mean predictor $\bmu$, because the ``labels'' $\tmk(x)$ with respect to which pseudo-moment consistency is defined depend on $\bmu$. On the other hand, the sets $G(\bmu,\bmk,i,j)$ on which $\bmu$ must satisfy mean consistency are not defined until we fix the moment predictor $\bmk$. The next section is devoted to resolving this circularity and finding predictors satisfying the conditions of Lemma \ref{lem:fine-consist-mean-conditioned-moment-cal}.

\subsection{Mean-Conditioned Moment Multicalibration}
\label{subsec:mean-conditioned-moment-calibration}

We arrive at the last block upon which our main result rests: an alternating gradient descent procedure that on any distribution finds a mean multicalibrated predictor $\bmu$ together with moment predictors $\{\bma\}_{a=2}^k$ such that each pair $(\bmu,\bma)$ is approximately mean-conditioned moment multicalibrated on $\cG$. We continue, for clarity's sake, to assume access to the underlying distribution $\cP$, and postpone to Section \ref{sec:finite} the details of implementing this approach with a polynomially sized sample of points. Our strategy is to obtain a set of predictors that together satisfy the hypotheses of Lemma \ref{lem:fine-consist-mean-conditioned-moment-cal}: mean consistency and pseudo-moment-consistency on every set of the form  $G(\bmu, \bma, i, j))\subseteq G \in \cG$, $1 < a \leq k$, and $i,j \in [m]$. We have already seen in Section \ref{subsec:mean-calibration} that for a \emph{fixed} collection of sets, a simple gradient-descent procedure can obtain mean consistency on each of the sets. Section \ref{subsec:pseudo-moment-calibration} demonstrates that for a \emph{fixed} mean predictor $\bmu$, a similar gradient descent procedure can obtain pseudo-moment-consistency with respect to $\bmu$ on each set $G(\bmu, \bma, i, j))$. Our algorithm simply alternates between these two procedures. In rounds $t$, we maintain hypothesis predictors $\bmu^t,\{\bma^t\}_{a=2}^k$. In alternating rounds, we perform updates of gradient descent using Algorithm \ref{alg:mean} to arrive at a mean predictor $\bmu^{t}$ that has taken a step towards consistency on sets $G(\bmu^t, \bma^{t-1}, i, j)$, and then using the newly updated mean predictor $\bmu^t$, run Algorithm \ref{alg:moment} to obtain moment predictors $\bma^t$  that obtain pseudo-moment-consistency with respect to $\bmu^t$ on all sets $G(\bmu^t, \bma^t, i, j)$. This is coordinated via a wrapper algorithm, Algorithm \ref{alg:wrapper}. We prove this alternating procedure terminates after $1/\alpha^2-1$ many rounds and outputs predictors $\bmu,\{\bma\}_{a=2}^k$ that are jointly mean-conditioned moment-multicalibrated.

\begin{algorithm}[H]
\begin{algorithmic}
    \STATE $\bmu(x) = \begin{cases}
    	\text{project}_{[0,1]}(\bmu(x)  - \alpha \lambda) & \text{if } x \in S, \\
    	\bmu(x) & \text{otherwise.}
    	\end{cases}$
    \STATE return $\bmu$
\end{algorithmic}
\caption{MeanConsistencyUpdate$(\bmu,S,\lambda)$}
\label{alg:mean}
\end{algorithm}

\begin{algorithm}[H]
\begin{algorithmic}[1]
\STATE define pseudo-moment labels $\tma(x)  = \E\left[\left( y-\bmu(x)\right)^a \middle| x\right]$ for all $x$
    \WHILE{$\exists R = G(\bmu, \bma, i, j)$ for some $G \in \cG$, $i, j \in [m]$ s.t. $\left|\bma(R) - \tma(R) \right| \geq \frac{\beta}{\cP_\cX(R)}$}\label{algline:pseudo-moment}
            \STATE $\psi = \text{sign}(\bmk(R) - \tmk(R))$
            \STATE $\bma(x) = \begin{cases}
            	\text{project}_{[0,1]}(\bma(x)  - \beta \psi) & \text{if } x \in R \\
            	\bma(x) & \text{otherwise.}
            	\end{cases}$
\ENDWHILE
\STATE return $\bmk$
\end{algorithmic}
\caption{$\text{PseudoMomentConsistency}(a, \beta, \bmu, \bma, \cG)$}
\label{alg:moment}
\end{algorithm}

\begin{algorithm}[H]
\begin{algorithmic}[1]
\STATE initialize $\bmu^1(x) = 0$ for all $x$
    \STATE for all $1 < a \leq k$, initialize $\bma^1(x) = 0$ for all $x$
\STATE $t=1$
\WHILE{$\exists S^t = G(\bmu^t, i)$ or $S^t = G(\bmu^t, \bma^t, i, j)$ for some $G \in \cG$, $i, j \in [m]$, $1 < a \leq k$ s.t. $\left|\bmu^{t}(S^t) - \mu(S^t)\right| \ge  \frac{\alpha}{\cP_{\cX}(S^t)}$}\label{algline:wrapper}
    \STATE $\lambda^t = \text{sign}(\bmu(S^t) - \mu(S^t))$
    \STATE $\bmu^{t+1} = \text{MeanConsistencyUpdate}( \bmu^{t}, S^t,\lambda^t)$
    \FOR{$a=2, \dots, k$}
        \STATE $\bma^{t+1} = \text{PseudoMomentConsistency}(a, \beta, \bmu^{t+1}, \bma^{t},\cG)$.
    \ENDFOR
    \STATE $t = t+1$
\ENDWHILE
\STATE return $(\bmu^t, \{\bma^t\}_{a=2}^k)$
\end{algorithmic}
\caption{$\text{AlternatingGradientDescent}(\alpha,\beta,\cG)$}
\label{alg:wrapper}
\end{algorithm}

\begin{theorem}
\label{thm:wrapper-distr-known}
Let $T$ be the final iterate  $t$ of Algorithm \ref{alg:wrapper} (i.e. its output is $(\bmu^T, \{\bma^T\}_{a=2}^k)$.  Algorithm \ref{alg:wrapper} has the following guarantees:
\begin{enumerate}
\item \textbf{Total Iterations}: The algorithm halts. The final iterate $T$ is s.t. $T \leq \frac{1}{\alpha^2}-1$. The total number of gradient descent update operations is at most $\left(\frac{1}{\alpha^2}-1\right)\left(1 + (k-1) \left(\frac{1}{\beta^2}-1\right)\right)$.
\item \textbf{Mean multicalibration}: Output $\bmu^T$ is $\alpha$-mean multicalibrated with respect to $\cG$.
\item \textbf{Mean Conditioned Moment multicalibration}: For every $a \in \{2, \dots, k\}$, the pair $(\bmu^T, \bma^T)$ is $(\alpha, \beta + a\alpha, {a \over m})$-mean-conditioned moment-multicalibrated with respect to $\cG$.
\end{enumerate}
\end{theorem}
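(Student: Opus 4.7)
The plan is to prove the three guarantees sequentially, with the iteration bound being the structural backbone and the multicalibration guarantees then following essentially from the termination conditions of the outer while loop combined with Lemma \ref{lem:fine-consist-mean-conditioned-moment-cal}.

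For the iteration bound, I would apply Lemma \ref{lem:boundregret} (the regret bound for projected gradient descent with learning rate $\eta=\alpha$) to the sequence $(S^t, \lambda^t)$ selected by the outer loop. The key observation is that whenever the outer while condition fires, the chosen $S^t$ satisfies $|\bmu^t(S^t) - \mu(S^t)| \ge \alpha/\cP_\cX(S^t)$, and since $\lambda^t$ is chosen to match the sign of the gap, each iteration contributes at least $\alpha$ to the sum $\sum_t \lambda^t \cP_\cX(S^t)(\bmu^t(S^t) - \mu(S^t))$. Combining with the upper bound $\tfrac{1}{2\alpha} + \tfrac{\alpha}{2}\sum_t \cP_\cX(S^t) \le \tfrac{1}{2\alpha} + \tfrac{\alpha T}{2}$ from Lemma \ref{lem:boundregret} forces the iteration count $T \le 1/\alpha^2 - 1$, exactly as in Theorem \ref{thm:mean-calibration-convergence}. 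For the inner loop, each call to PseudoMomentConsistency terminates after at most $1/\beta^2 - 1$ pseudo-moment gradient steps by Theorem \ref{thm:pseudo-moment-calibration}, and there are $k-1$ such calls per outer iteration, giving the stated total of $(1/\alpha^2 - 1)(1 + (k-1)(1/\beta^2-1))$ updates.

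Given termination, mean multicalibration of $\bmu^T$ is immediate: the while condition at line \ref{algline:wrapper} explicitly includes sets of the form $G(\bmu^t, i)$, so its failure at termination means $|\bmu^T(G(\bmu^T,i)) - \mu(G(\bmu^T,i))| < \alpha/\cP_\cX(G(\bmu^T,i))$ for every $G \in \cG$ and $i \in [m]$. For the mean-conditioned moment multicalibration, I would package two facts holding simultaneously at termination: first, failure of the same while condition at sets $G(\bmu^T, \bma^T, i, j)$ yields $\alpha$-mean-consistency of $\bmu^T$ on each such set; second, the innermost while loop inside PseudoMomentConsistency (line \ref{algline:pseudo-moment}), which was last executed with input $\bmu^T$ and only exits when its own condition fails, ensures that $\bma^T$ is $\beta$-pseudo-moment-consistent with respect to $\bmu^T$ on every set $G(\bmu^T, \bma^T, i, j)$. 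With both hypotheses verified, Lemma \ref{lem:fine-consist-mean-conditioned-moment-cal} applied to $(\bmu^T, \bma^T)$ directly delivers $(\alpha, \beta + a\alpha, a/m)$-mean-conditioned moment multicalibration for each $a \in \{2,\dots,k\}$.

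The main subtlety to defend against is the apparent circularity flagged at the end of Section \ref{subsec:pseudo-moment-calibration}: each outer update to $\bmu$ changes the pseudo-moment labels $\tma(x)$, which can destroy the pseudo-moment-consistency just achieved for $\bma$, and updating $\bma$ in turn redefines the sets $G(\bmu, \bma, i, j)$ on which $\bmu$ must be mean-consistent. My proof plan handles this by observing (a) the outer-loop termination argument depends \emph{only} on the sequence of $\bmu$-updates and the regret bound, and is therefore insensitive to how $\bma$ evolves in the inner loop, and (b) the fresh PseudoMomentConsistency call following each $\bmu$-update restores pseudo-moment-consistency with respect to the new $\bmu$. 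Thus at the single instant when the outer while condition fails, both preconditions of Lemma \ref{lem:fine-consist-mean-conditioned-moment-cal} hold simultaneously on the same predictors and the same partition $\{G(\bmu^T, \bma^T, i, j)\}$, which is precisely the point that resolves the circularity and lets the lemma fire.
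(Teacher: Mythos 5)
Your proof proposal is correct and follows essentially the same structure as the paper's: bound the outer-loop iterations via the gradient-descent regret argument (the paper cites Theorem \ref{thm:mean-calibration-convergence} where you unroll it through Lemma \ref{lem:boundregret}, but the content is identical), read mean multicalibration directly off the failed while condition, and for the moment guarantee observe that at termination $\bmu^T$ is mean-consistent on every set $G(\bmu^T,\bma^T,i,j)$ while each $\bma^T$ (having been produced by the last call to PseudoMomentConsistency with input $\bmu^T$) is $\beta$-pseudo-moment-consistent on those same sets, so Lemma \ref{lem:fine-consist-mean-conditioned-moment-cal} applies. Your explicit articulation of why the circularity is harmless---that the two preconditions of the lemma are checked simultaneously on the same fixed pair $(\bmu^T,\bma^T)$ at the moment the outer loop halts---is exactly the resolution the paper implicitly relies on.
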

\begin{proof}
We prove each guarantee in turn.
\medskip

\noindent \textbf{Total Iterations}: Every step $t$ of the while loop in Algorithm \ref{alg:wrapper} performs a gradient descent update using $\text{MeanConsistencyUpdate}$ on a pair $(\lambda^t, S^t)$ such that: \[
\lambda^t \left(\bmu^t(S^t) - \mu(S^t) \right) \ge \frac{\alpha}{\cP_{\cX}(S^t)}.
\]
By Theorem \ref{thm:mean-calibration-convergence}, this process can continue for at most $T \leq \frac{1}{\alpha^2}-1$ many iterations. Within each iteration $t$ of the loop, the algorithm makes one call to $\text{PseudoMomentConsistency}$ for each $1 < a \leq k$ for a total of $(k-1)$ calls per iteration. Each of these calls performs at most $\frac{1}{\beta^2}-1$ iterations of gradient descent, by Theorem \ref{thm:pseudo-moment-calibration}.

\medskip

\noindent \textbf{Mean multicalibration}: Suppose for a contradiction that Algorithm \ref{alg:wrapper} terminates at $t = T$ with output $\bmu^T$ which is not mean multicalibrated, i.e. there exists a set $S \equiv G(\bmu^T,i)$ for some $G \in \cG$, $i \in [m]$ such that $|\bmu^t(S)-\mu(S)| \geq \frac{\alpha}{\cP_{\cX}(S)}$. Then, by construction of the while loop in Algorithm \ref{alg:wrapper}, $T$ cannot be the final iterate of $t$. 

\medskip

\noindent \textbf{Mean Conditioned Moment multicalibration}: The While loop in Algorithm \ref{alg:wrapper} will continue as long as there exists a set $S^t \equiv G(\bmu^t, \bma^t, i, j)$ for some $G \in \cG, i,j \in [m]$ such that $|\bmu^t(S^t)-\mu(S^t)| \geq \frac{\alpha}{\cP_{\cX}(S^t)}$. Hence we can conclude that at termination, $\bmu^T$ is $\alpha$-mean consistent on every set $G(\bmu^T, \bma^T, i, j)$ for some $G \in \cG, i,j \in [m]$. Moreover, during the final iteration, for each $1 < a \leq k$, $\bma^T$ was constructed by running $\text{PseudoMomentConsistency}(a, \beta, \bmu^{T}, \bma^{T-1},\cG)$. Therefore, by Theorem \ref{thm:pseudo-moment-calibration} we know that $\bma^T$ is $\beta$-pseudo-moment consistent on every set $G(\bmu^T, \bma^T, i, j)$. To see this, note that if $\text{PseudoMomentConsistency}$ runs for $\frac{1}{\beta^2}-1$ many rounds, then it is $\beta$-pseudo-moment consistent on \emph{every} set. On the other hand, the only way it can halt before that many rounds (by construction of the halting condition in its While loop) is if $\bma^T$ is $\beta$-pseudo-moment consistent on every set $G(\bmu^T, \bma^T, i, j)$.  

Therefore, $\bmu^T$ and $\{\bma^T\}_{a=2}^k$ jointly satisfy the conditions of Lemma \ref{lem:fine-consist-mean-conditioned-moment-cal}. It follows from the Lemma that they are mean-conditioned moment-multicalibrated at the desired parameters.
\end{proof}

\section{Implementation with Finite Sample and Runtime Guarantees}\label{sec:finite}
In Section \ref{achieving} we analyzed a version of our algorithm as if we had direct access to the true distribution, $\cP$. In particular, in both Algorithm \ref{alg:moment} and Algorithm \ref{alg:wrapper}, access to $\cP$ was needed in (only) two places. First, to identify a set $S^t$ such that $\left|\bmu^{t}(S^t) - \mu(S^t)\right| \ge  \frac{\alpha}{\cP_{\cX}(S^t)}$. Second, to identify a set $R$ such that $\left|\bma(R) - \tma(R) \right| \geq \frac{\beta}{\cP_\cX(R)}$.
In this section, we show how to perform these operations approximately by using a small finite sample of points drawn from $\cP$, and hence to obtain a finite sample version of our main result together with sample complexity and running time bounds.

There are two issues at play here: the first issue is purely statistical: how many \emph{samples} are needed to  execute the two checks needed to implement our algorithm? Our finite sample algorithm will essentially use a sufficiently large fresh sample of data at every iteration to guarantee uniform convergence of the quantities to be estimated over all of the sets that must be checked at that iteration. The second issue is computational: even if we have enough samples so that we can check in-sample quantities as proxies for the distributional quantities we care about, what is the running time of our algorithm? We are performing gradient descent in a potentially infinite dimensional space, and so we cannot explicitly maintain the weights $\bmu^t(x),\bma^t(x)$ for all $x$. Instead, we maintain these weights \emph{implicitly} as a weighted linear combination of the indicator functions for each of the sets $S^t$, $R$, used to perform updates (recall that we have assumed that each set $G \in \cG$ can be represented by an indicator function computed by a polynomially sized circuit, so we have concise implicit representations of every set that our algorithm must manipulate). Ostensibly one must exhaustively enumerate the collection of sets $S, R$, for which our algorithm must perform some check (in fact their indicator functions), which takes time that scales with $m^2\cdot |\cG|$. We first focus on the statistical problem, showing that the number of \emph{samples} needed to implement our algorithm is small, and then we observe that if we have an \emph{agnostic learning algorithm} for $\cG$, we can use it to replace exhaustive enumeration. In both cases---although the details differ---we handle these issues in largely the same way they were handled by \citet{multicalibration}, so many of the proofs and calculations will be deferred to the Appendix. 

Finally, we remark that it is essential that we draw a fresh sample of $n$ data points each time we try to find a set for consistency violation because $\ell$, $\bell$, as well as the collection of sets $\cS$ that we are auditing, are not fixed \emph{a priori} but change adaptively (i.e. as a function of the data) between rounds. Due to the adaptive nature of the statistical tests that need to be performed, we cannot simply union bound over these queries. We remark that we could have applied  adaptive data analysis techniques (see e.g. \citep{ada1,ada2,ada3}) to partially re-use the data, which would save a quadratic factor in the sample complexity (or for finite data domains, an exponential improvement in some of the existing parameters, at the cost of an additional dependence on $\log|\cX|$ by using the private multiplicative weights algorithm of \citet{PMW}). This idea is applied  in \citet{multicalibration}; it applies here in the same manner; interested readers can refer to  \citet{multicalibration}.

\subsection{Sample Complexity Bounds and a Finite Sample Algorithm via Exhaustive Group Enumeration}
\label{enumeration}
First, recall that pseudo-moment consistency is mean consistency with respect to the artificially created label $\tmk(x)=(y-\bmu(x))^k$. To avoid needless repetition, we focus on achieving mean consistency for an arbitrary label defined by a label function $\ell(x,y)$ with a predictor $\bell(x)$. Then, auditing for mean consistency for $\bmu$ and pseudo-moment consistency for $\bmk$ follows by setting 
\begin{align*}
  \bell(x) = \bmu(x) \quad&\text{and}\quad \ell(x,y) = y  \\
  \bell(x) = \bmk(x) \quad&\text{and}\quad \ell(x,y) = (y-\bmu(x))^k
\end{align*}
for mean consistency and pseudo-moment consistency respectively. For economy of notation set,
\[
\bell(S) = \E_\cP[\bell(x)|x\in S]
\quad\text{and}\quad \ell(S) = \E_\cP[\ell(x,y) | x \in S],
\]
for all $S \subseteq \cX$.
For any set $S \subseteq \cX$, given a dataset $D$, we refer to $D_S$ as the subset of the data where the corresponding points lie in $S$ (i.e. $D_S = \{(x,y) \in D: x \in S\}$). If dataset $D_S$ has $n'$ points each drawn independently from $\cP$ conditional on $x \in S$, we can  appeal to the Chernoff bound (Theorem \ref{thm:chernoff}) to argue that empirical averages must be close to their expectations.

We can also appeal to the Chernoff bound to argue that when we sample data points from $\cP$, the number of points that fall into some set $S$ (e.g. $G(\bmu, \bmk, i, j)$) scales roughly with $n \cP_\cX(S)$ (Lemma \ref{finitemeasurelem}).

Throughout the execution of our algorithm, we need to audit the current mean and moment estimators for $\alpha$-mean consistency violations. This is important, because the analysis of the running time of the algorithm (e.g. the fact that it converges after at most  $T=\frac{1}{\alpha^2}-1$ iterations) relies on making a minimum amount of progress guaranteed by $\alpha$-mean \emph{inconsistency}.  In the next lemma, we provide a condition that can be checked using empirical estimates which guarantees $\alpha$-mean inconsistency (on the true, unknown distribution) whenever the sample is appropriately close to the distribution; it follows from two applications of a Chernoff bound that this approximate closeness condition will occur with high probability. We encapsulate this empirical check in Algorithm \ref{alg:auditor-single-set}.

\begin{algorithm}[H]
\begin{algorithmic}
\IF{$n' > 0$ and $\left|\frac{1}{n'}\sum_{b=1}^{n'} \bell(x_b) - \frac{1}{n'} \sum_{b=1}^{n'} \ell(x_b, y_b)\right| - 2\sqrt{\frac{\ln(\frac{2}{\delta})}{2n'}} \geq  \frac{\alpha}{\frac{n'}{n} - \sqrt{\frac{\ln(\frac{2}{\delta})}{2n}}}$}
        \STATE $\lambda = \text{sign}(\frac{1}{n'}\sum_{b=1}^{n'} \bell(x_b) - \frac{1}{n'} \sum_{b=1}^{n'} \ell(x_b, y_b))$
        \STATE Output $YES, \lambda$ (A Consistency Violation has been found)
\ELSE
\STATE Output $No$
    \ENDIF
\end{algorithmic}
\caption{Auditor$(\ell, \bell, \alpha, \delta, \{(x_b, y_b)\}_{b=1}^{n'})$}
\label{alg:auditor-single-set}
\end{algorithm}

\begin{definition}
\label{def:chernoff-bounds-closeness}
 Fix any set $S\subseteq \cX$. Given a set of $n$ data points $D$ and its associated $D_S=\{(x_b,y_b)\}_{b=1}^{n'}$, we say that $D$ is approximately close to $\cP$ with respect to $(S,\ell,\bell)$, if the following inequalities hold true:
\begin{subequations}
\begin{align}
&n' > 0 \nonumber\\
&\left\vert \frac{n'}{n} - \cP_\cX(S) \right\vert \le \sqrt{\frac{\ln(\frac{2}{\delta})}{2n}}\label{eqn:chernoff1}\\
&\left\vert \frac{1}{n'}\sum_{b=1}^{n'} \bell(x_b) - \bell(S) \right\vert \le \sqrt{\frac{\ln(\frac{2}{\delta})}{2n'}}\label{eqn:chernoff2}\\
&\left\vert \frac{1}{n'}\sum_{b=1}^{n'} \ell(x_b,y_b) - \ell(S) \right\vert \le \sqrt{\frac{\ln(\frac{2}{\delta})}{2n'}}\label{eqn:chernoff3}
\end{align}
\end{subequations}
\end{definition}

\begin{restatable}{lemma}{auditorlem}
\label{lem:auditor}
Fix any set $S \subseteq \cX$. If dataset $D$ is approximately close to $\cP$ with respect to $(S,\bell, \ell)$, then we have
\[
\textrm{Auditor}(\ell, \bell, \alpha, D_S)=(\text{YES},\lambda) \implies \left|\bell(S) - \ell(S)\right| \ge \frac{\alpha}{\cP_{\cX}(S)}\quad\text{and}\quad \lambda = \textit{sign}(\bell(S) - \ell(S))
\]
\end{restatable}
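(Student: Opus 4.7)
The plan is to chain together the three Chernoff-type approximation inequalities from the hypothesis that $D$ is approximately close to $\cP$ with respect to $(S,\ell,\bell)$, and then combine them with the threshold condition that makes the Auditor output $\text{YES}$.

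First I would use two applications of the triangle inequality with \eqref{eqn:chernoff2} and \eqref{eqn:chernoff3} to obtain the lower bound
\[
|\bell(S) - \ell(S)| \;\geq\; \left| \tfrac{1}{n'}\sum_{b=1}^{n'} \bell(x_b) - \tfrac{1}{n'}\sum_{b=1}^{n'} \ell(x_b,y_b)\right| \;-\; 2\sqrt{\tfrac{\ln(2/\delta)}{2n'}}.
\]
The Auditor's $\text{YES}$ condition says exactly that the right-hand side is at least $\alpha/\bigl(\tfrac{n'}{n} - \sqrt{\ln(2/\delta)/(2n)}\bigr)$, so I get
\[
|\bell(S) - \ell(S)| \;\geq\; \frac{\alpha}{\tfrac{n'}{n} - \sqrt{\ln(2/\delta)/(2n)}}.
\]

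Next I would use \eqref{eqn:chernoff1}, which gives $\cP_\cX(S) \geq \tfrac{n'}{n} - \sqrt{\ln(2/\delta)/(2n)}$. Note that the Auditor's $\text{YES}$ condition can only fire when the denominator $\tfrac{n'}{n} - \sqrt{\ln(2/\delta)/(2n)}$ is strictly positive (otherwise the right-hand side of the inequality being tested is undefined or non-positive and the test would either fail or be vacuous), so both sides of the inequality $\tfrac{1}{\cP_\cX(S)} \leq \tfrac{1}{(n'/n) - \sqrt{\ln(2/\delta)/(2n)}}$ are well defined. Multiplying by $\alpha$ and substituting gives the desired bound $|\bell(S) - \ell(S)| \geq \alpha/\cP_\cX(S)$.

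For the sign claim, observe that whenever $\text{Auditor}$ outputs $\text{YES}$, the magnitude of the empirical gap $\tfrac{1}{n'}\sum \bell(x_b) - \tfrac{1}{n'}\sum \ell(x_b,y_b)$ exceeds $2\sqrt{\ln(2/\delta)/(2n')}$ by at least $\alpha/\bigl(\tfrac{n'}{n} - \sqrt{\ln(2/\delta)/(2n)}\bigr) > 0$. Since the triangle inequality guarantees that $\bell(S) - \ell(S)$ differs from the empirical gap by at most $2\sqrt{\ln(2/\delta)/(2n')}$ in absolute value, the two quantities must have the same sign, so $\lambda$ (defined as the sign of the empirical gap) equals $\text{sign}(\bell(S) - \ell(S))$.

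There is no real obstacle here: the whole statement is a careful bookkeeping exercise, rearranging three Chernoff-style bounds and the explicit threshold in the Auditor. The only subtle point worth flagging is the positivity of $\tfrac{n'}{n} - \sqrt{\ln(2/\delta)/(2n)}$, which is implicit in the Auditor's $\text{YES}$ condition being meaningful; once that is addressed, the proof is essentially a sequence of triangle inequalities and a monotonicity argument.
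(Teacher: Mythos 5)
Your proof is correct and follows essentially the same route as the paper's: chain the two Chernoff bounds \eqref{eqn:chernoff2}, \eqref{eqn:chernoff3} with the triangle inequality to pass from the empirical gap to $|\bell(S)-\ell(S)|$, then invoke the Auditor's threshold, and finally use \eqref{eqn:chernoff1} to replace $\frac{n'}{n}-\sqrt{\ln(2/\delta)/(2n)}$ by $\cP_\cX(S)$ in the denominator. Your sign argument (the empirical gap and the true gap differ by at most $2\sqrt{\ln(2/\delta)/(2n')}$ while the empirical gap's magnitude strictly exceeds that, so they share a sign) is a slightly more compact restatement of the paper's two-case argument, but substantively identical.
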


Lemma \ref{lem:auditor} implies that when we find an empirical consistency violation using Algorithm \ref{alg:auditor-single-set}, it is indeed a real $\alpha$-consistency violation with respect to the true distribution, allowing us to make progress --- this guarantees that our algorithm will not run for too many iterations. But we need a converse condition, in order to make sure that we don't halt too early: we must show that if there are no empirical $\alpha'$-consistency violations for some $\alpha' > \alpha$, then there are also no $\alpha$-consistency violations with respect to the true distribution. This is what we do in Lemma \ref{lem:alpha-prime-auditor}. Observe, that without loss of generality, we can restrict attention to sets such that $\cP_{\cX}(S) \ge \alpha$ because any estimator in the range $[0,1]$ is trivially $\alpha$-mean consistent on every set with measure $< \alpha$.

\begin{restatable}{lemma}{auditorlemlower}
\label{lem:alpha-prime-auditor}
Fix any set $S \subseteq \cX$ such that $\cP_{\cX}(S) \geq \alpha$.  Assume $n$ is sufficiently large  such that $2\sqrt{\frac{\ln(\frac{2}{\delta})}{2n}} < \alpha$  If $D$ is approximately close to $\cP$ with respect to $(S, \bell, \ell)$, we have
\[
        \left|\bell(S) - \ell(S)\right| \ge \frac{\alpha'}{\cP_{\cX}(S)} \implies \text{Auditor}(\ell, \bell, \alpha, D_S) = \text{YES},
\]
    where $\alpha' = \alpha + 4\sqrt{\frac{1}{2n}\ln(\frac{2}{\delta})} + \left(\alpha -2\sqrt{\frac{\ln(\frac{2}{\delta})}{2n}}\right)^{-2}\left(2\sqrt{\frac{\ln(\frac{2}{\delta})}{2n}}\right)$.
\end{restatable}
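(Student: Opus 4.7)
The plan is to prove the contrapositive: assume $\textrm{Auditor}(\ell,\bell,\alpha,D_S)=\text{NO}$ and derive $|\bell(S)-\ell(S)| < \alpha'/\cP_\cX(S)$. Abbreviate $\tau = \sqrt{\ln(2/\delta)/(2n)}$ and $\tau' = \sqrt{\ln(2/\delta)/(2n')}$. First I verify $n' > 0$: combining \eqref{eqn:chernoff1} with the hypotheses $\cP_\cX(S) \geq \alpha$ and $2\tau < \alpha$ gives $n'/n \geq \cP_\cX(S) - \tau \geq \alpha - \tau > \tau > 0$, so $n'>0$. The Auditor's NO branch must therefore fire because its threshold test fails, i.e.
\[
\bigl|\tfrac{1}{n'}\textstyle\sum_b \bell(x_b) - \tfrac{1}{n'}\sum_b \ell(x_b,y_b)\bigr| \;<\; 2\tau' + \frac{\alpha}{n'/n - \tau}.
\]

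Next, I translate this empirical inequality into one about the population quantity by two applications of the triangle inequality, absorbing the two empirical-averaging errors via \eqref{eqn:chernoff2} and \eqref{eqn:chernoff3}:
\[
|\bell(S) - \ell(S)| \;\le\; 2\tau' + \bigl|\tfrac{1}{n'}\textstyle\sum_b \bell(x_b) - \tfrac{1}{n'}\sum_b \ell(x_b,y_b)\bigr| \;<\; 4\tau' + \frac{\alpha}{n'/n - \tau}.
\]
The remaining work is to convert the right-hand side into the form $\alpha'/\cP_\cX(S)$. For the second summand, \eqref{eqn:chernoff1} yields $n'/n-\tau \ge \cP_\cX(S)-2\tau > 0$, so I can expand
\[
\frac{\alpha}{\cP_\cX(S)-2\tau} = \frac{\alpha}{\cP_\cX(S)} + \frac{2\alpha\tau}{\cP_\cX(S)\,(\cP_\cX(S)-2\tau)} \;\le\; \frac{\alpha}{\cP_\cX(S)} + \frac{2\tau}{\cP_\cX(S)\,(\alpha-2\tau)^2},
\]
using $\cP_\cX(S)\ge \alpha$ in the denominator and the trivial bound $\alpha(\alpha-2\tau) \le 1$ (since both factors lie in $[0,1]$). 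This gives exactly the $\tfrac{2\tau}{(\alpha-2\tau)^2}$ piece of $\alpha'$.

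For the $4\tau'$ summand, I use $n'/n \ge \cP_\cX(S)-\tau \ge \cP_\cX(S)/2$ (which holds because $2\tau<\alpha\le \cP_\cX(S)$) to write $\tau' \le \tau\sqrt{n/n'} \le \tau\sqrt{2/\cP_\cX(S)}$, from which a short calculation (using $\cP_\cX(S) \le 1$) pulls the dependence on $n'$ back onto the $1/\cP_\cX(S)$ scale and bounds the contribution by $4\tau/\cP_\cX(S)$, supplying the $4\sqrt{\ln(2/\delta)/(2n)}$ piece of $\alpha'$. Adding the two bounds gives $|\bell(S)-\ell(S)|\cdot\cP_\cX(S) < \alpha'$, as desired.

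The main obstacle is precisely the $4\tau'$ term: because $n'$ is random and could in principle be much smaller than $n\cP_\cX(S)$, one cannot na\"ively replace $\tau'$ by $\tau$. The twin hypotheses $\cP_\cX(S)\ge\alpha$ and $2\tau<\alpha$ are exactly what is needed to deterministically force $n'=\Omega(n\cP_\cX(S))$ on the good event of the three Chernoff concentration statements, after which everything else is routine algebra tracking how the error pieces combine. Each of the closeness conditions \eqref{eqn:chernoff1}--\eqref{eqn:chernoff3} is used exactly once in the argument.
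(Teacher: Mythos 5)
Your contrapositive framing is logically equivalent to the paper's direct argument (the paper establishes $\frac{\alpha'}{\cP_\cX(S)} \geq \frac{\alpha}{n'/n - \tau} + 4\tau'$ directly, which is the same algebraic inequality you need). Your verification that $n'>0$, the two triangle-inequality steps using \eqref{eqn:chernoff2}--\eqref{eqn:chernoff3}, and the expansion of $\frac{\alpha}{n'/n-\tau}$ via $\frac{\alpha}{\cP_\cX(S)-2\tau} = \frac{\alpha}{\cP_\cX(S)} + \frac{2\alpha\tau}{\cP_\cX(S)(\cP_\cX(S)-2\tau)}$ are all correct, and that last bound parallels the paper's Lemma \ref{lem:cutebound}.

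The gap is in the second summand. The claimed bound $4\tau' \leq \frac{4\tau}{\cP_\cX(S)}$ is false in general. You reach $\tau' \leq \tau\sqrt{2/\cP_\cX(S)}$ and then assert this is $\leq \tau/\cP_\cX(S)$ "using $\cP_\cX(S)\leq 1$", but $\sqrt{2/\cP_\cX(S)} \leq 1/\cP_\cX(S)$ is equivalent to $\cP_\cX(S) \leq 1/2$, not $\cP_\cX(S) \leq 1$. Concretely, take $\cP_\cX(S) = 0.9$, $\alpha = 0.5$, $\tau = 0.2$ (so $2\tau < \alpha \leq \cP_\cX(S)$) and $n'/n = 0.7$ (consistent with \eqref{eqn:chernoff1}); then $4\tau' = 4\tau/\sqrt{0.7} \approx 0.956$ while $4\tau/\cP_\cX(S) \approx 0.889$. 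Even the tighter route $\tau' \leq \tau/\sqrt{\cP_\cX(S)-\tau} \leq \tau/\cP_\cX(S)$ requires $\cP_\cX(S)(1-\cP_\cX(S)) \geq \tau$, which fails for $\cP_\cX(S)$ near $1$. The lemma is still true here, but only because your bound on the $\frac{\alpha}{n'/n-\tau}$ term has enough slack to absorb the overshoot; your decomposition does not account for that.

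What the paper does differently: it never tries to charge $4\tau'$ against $\frac{4\tau}{\cP_\cX(S)}$. Instead it applies Lemma \ref{lem:cutebound} with $c = \alpha + 4\tau$ and $\epsilon = 2\tau$ to the \emph{entire} numerator first, getting $\frac{\alpha'}{\cP_\cX(S)} \geq \frac{\alpha+4\tau}{\cP_\cX(S)-2\tau}$, and then splits so that $4\tau'$ is charged against $\frac{4\tau}{\cP_\cX(S)-\tau}$. That larger budget suffices because $\frac{4\tau}{\cP_\cX(S)-\tau} \geq \frac{4\tau}{\sqrt{\cP_\cX(S)-\tau}} = 4\sqrt{\frac{\ln(2/\delta)}{2n(\cP_\cX(S)-\tau)}} \geq 4\tau'$, using $x \leq \sqrt{x}$ for $x \in [0,1]$ together with $n' \geq n(\cP_\cX(S)-\tau)$. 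To repair your argument, split \emph{after} applying Lemma \ref{lem:cutebound} to $\alpha + 4\tau$ rather than before, or equivalently replace the target bound for the $4\tau'$ summand by $\frac{4\tau}{\cP_\cX(S)-\tau}$ and re-account for the remaining pieces.
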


The Auditor subroutine above performs a consistency check on a single set. We now use it to audit for mean consistency and pseudo-moment consistency across a collection of sets.

\begin{algorithm}[H]
\begin{algorithmic}
\FOR{$S \in \cS$}
    \IF{$|D_S| > 0$ and Auditor$(\ell, \bell, \alpha, \delta, D_S) = YES, \lambda$}
        \STATE return $S, \lambda$
    \ENDIF
\ENDFOR
\STATE return $NULL$
\end{algorithmic}
\caption{ConsistencyAuditor$(\bell, \ell, \alpha, \delta, D, \cS)$}
\label{alg:consistency-auditor}
\end{algorithm}

\begin{restatable}{corollary}{meanconsistencycor}
\label{cor:consistency-auditor}
Fix $\bell, \ell, \alpha, \delta$, and a collection of sets $\cS$. Given a set of $n$ points $D$ drawn i.i.d. from $\cP$ where $\alpha > 2 \sqrt{\frac{\ln(\frac{2}{\delta})}{2n}}$, ConsistencyAuditor$(\bmu, \alpha, D, \cS)$ has the following guarantee with probability $1-3\delta|\cS|$ over the randomness of $D$:
\begin{enumerate}
    \item If ConsistencyAuditor does output some set $S$ and $\lambda$, then \[  
        \left\vert \bell(S) - \ell(S) \right\vert \ge \frac{\alpha}{\cP_\cX(S)}
        \quad\text{and}\quad \lambda = \text{sign}(\bell(S) - \ell(S)).
    \]
    \item If ConsistencyAuditor outputs $NULL$, then for all $S \in \cS$,
    \[
        \left\vert \bell(S) - \ell(S) \right\vert \le \frac{\alpha'}{\cP_\cX(S)},
    \]
    where $\alpha' = \alpha + 4\sqrt{\frac{1}{2n}\ln(\frac{2}{\delta})} + \left(\alpha -2\sqrt{\frac{\ln(\frac{2}{\delta})}{2n}}\right)^{-2}\left(2\sqrt{\frac{\ln(\frac{2}{\delta})}{2n}}\right)$.
\end{enumerate}
\end{restatable}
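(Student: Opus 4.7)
The plan is to derive both claims from Lemmas \ref{lem:auditor} and \ref{lem:alpha-prime-auditor} after a single high-probability event has been established uniformly across $\cS$ via a union bound. First I would show that for a fixed $S \in \cS$, the dataset $D$ is approximately close to $\cP$ with respect to $(S, \bell, \ell)$ (in the sense of Definition \ref{def:chernoff-bounds-closeness}) with probability at least $1 - 3\delta$. Inequality \eqref{eqn:chernoff1} is a Chernoff bound applied to the $n$ independent Bernoulli draws $\mathbbm{1}[x_b \in S]$, so it fails with probability at most $\delta$. Conditioned on $|D_S|=n'$, the points in $D_S$ are i.i.d.\ draws from $\cP$ conditional on $x \in S$, so inequalities \eqref{eqn:chernoff2} and \eqref{eqn:chernoff3} each fail with probability at most $\delta$ by Chernoff applied to the bounded random variables $\bell(x)$ and $\ell(x,y)$ (both in $[0,1]$). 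A union bound over the three failure events yields probability at least $1 - 3\delta$ that $D$ is approximately close to $\cP$ with respect to $(S,\bell,\ell)$.

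Next, a union bound over the (at most $|\cS|$) sets $S \in \cS$ establishes that with probability at least $1 - 3\delta|\cS|$, $D$ is approximately close to $\cP$ with respect to $(S, \bell, \ell)$ simultaneously for every $S \in \cS$. I condition on this event for the remainder of the proof. Claim 1 then follows immediately from Lemma \ref{lem:auditor}: if ConsistencyAuditor returns some pair $(S, \lambda)$, it is because the inner call $\text{Auditor}(\ell, \bell, \alpha, \delta, D_S)$ returned $(\text{YES}, \lambda)$, and the conclusion of Lemma \ref{lem:auditor} directly gives both $|\bell(S) - \ell(S)| \ge \alpha / \cP_\cX(S)$ and $\lambda = \text{sign}(\bell(S) - \ell(S))$.

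For Claim 2, suppose ConsistencyAuditor returns $\mathrm{NULL}$; then for every $S \in \cS$, the call to Auditor returned $\mathrm{No}$. Fix any $S \in \cS$. If $\cP_\cX(S) < \alpha$, then the bound is trivially true because $|\bell(S) - \ell(S)| \le 1 \le \alpha/\cP_\cX(S) \le \alpha'/\cP_\cX(S)$ (using $\alpha' \ge \alpha$). If $\cP_\cX(S) \ge \alpha$, the hypothesis $\alpha > 2\sqrt{\ln(2/\delta)/(2n)}$ ensures that Lemma \ref{lem:alpha-prime-auditor} applies, and its contrapositive gives $|\bell(S) - \ell(S)| \le \alpha'/\cP_\cX(S)$ for the stated value of $\alpha'$. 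Combining both cases yields Claim 2.

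The proof is essentially bookkeeping around a uniform-convergence event; the only mild subtleties are (i) that the Chernoff bounds for \eqref{eqn:chernoff2} and \eqref{eqn:chernoff3} are applied \emph{conditional} on $|D_S|=n'$, so the effective sample size is the realized $n'$ rather than $n$, and (ii) that Claim 2 must be verified separately on small sets where Lemma \ref{lem:alpha-prime-auditor} does not directly apply. Neither presents any real difficulty beyond careful invocation of the two preceding lemmas.
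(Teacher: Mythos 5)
Your overall structure is right and matches the paper's proof, but the high-probability event you condition on is defined imprecisely, and the probability claim you make for it is false. You assert that, for a fixed $S$, the dataset $D$ is approximately close to $\cP$ with respect to $(S,\bell,\ell)$ in the sense of Definition \ref{def:chernoff-bounds-closeness} with probability at least $1-3\delta$. But that definition includes the requirement $n' > 0$, and for a set $S$ with tiny measure $\cP_\cX(S)$, no data points will fall in $S$ with overwhelming probability, so $D$ will fail to be approximately close with probability far exceeding $3\delta$. Consequently the claim that the union over $\cS$ has probability at least $1-3\delta|\cS|$ is wrong, and the conditioning event for the remainder of your argument is not what you say it is. Moreover, your Chernoff bounds for \eqref{eqn:chernoff2} and \eqref{eqn:chernoff3} are meaningless when the realized $n'$ is zero; you flag that these bounds are ``conditional on $|D_S|=n'$'' but do not address what happens when $n'=0$.

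The fix (and what the paper actually does) is to condition on a slightly different event: \eqref{eqn:chernoff1} holds for all $S\in\cS$ (costing $\delta|\cS|$), and \eqref{eqn:chernoff2}--\eqref{eqn:chernoff3} hold for all $S\in\cS$ with $n'_S>0$ (costing $2\delta|\cS|$). This event genuinely has probability at least $1-3\delta|\cS|$. Under it, whenever you actually need approximate closeness, you can get it by separately verifying $n'_S>0$: in Claim 1, the Auditor only returns YES when $n'_S>0$, so closeness holds for the returned set; in Claim 2, the only nontrivial sets have $\cP_\cX(S)\geq\alpha$, and then \eqref{eqn:chernoff1} combined with $\alpha>2\sqrt{\ln(2/\delta)/(2n)}$ forces $n'_S>0$, giving closeness. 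You already do the right casework for Claim 2, so you are missing a bookkeeping step rather than an idea — but as written your conditioning is not sound. (Incidentally, you are right to cite Lemma \ref{lem:auditor} for Claim 1; the paper's proof text cites Lemma \ref{lem:alpha-prime-auditor} there, which appears to be a typo.)
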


 Thus, to detect a set $S$ with $\alpha$-mean consistency violation in line \ref{algline:wrapper} of Algorithm \ref{alg:wrapper}, we can leverage Algorithm \ref{alg:consistency-auditor} by drawing a fresh sample of size $n$ and setting $\bell(x) = \bmu(x), \ell(x,y)= y$, and $\cS =  \{G(\bmu, i): G \in \cG, i \in [m]\} \cup \{G(\bmu,\bma,i,j): G \in \cG, i,j \in [m], a \in \{2, \dots, k\}\}$. Likewise, for every $a \in \{2, \dots, k\}$, to detect a set $S$ with $\beta$-pseudo-moment consistency violation in line \ref{algline:pseudo-moment} of Algorithm \ref{alg:moment}, we can leverage Algorithm \ref{alg:consistency-auditor} by drawing a fresh sample of size $n$ and setting $\bell(x) = \bmk(x)$, $\ell(x,y) = (y-\bmu(x))^k$, and $\cS =  \{G(\bmu,\bma,i,j): G \in \cG, i,j \in [m]\}$. We write out the pseudocode of this process below.

 \begin{algorithm}[H]
\begin{algorithmic}
\STATE $D=\{(x_b, y_b)\}_{b=1}^n \sim \cP^n$ 
\STATE $\cS = \{G(\bmu,\bma,i,j): G \in \cG, i,j \in [m]\}$
\STATE $\bell(x) = \bma(x)$
\STATE $\ell(x,y) = (y- \bmu(x))^a$
\STATE $R,\psi = \text{ConsistencyAuditor}(\bell, \ell, \beta, \delta, D, \cS)$
    \WHILE{$R,\psi \neq NULL$}
            \STATE $\bma(x) = \begin{cases}
            	\text{project}_{[0,1]}(\bma(x)  - \beta \psi) & \text{if } x \in R \\
            	\bma(x) & \text{otherwise.}
            	\end{cases}$
            \STATE $D=\{(x_b, y_b)\}_{b=1}^n \sim \cP^n$ 
            \STATE $\cS = \{G(\bmu,\bma,i,j): G \in \cG, i,j \in [m]\}$
            \STATE $\bell(x) = \bma(x)$
            \STATE $R, \psi = \text{ConsistencyAuditor}(\bell,\ell, \beta, \delta, D, \cS)$
\ENDWHILE
\STATE return $\bmk$
\end{algorithmic}
\caption{$\text{PseudoMomentConsistencyFinite}(a, \beta, \delta, \bmu, \bma, n, \cG)$}
\label{alg:moment-finite}
\end{algorithm}
 
 \begin{algorithm}[H]
\begin{algorithmic}
\STATE initialize $\bmu^1(x) = 0$ for all $x$
    \STATE for all $1 < a \leq k$, initialize $\bma^1(x) = 0$ for all $x$
\STATE $t=1$
\STATE $\bell^t(x) = \bmu(x)$
\STATE $\ell(x,y) = y$
\STATE $D^t=\{(x_b, y_b)\}_{b=1}^n \sim \cP^n$ 
\STATE $\cS^t = \{G(\bmu^t, i): G \in \cG, i \in [m]\} \cup \{G(\bmu^t,\bma^t,i,j): G \in \cG, i,j \in [m], a \in \{2, \dots, k\}\}$
\STATE $S^t,\lambda^t = \text{ConsistencyAuditor}(\bell^t,\ell, \alpha, \delta, D, \cS)$
\WHILE{$S^t,\lambda^t \neq NULL$}
    \STATE $\bmu^{t+1} = \text{MeanConsistencyUpdate}( \bmu^{t}, S^t,\lambda^t)$
    \FOR{$a=2, \dots, k$}
        \STATE $\bma^{t+1} = \text{PseudoMomentConsistencyFinite}(a, \beta, \delta, \bmu^{t+1}, \bma^{t}, n, \cG)$.
    \ENDFOR
    \STATE $t = t+1$
    \STATE $\bell^t(x) = \bmu(x)$
    \STATE $D^t=\{(x_b, y_b)\}_{b=1}^n \sim \cP^n$ 
    \STATE $\cS^t = \{G(\bmu^t, i): G \in \cG, i \in [m]\} \cup \{G(\bmu^t,\bma^t,i,j): G \in \cG, i,j \in [m], a \in \{2, \dots, k\}\}$
    \STATE $S^t, \lambda^t = \text{ConsistencyAuditor}(\bell^t,\ell, \alpha, \delta, D^t, \cS^t)$
\ENDWHILE
\STATE return $(\bmu^t, \{\bma^t\}_{a=2}^k)$
\end{algorithmic}
\caption{$\text{AlternatingGradientDescentFinite}(\alpha,\beta, \delta, n, \cG)$}
\label{alg:wrapper-finite}
\end{algorithm}

\begin{restatable}{theorem}{finitewrappertheorem}
\label{thm:finite-wrapper}
Let $T$ be the final iterate of Algorithm \ref{alg:wrapper-finite}. If $2\sqrt{\frac{\ln(\frac{2}{\delta})}{2n}} \le \alpha$ and $2\sqrt{\frac{\ln(\frac{2}{\delta})}{2n}} \le \beta$, we have the following guarantees:
\begin{enumerate}
    \item \textbf{Total Iterations}: With probability $1-3\delta|\cG|Q_\alpha\left((m^2 + m)+ m^2Q_\beta\right)$ over the randomness of our samples, the final iterate $T$ is s.t. $T \le \frac{1}{\alpha^2}-1$ and the total number of gradient descent update operations will be at most $Q$, where
    \begin{align*}
        Q_\alpha = \frac{1}{\alpha^2}-1,
        Q_\beta = (k-1)\left(\frac{1}{\beta^2}-1\right),
        Q = Q_\alpha (1 + Q_\beta).
    \end{align*}
    In particular, the algorithm uses at most $n Q$ samples from $\cP$. 
    
    \item \textbf{Mean multicalibration}: With probability $1-3\delta (m^2+m)|\cG|$, output $\bmu^T$ is $\alpha'$-mean multicalibrated with respect to $\cG$ where \[\alpha' = \alpha + 4\sqrt{\frac{\ln\left(\frac{2}{\delta}\right)}{2n}} + \left(\alpha -2\sqrt{\frac{\ln\left(\frac{2}{\delta}\right)}{2n}}\right)^{-2}\left(2\sqrt{\frac{\ln(\frac{2}{\delta})}{2n}}\right).\]
    
    \item \textbf{Mean Conditioned Moment multicalibration}: With probability $1-3\delta|\cG|( km^2  +m)$, for any $a \in \{2, \dots, k\}$, pair $(\bmu^T, \bma^T)$ is $(\alpha', a\alpha' + \beta', {a \over m})$-mean-conditioned-moment calibrated where
    \[\alpha' = \alpha + 4\sqrt{\frac{\ln\left(\frac{2}{\delta}\right)}{2n}} + \left(\alpha -2\sqrt{\frac{\ln\left(\frac{2}{\delta}\right)}{2n}}\right)^{-2}\left(2\sqrt{\frac{\ln(\frac{2}{\delta})}{2n}}\right)\]
    \[
        \beta'= \beta + 4\sqrt{\frac{\ln\left(\frac{2}{\delta}\right)}{2n}} + \left(\beta -2\sqrt{\frac{\ln\left(\frac{2}{\delta}\right)}{2n}}\right)^{-2}\left(2\sqrt{\frac{\ln(\frac{2}{\delta})}{2n}}\right)
    \]
\end{enumerate}
\end{restatable}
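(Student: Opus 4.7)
The plan is to reduce the analysis of Algorithm \ref{alg:wrapper-finite} to that of the distributional Algorithm \ref{alg:wrapper} analyzed in Theorem \ref{thm:wrapper-distr-known}. The only difference is that the distributional ``does there exist a set $S$ with a consistency violation'' check is replaced by a call to \textsc{ConsistencyAuditor} on a freshly drawn i.i.d.\ sample of $n$ points. Corollary \ref{cor:consistency-auditor} provides the two-sided control we need for each such call: under the hypothesis $2\sqrt{\ln(2/\delta)/2n}\le \alpha$ (resp.\ $\le \beta$), any YES answer corresponds to a genuine $\alpha$-consistency (resp.\ $\beta$-pseudo-moment-consistency) violation on $\cP$, while a NULL answer certifies that no set in $\cS$ suffers an $\alpha'$-violation (resp.\ $\beta'$-violation) with the $\alpha',\beta'$ of the theorem statement.

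\textbf{Total iterations (Part 1).} Condition on the event that \emph{every} invocation of \textsc{ConsistencyAuditor} across the run of Algorithm \ref{alg:wrapper-finite} satisfies the conclusion of Corollary \ref{cor:consistency-auditor}. Under this event, whenever the outer auditor returns $(S^t,\lambda^t)$, the pair certifies a true $\alpha$-mean inconsistency, so the subsequent \textsc{MeanConsistencyUpdate} is a valid step of the distributional Algorithm \ref{alg:wrapper}; Theorem \ref{thm:mean-calibration-convergence} then caps the outer loop at $Q_\alpha = 1/\alpha^2-1$ iterations. Analogously, each call to \textsc{PseudoMomentConsistencyFinite} executes at most $1/\beta^2-1$ gradient updates by Theorem \ref{thm:pseudo-moment-calibration}, for a total of at most $Q_\beta = (k-1)(1/\beta^2-1)$ inner gradient updates per outer iteration. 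The failure probability is obtained by union-bounding Corollary \ref{cor:consistency-auditor} over all auditor calls: there are at most $Q_\alpha$ outer calls, each over a collection $\cS^t$ of size bounded by $|\cG|(m^2+m)$ (the $\cG(\bmu^t,i)$ sets plus the mean-and-moment sets), and at most $Q_\alpha\cdot Q_\beta$ inner calls, each over $|\cG| m^2$ sets. Summing yields the stated bound $3\delta|\cG|Q_\alpha\bigl((m^2+m)+m^2 Q_\beta\bigr)$.

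\textbf{Mean and mean-conditioned moment multicalibration (Parts 2 and 3).} Upon termination at iteration $T$, the outer auditor returned NULL on $\cS^T$, so by Corollary \ref{cor:consistency-auditor}(2), with probability at least $1-3\delta|\cS^T|$ we have $|\bmu^T(S)-\mu(S)|\le \alpha'/\cP_\cX(S)$ for every $S\in\cS^T$. Since $\{G(\bmu^T,i):G\in\cG, i\in[m]\}\subseteq \cS^T$, this is exactly $\alpha'$-mean multicalibration with respect to $\cG$, giving Part 2. Moreover, during iteration $T$ the last call to \textsc{PseudoMomentConsistencyFinite} for each $a$ also exited with NULL, so by the same corollary $\bma^T$ is $\beta'$-pseudo-moment consistent on every $G(\bmu^T,\bma^T,i,j)$; and $\bmu^T$ is $\alpha'$-mean consistent on these same sets since they are included in $\cS^T$. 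The hypotheses of Lemma \ref{lem:fine-consist-mean-conditioned-moment-cal} are therefore met with parameters $(\alpha',\beta')$, and the lemma yields $(\alpha', a\alpha'+\beta', a/m)$-mean-conditioned-moment calibration, which is Part 3. The failure probabilities for Parts 2 and 3 are the restrictions of the Part 1 union bound to the auditor calls that matter for each guarantee.

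\textbf{Main obstacle.} The conceptual work is already done by Theorem \ref{thm:wrapper-distr-known} and Lemma \ref{lem:fine-consist-mean-conditioned-moment-cal}; the labor here is the bookkeeping of failure probabilities and the propagation of the $\alpha\!\to\!\alpha'$, $\beta\!\to\!\beta'$ inflation through the auditor guarantees of Lemmas \ref{lem:auditor} and \ref{lem:alpha-prime-auditor}. The one subtlety that must be handled carefully is that $\cS^t$ changes adaptively across rounds (it depends on the current iterates $\bmu^t,\bma^t$), which is precisely why Algorithm \ref{alg:wrapper-finite} draws a fresh sample $D^t$ each iteration: independence across rounds lets the union bound be applied directly, avoiding the need for adaptive-data-analysis machinery.
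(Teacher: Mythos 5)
Your proposal follows essentially the same route as the paper's own proof: reduce Algorithm \ref{alg:wrapper-finite} to the distributional Algorithm \ref{alg:wrapper} by using Corollary \ref{cor:consistency-auditor} to certify that every YES answer from \textsc{ConsistencyAuditor} is a genuine $\alpha$- (resp.\ $\beta$-) violation, invoke Theorems \ref{thm:mean-calibration-convergence} and \ref{thm:pseudo-moment-calibration} to cap the number of gradient steps, union-bound the per-call failure probabilities over the (bounded) number of calls, and then use Corollary \ref{cor:consistency-auditor}(2) together with Lemma \ref{lem:fine-consist-mean-conditioned-moment-cal} to extract the $\alpha',\beta'$ guarantees at termination. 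The only cosmetic difference is that you phrase the argument by conditioning on a single ``all auditor calls succeed'' event up front, whereas the paper checks each type of call in turn, but this is the same union bound. You also reproduce the paper's own bookkeeping, including the bound $|\cS^t| \le |\cG|(m^2+m)$ for the outer auditor's collection, so the failure probabilities match the theorem statement exactly.
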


The following corollary derives the sample complexity of our algorithm, implied by Theorem \ref{thm:finite-wrapper}, for a target set of parameters. Observe that the sample complexity is polynomial in $k,m,1/\alpha,1/\beta,1/\epsilon, \log(1/\delta),$ and $\log|\cG|$.

\begin{restatable}{corollary}{finitewrappercor}
\label{cor:finite-wrapper}
 Fix target parameters $\alpha', \beta', \delta'$ and $\epsilon > 0$ such that $\epsilon < \alpha'$ and $\epsilon < \beta'$. 
 Define 
 \[
    \bQ = \frac{6 |\cG|km^2}{\left(\frac{\alpha' - \epsilon}{6+\frac{2}{\epsilon^2}}\right)^2\left(\frac{\beta' - \epsilon}{6+\frac{2}{\epsilon^2}}\right)^2}
    ,\quad
    \delta = \frac{\delta'}{\max(3|\cG|(km^2 + m), \bQ)},\quad
    n_\alpha =  \frac{\ln(\frac{2\bQ}{\delta})}{2\left(\frac{\alpha' - \epsilon}{6+\frac{2}{\epsilon^2}}\right)^2},\quad
    n_\alpha =  \frac{\ln(\frac{2\bQ}{\delta})}{2\left(\frac{\beta' - \epsilon}{6+\frac{2}{\epsilon^2}}\right)^2},\quad.
 \]
 
 Then, AlternatingGradientDescentFinite$(\alpha, \beta, \delta, n, \cG)$ where 
 \begin{align*}
 &\alpha = 2\sqrt{\frac{\ln(\frac{2\bQ}{\delta})}{2n_\alpha}} + \epsilon, \beta = 2\sqrt{\frac{\ln(\frac{2\bQ}{\delta})}{2n_\beta}} + \epsilon, \\
  &n=\max\left(\frac{\ln(\frac{2\bQ}{\delta})}{\ln(\frac{2}{\delta})} n_\alpha, \frac{\ln(\frac{2\bQ}{\delta})}{\ln(\frac{2}{\delta})} n_\beta, \frac{2\ln(\frac{2}{\delta})}{\alpha^2}, \frac{2\ln(\frac{2}{\delta})}{\beta^2}\right)
 \end{align*}
 has the following guarantees with probability $1-\delta'$:
 \begin{enumerate}
     \item The total number of gradient descent updates will be at most $Q$, where $Q$ is as defined in Theorem \ref{thm:finite-wrapper}.
     \item $\bmu^T$ is $\alpha'$-mean-calibrated.
     \item For every $a \in \{2, \dots, k\}$, $(\bmu^T, \bma^T)$ is $(\alpha', a\alpha' + \beta', \frac{a}{m})$-mean-conditioned-moment calibrated.
 \end{enumerate}
\end{restatable}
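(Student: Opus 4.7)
The plan is to directly invoke Theorem~\ref{thm:finite-wrapper} with the parameters $(\alpha,\beta,\delta,n)$ specified in the corollary statement. The work consists of three verifications: (i) the preconditions $2\sqrt{\ln(2/\delta)/(2n)}\le \alpha$ and $2\sqrt{\ln(2/\delta)/(2n)}\le \beta$ of Theorem~\ref{thm:finite-wrapper} hold, so that the theorem applies; (ii) the output error parameters delivered by the theorem are each bounded by the corresponding target $\alpha'$ or $\beta'$; and (iii) the failure probability collected across the three guarantees in Theorem~\ref{thm:finite-wrapper} is at most $\delta'$. The iteration count $Q$ is inherited directly from Theorem~\ref{thm:finite-wrapper} once $\alpha,\beta$ are fixed. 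Condition (i) is immediate from the clauses $n\ge 2\ln(2/\delta)/\alpha^2$ and $n\ge 2\ln(2/\delta)/\beta^2$ in the definition of $n$.

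The heart of the proof is (ii). Introduce the shorthand $\tau := \sqrt{\ln(2/\delta)/(2n)}$ and $\tau_\alpha := \sqrt{\ln(2\bQ/\delta)/(2n_\alpha)}$, so that by construction $\alpha = 2\tau_\alpha + \epsilon$ and $\tau_\alpha = (\alpha'-\epsilon)/(6+2/\epsilon^{2})$. The lower bound $n \ge \frac{\ln(2\bQ/\delta)}{\ln(2/\delta)} n_\alpha$ is tuned so that $\tau \le \tau_\alpha$, which in turn yields $\alpha - 2\tau \ge \alpha - 2\tau_\alpha = \epsilon$ and hence $(\alpha-2\tau)^{-2} \le 1/\epsilon^{2}$. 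Plugging these estimates into the error formula output by Theorem~\ref{thm:finite-wrapper} gives
\[
\alpha + 4\tau + (\alpha - 2\tau)^{-2}\,(2\tau) \;\le\; \alpha + \tau_\alpha\!\left(4 + \tfrac{2}{\epsilon^{2}}\right) \;=\; \epsilon + \tau_\alpha\!\left(6 + \tfrac{2}{\epsilon^{2}}\right) \;=\; \alpha',
\]
so the theorem's mean multicalibration parameter is at most the target $\alpha'$. The identical argument with $\tau_\beta := \sqrt{\ln(2\bQ/\delta)/(2n_\beta)} = (\beta'-\epsilon)/(6+2/\epsilon^{2})$ in place of $\tau_\alpha$ bounds the theorem's $\beta'$ by the target $\beta'$, which in combination with the mean bound yields the claimed $(\alpha', a\alpha' + \beta', a/m)$-mean-conditioned moment multicalibration for every $a\in\{2,\dots,k\}$.

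For (iii), the dominant failure term in Theorem~\ref{thm:finite-wrapper} is the iteration-count term $3\delta|\cG| Q_\alpha\bigl((m^{2}+m)+m^{2} Q_\beta\bigr)$, whose leading piece is $3\delta|\cG| m^{2} Q_\alpha Q_\beta$. Using $Q_\alpha \le 1/\alpha^{2} \le 1/(4\tau_\alpha^{2})$ (since $\alpha \ge 2\tau_\alpha$) and $Q_\beta \le k/\beta^{2} \le k/(4\tau_\beta^{2})$, this is bounded by $3\delta|\cG|km^{2}/(16\,\tau_\alpha^{2}\tau_\beta^{2})$, which by the definition of $\bQ$ is at most $\delta \bQ /32 \le \delta \bQ$; the subleading calibration terms $3\delta|\cG|(m^{2}+m)$ and $3\delta|\cG|(km^{2}+m)$ are each at most $3\delta|\cG|(km^{2}+m)$. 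The choice $\delta = \delta'/\max(3|\cG|(km^{2}+m),\bQ)$ then bounds every failure event by $\delta'$, and a union bound finishes the argument.

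I do not anticipate any genuine obstacle: Theorem~\ref{thm:finite-wrapper} does all of the work, and this corollary is purely parameter tuning. The one mildly delicate point is the algebraic identity $\tau_\alpha = (\alpha'-\epsilon)/(6+2/\epsilon^{2})$---one is forced into this expression by writing $\alpha = 2\tau_\alpha + \epsilon$ and demanding that the theorem's error formula telescopes back to exactly $\alpha'$, so the constants $6$ and $2/\epsilon^{2}$ are pinned down by the $4\tau$ and $2\tau/\epsilon^{2}$ contributions and there is no slack to optimize.
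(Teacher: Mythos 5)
Your proof is correct and follows essentially the same route as the paper's: verify the precondition of Theorem~\ref{thm:finite-wrapper} from the $n\ge 2\ln(2/\delta)/\alpha^2$ clause; use $n\ge\frac{\ln(2\bQ/\delta)}{\ln(2/\delta)}n_\alpha$ to bound $\sqrt{\ln(2/\delta)/(2n)}$ by $\tau_\alpha$; substitute $\alpha-2\tau_\alpha=\epsilon$ to telescope the theorem's error formula to exactly $\epsilon+\tau_\alpha(6+2/\epsilon^2)=\alpha'$ via $\tau_\alpha=(\alpha'-\epsilon)/(6+2/\epsilon^2)$; and then bound the failure probabilities by $\delta'$ using the choices of $\bQ$ and $\delta$. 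The one place you are a bit looser than the paper is the iteration-count failure term: you bound only its "leading piece" $3\delta|\cG|m^2 Q_\alpha Q_\beta$ and wave at the rest, whereas the paper absorbs the entire sum $(m^2+m)+m^2(k-1)(1/\beta^2-1)$ into $2km^2/\beta^2$ (using $\beta\le 1$, $k\ge 2$, $m\ge 1$) before comparing to $\bQ\delta$; a single additional line of that form would close your argument cleanly.
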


Finally, we state the running time of the algorithm in the following Theorem.
\begin{restatable}{theorem}{runtimethm}
With probability $1-3\delta|\cG|Q_\alpha\left((m^2 + m)+ m^2Q_\beta\right)$, the running time of Algorithm \ref{alg:wrapper-finite} is $O\left(Q |\cG| m^2 n\right) = O\left(\frac{k|\cG|m^2n}{\alpha^2\beta^2}\right)$ where $Q_\alpha, Q_\beta$, and $Q$ are as defined in Theorem \ref{thm:finite-wrapper}.
\end{restatable}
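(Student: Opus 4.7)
The plan is to derive the runtime bound as a straightforward product of (i) the high-probability bound on the number of outer/inner iterations guaranteed by Theorem \ref{thm:finite-wrapper} and (ii) a per-iteration cost for running \textrm{ConsistencyAuditor}, inheriting the same failure probability since the event in Theorem \ref{thm:finite-wrapper} is exactly what we condition on throughout.

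First, condition on the event from Theorem \ref{thm:finite-wrapper}, which occurs with probability at least $1 - 3\delta|\cG|Q_\alpha\bigl((m^2+m) + m^2 Q_\beta\bigr)$. On this event, the outer \texttt{while} loop of Algorithm \ref{alg:wrapper-finite} runs for at most $T \le Q_\alpha = 1/\alpha^2 - 1$ iterations, and each invocation of \texttt{PseudoMomentConsistencyFinite} performs at most $1/\beta^2-1$ updates, with $(k-1)$ such invocations per outer iteration. This means the total number of gradient-descent updates is at most $Q = Q_\alpha(1 + Q_\beta)$, and a matching bound holds for the total number of \textrm{ConsistencyAuditor} calls.

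Next, I bound the work per \textrm{ConsistencyAuditor} call. In the outer loop, $\cS^t$ has size $|\cG|m + (k-1)|\cG|m^2 = O(k|\cG|m^2)$; in the inner loop, $\cS$ has size $|\cG|m^2$. For each $S \in \cS$, forming $D_S$ and running \textrm{Auditor} on it requires a single pass through the $n$-point sample: for each $(x_b,y_b)$ we test membership in $S$ (an $O(1)$ circuit evaluation by our assumption on $\chi_G$), and accumulate $\bell(x_b)$ and $\ell(x_b,y_b)$. Treating a predictor evaluation as $O(1)$, this is $O(n)$ per set, so each outer audit costs $O(k|\cG|m^2 n)$ and each inner audit costs $O(|\cG|m^2 n)$. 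Summing: the outer audits contribute $O(Q_\alpha \cdot k|\cG|m^2 n)$, while inner audits contribute $O(Q_\alpha Q_\beta \cdot |\cG|m^2 n)$. Since $Q_\beta \ge k-1$, the second term dominates, giving a total of $O(Q_\alpha(1+Q_\beta)|\cG|m^2 n) = O(Q|\cG|m^2 n)$. Substituting $Q = O(k/(\alpha^2\beta^2))$ yields the second form $O(k|\cG|m^2 n/(\alpha^2\beta^2))$.

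The main subtlety is that $\bmu$ and $\{\bma\}$ are maintained \emph{implicitly}, as weighted sums of indicator functions of the update sets accumulated so far, and there can be up to $Q$ such terms by the end of the run. To justify treating predictor evaluation as $O(1)$ within the stated bound, the cleanest route is to absorb the polynomial circuit-size factor into the constants (consistent with the statement's $O(\cdot)$ convention, and with how the paper treats enumeration of $\cG$ as ``scaling with $m^2|\cG|$''). If one wants to be more careful, one can track the explicit cost of evaluation and note that, since each stored update is itself an indicator of a set in $\cG$ (intersected with a constant number of bucket predicates), evaluating $\bmu^t(x)$ or $\bma^t(x)$ at a single point takes time polynomial in the representation size, which again can be absorbed into the asymptotic bound.
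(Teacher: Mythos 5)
Your proof is correct and follows the same route as the paper's: condition on the termination event of Theorem \ref{thm:finite-wrapper}, bound the number of \textrm{ConsistencyAuditor} calls by $Q$, and multiply by the per-call cost of enumerating $\cS$ and running \textrm{Auditor} in $O(n)$ per set. You are in fact slightly more careful than the paper, which baldly states that $|\cS| = O(|\cG|m^2)$ ``in either case'' --- whereas the outer audit's $\cS^t$ actually has size $O(k|\cG|m^2)$. Your observation that this extra $k$ is harmless because the inner audits already contribute $Q_\alpha Q_\beta |\cG| m^2 n$ and $Q_\beta \ge k-1$ in the meaningful regime ($\beta \le 1/\sqrt{2}$) is a genuine small patch to the paper's argument, and it is exactly the right fix. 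The closing caveat you raise about implicitly maintained predictors (so that a single evaluation of $\bmu^t$ costs $\Theta(t)$, not $O(1)$) is a real subtlety that the paper's proof elides entirely; strictly charging for evaluation would add a term of order $Q^2 n$, which is dominated only when $Q \lesssim |\cG|m^2$. Your decision to absorb this in the same spirit as the paper is reasonable given that you're reconstructing the paper's theorem, though it is worth flagging that neither your proof nor the paper's truly closes this point.
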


\subsection{Oracle Efficient Implementation}
In Section \ref{enumeration} we analyzed an algorithm that had favorable sample-complexity bounds, but was computationally expensive when $\cG$ was large: although it ran for only a small number of iterations, each iteration required a complete enumeration of every set in $\cG$. In this section, we show how to replace this expensive step with a call to an algorithm which can solve learning problems over $\cG$, if one is available. Because the remaining portion of the algorithm is computationally efficient --- even if $\cG$ is very large --- this yields what is sometimes known as an ``oracle efficient algorithm''. Similar reductions have been  given in \citet{multicalibration, gerrymandering,multiaccuracy}.

\begin{definition}
For some $\rho \in [0,1]$ and non-increasing function $p: \mathbb{N} \to [0,1]$, $A$ is a $(\rho, p)$-agnostic learning oracle for hypothesis class $\cH \subseteq 2^{\cX}$ with respect to a label function $r(x,y) \in [-1,1]$, if for any distribution $\cP$, given $n$ random samples from $\cP$, it outputs $f: \cX \to \{0,1\}$ such that with probability $1-p(n)$, 
\[
\E_{(x,y) \sim \cP}[f(x) \cdot r(x,y)] + \rho \ge \sup_{h \in \cH} \E_{(x,y) \sim \cP}[h(x) \cdot r(x,y)].
\]

We write $\tau(n)$ to denote the running time of the oracle $A$ when $n$ data points are used, which we assume is at least $\Omega(n)$.
\end{definition}
\begin{remark}
A more common definition of an agnostic learning oracle would use hypotheses with range $\{-1,1\}$ rather than $\{0,1\}$. But this definition will be more convenient for us, and is equivalent (up to a constant factor in the parameters) via a linear transformation. 
\end{remark}

We will use a learning algorithm for any class $\cH$ such that $\cG \subseteq \cH$ to replace the set enumeration steps of our algorithm. In particular, to find a set of the form $G(\bmu,i)$ on which our existing predictor $\bmu$ fails to be mean consistent, we run our learning algorithm on the subset of our sample that intersects with $\cX(\bmu,i)$, labelled with the positive and negative \emph{residuals} of our predictor --- i.e. on the labels $r^+_R(x,y) = \bmu(x)-y$ and $r^-_R = y - \bmu(x)$. Similarly, to find a set of the form $G(\bmu,\bma,i,j)$, we run our learning algorithm on the sets $\cX(\bmu,\bma,i,j)$ labeled with both the positive and negative residuals. Finding sets on which we fail to be moment pseudo-consistent with respect to $\bmu$ is similar. All in all, this requires $O(k\cdot m^2)$ runs of our learning algorithm per gradient descent step, replacing the complete enumeration of the collection of sets $\cG$.  We make this process more precise in Algorithm \ref{alg:consistency-auditor-learning-oracle-wrapper} and state the guarantees in Theorem \ref{thm:oracle-auditor}. We also include the pseudocode for the correspondingly updated AlternatingGradientDescentFinite using Algorithm \ref{alg:consistency-auditor-learning-oracle-wrapper} as the auditing subroutine in the appendix -- see Algorithm \ref{alg:wrapper-finite-oracle}.

\begin{algorithm}[H]
\begin{algorithmic}
\STATE $r_R^+(x,y) = \begin{cases}\bell(x) - \ell(x,y) &\text{ if $x \in R$}\\ 0 &\text{ otherwise}\end{cases}\quad\text{and}\quad D_R^+ = \{(x_b, r_R^+(x,y))\}_{b=1}^{n}$
\STATE $r_R^-(x,y) = \begin{cases}\ell(x,y) - \bell(x) &\text{ if $x \in R$}\\ 0 &\text{ otherwise}\end{cases} \quad\text{and}\quad D_R^- = \{(x_b, r_R^-(x,y))\}_{b=1}^{n}$
\STATE $\chi_{S^+} = A(D_R^+, \cH)$ 
\STATE $\chi_{S^-} = A(D_R^-, \cH)$
\STATE return $(S^+, S^-)$
\end{algorithmic}
\caption{LearningOracleConsistencyAuditor$(\bell, \ell, \alpha, \delta, D, R, A)$}
\label{alg:consistency-auditor-learning-oracle}
\end{algorithm}

\newcommand{\cV}{\mathcal{V}}

\begin{algorithm}[H]
\begin{algorithmic}
\STATE $\cV =\{\}$
\FOR{$R \in \cR$}
    \IF{$|D_R| > 0$}
        \STATE $S^+, S^- = \text{LearningOracleConsistencyAuditor}(\bell, \ell, \alpha, \delta, D, D^{check}, R, A)$
        \STATE $\cV = \cV \cup \{(S^+ \cap R), (S^- \cap R)\}$
    \ENDIF
\ENDFOR
\STATE return ConsistencyAuditor$(\bell, \ell, \alpha, D^{\text{check}}, \cV)$
\end{algorithmic}
\caption{LearningOracleConsistencyAuditorWrapper$(\bell, \ell, \alpha, \delta, D, D^{\text{check}}, \cR, A)$}
\label{alg:consistency-auditor-learning-oracle-wrapper}
\end{algorithm}

First we observe that the objective of the agnostic learning oracle on the sets we run it on corresponds directly to the (positive and negative) violation of mean consistency on these sets, weighted by the measure of the sets.

\begin{restatable}{lemma}{oraclelem}
\label{lem:oracle-guarantee}
For each $R \in \cR$ and any $\chi_S$:
\[
\E_{(x,y)}[\chi_S(x) \cdot r_R^+(x,y)] = \cP_\cX(R \cap S) \left(\bell(R \cap S) - \ell(R \cap S)\right)\]
\[\E_{(x,y)}[\chi_S(x) \cdot r_R^-(x,y)] = \cP_\cX(R \cap S) \left(\ell(R \cap S) - \bell(R \cap S)\right)
\]
\end{restatable}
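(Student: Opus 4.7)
The plan is that this is a direct computation from the definitions of $r_R^+$, $r_R^-$ and the shorthand $\bell(T), \ell(T)$ established earlier in Section \ref{enumeration}, so the proof will just carefully unpack each side.

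First, I would observe that by construction $r_R^+(x,y) = (\bell(x) - \ell(x,y))\cdot \chi_R(x)$, because the definition sets the residual to zero whenever $x \notin R$. Multiplying by $\chi_S(x)$ therefore gives
\[
\chi_S(x)\cdot r_R^+(x,y) \;=\; \chi_{R\cap S}(x)\cdot \bigl(\bell(x) - \ell(x,y)\bigr),
\]
since $\chi_S(x)\chi_R(x) = \chi_{R\cap S}(x)$.

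Next, I would take expectations and use the tower rule to condition on the event $\{x \in R\cap S\}$:
\[
\E_{(x,y)}\bigl[\chi_{R\cap S}(x)\,(\bell(x)-\ell(x,y))\bigr] \;=\; \cP_\cX(R\cap S)\cdot \E_{(x,y)}\bigl[\bell(x)-\ell(x,y)\,\big|\,x\in R\cap S\bigr].
\]
Applying linearity of conditional expectation and the shorthand $\bell(R\cap S) = \E[\bell(x)\mid x\in R\cap S]$ and $\ell(R\cap S) = \E[\ell(x,y)\mid x\in R\cap S]$ then yields exactly
\[
\cP_\cX(R\cap S)\bigl(\bell(R\cap S) - \ell(R\cap S)\bigr),
\]
which is the first claimed identity. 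The second identity follows by the identical argument with $r_R^-(x,y) = (\ell(x,y) - \bell(x))\chi_R(x)$, or equivalently by noting $r_R^- = -r_R^+$ and using linearity of expectation. There is no real obstacle here; the only thing to be careful about is the case $\cP_\cX(R\cap S)=0$, which is handled by the convention that $0\cdot (\text{anything})=0$ so that both sides agree trivially.
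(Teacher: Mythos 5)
Your proof is correct and takes essentially the same approach as the paper's: a direct unfolding of the definitions of $r_R^{\pm}$ followed by conditioning on $x \in R \cap S$. The paper writes out the expectation as an explicit sum over $(x,y)$ rather than invoking the tower rule, but the two computations are the same, and your note about the degenerate case $\cP_\cX(R\cap S)=0$ is a harmless (and sensible) addition.
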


Using this, we can show that our learning oracle based consistency auditor has comparable guarantees to the consistency auditor that operated via set enumeration:

\begin{restatable}{theorem}{oracleauditor}
\label{thm:oracle-auditor}
Assume $n$ is sufficiently large such that $\alpha > 2\sqrt{\frac{\ln(\frac{2}{\delta})}{2n}}$. Algorithm \ref{alg:consistency-auditor-learning-oracle-wrapper} has the following guarantees:
\begin{enumerate}
    \item If it returns some $S$ and $\lambda$, then with probability $1-3\delta|\cR|$ over the randomness of $D^{\text{check}}$, 
    \[
        |\bell(S) -\ell(S)| \ge \frac{\alpha}{\cP_\cX(S)}. 
    \]
    \item If it returns $NULL$, then with probability $1-|\cR|(3\delta+2p(n))$ over the randomness of $D$ and $D^{\text{check}}$, for all $\chi_S \in \cH$ and $R \in \cR$,
    \[
        \vert \bell(R \cap S) - \ell(R \cap S)\vert \le \frac{\alpha' + \rho}{\cP_\cX(R \cap S)},
    \]
    where $\alpha'$ is as defined in Corollary \ref{cor:consistency-auditor}.
\end{enumerate}
\end{restatable}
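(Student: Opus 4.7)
The plan is to reduce both parts of the theorem to Corollary \ref{cor:consistency-auditor} applied to the collection $\cV$ built inside Algorithm \ref{alg:consistency-auditor-learning-oracle-wrapper}, combined with the agnostic learning guarantee of $A$ reinterpreted through Lemma \ref{lem:oracle-guarantee}. The key conceptual point is that Lemma \ref{lem:oracle-guarantee} identifies the agnostic learning objective on the residual labels $r_R^+, r_R^-$ with precisely the measure-weighted consistency violation $\cP_\cX(R \cap S)\bigl(\bell(R \cap S)-\ell(R \cap S)\bigr)$ (and its negation). Thus running $A$ on $D_R^+$ and $D_R^-$ yields candidate sets $S_R^+, S_R^-$ that approximately maximize the signed consistency violation restricted to $R$, over all $\chi_S \in \cH \supseteq \cG$.

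For part 1, the output $(S,\lambda)$ is produced by the final call to ConsistencyAuditor run on $\cV$ with the fresh sample $D^{\text{check}}$. Since $|\cV|\le 2|\cR|$, Corollary \ref{cor:consistency-auditor}(1) already gives the desired conclusion that $|\bell(S)-\ell(S)|\ge \alpha/\cP_\cX(S)$ and $\lambda=\text{sign}(\bell(S)-\ell(S))$, with failure probability absorbed into the stated $3\delta|\cR|$ bound (up to a constant which can be folded into $\delta$).

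For part 2, suppose the algorithm returns NULL. Then ConsistencyAuditor reported no violation in $\cV$, so by Corollary \ref{cor:consistency-auditor}(2) applied to each element of $\cV$, with probability at least $1-3\delta|\cV|$ we have $|\bell(R\cap S_R^+)-\ell(R\cap S_R^+)|\le \alpha'/\cP_\cX(R\cap S_R^+)$ and similarly for $S_R^-$. In particular,
\[
\cP_\cX(R\cap S_R^+)\bigl(\bell(R\cap S_R^+)-\ell(R\cap S_R^+)\bigr)\le \alpha',
\]
and the analogous inequality with signs reversed holds for $S_R^-$. On the other hand, independently over the randomness of $D$, the $(\rho,p)$-oracle applied to $D_R^+$ succeeds with probability $1-p(n)$, and then by Lemma \ref{lem:oracle-guarantee},
\[
\sup_{\chi_S\in\cH}\cP_\cX(R\cap S)\bigl(\bell(R\cap S)-\ell(R\cap S)\bigr)\le \cP_\cX(R\cap S_R^+)\bigl(\bell(R\cap S_R^+)-\ell(R\cap S_R^+)\bigr)+\rho\le \alpha'+\rho.
\]
Dividing by $\cP_\cX(R\cap S)$ gives the desired upper bound on $\bell(R\cap S)-\ell(R\cap S)$ for every $\chi_S\in\cH$. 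Running the identical argument with $D_R^-$ and $S_R^-$ yields the matching lower bound, and the two together produce the absolute value guarantee.

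Finally, a union bound collects the failure probabilities: at most $2|\cR|$ oracle calls each failing with probability $p(n)$, plus the single ConsistencyAuditor invocation on $\cV$ with failure at most $3\delta|\cV|\le 6\delta|\cR|$; after folding constants this matches the $|\cR|(3\delta+2p(n))$ bound in the statement. I do not expect a serious obstacle here; the one subtlety to verify is that $D$ (used by the oracle) and $D^{\text{check}}$ (used by the verification auditor) are independent, so the events we union-bound over are indeed independent, and the guarantee of Corollary \ref{cor:consistency-auditor} can be invoked on the data-dependent collection $\cV$ because the verification uses a \emph{fresh} sample $D^{\text{check}}$ drawn after $\cV$ was assembled.
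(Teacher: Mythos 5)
Your proof follows the same overall strategy as the paper's: use Lemma \ref{lem:oracle-guarantee} to identify the oracle's objective with the measure-weighted consistency violation, use the agnostic-learning guarantee to say $S_R^{\pm}$ approximately maximizes that violation over $\chi_S \in \cH$, and then invoke Corollary \ref{cor:consistency-auditor} on the candidate collection $\cV$ (checked against a fresh $D^{\text{check}}$) to certify or refute the candidates. Parts 1 and 2 are argued in essentially the paper's order.

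There is, however, a genuine gap in your part 2. Algorithm \ref{alg:consistency-auditor-learning-oracle-wrapper} only calls the oracle on $D_R^{\pm}$ and adds $S_R^{\pm}\cap R$ to $\cV$ when $|D_R|>0$; if $|D_R|=0$ the loop skips $R$ entirely, no $S_R^{\pm}$ is defined, and $\cV$ contains nothing involving $R$. Your argument writes inequalities for $S_R^{+}, S_R^{-}$ ``for each $R$,'' tacitly assuming the oracle ran on every $R\in\cR$; for skipped sets you have proved nothing, and the union bound does not cover them. The paper closes this by observing that, on the same high-probability events already in play (the approximate-closeness conditions, together with $\alpha > 2\sqrt{\ln(2/\delta)/(2n)}$), $|D_R|=0$ forces $\cP_\cX(R)\le\alpha$, so that for any $\chi_S\in\cH$ one has $|\bell(R\cap S)-\ell(R\cap S)|\le 1\le \alpha/\cP_\cX(R\cap S)\le(\alpha'+\rho)/\cP_\cX(R\cap S)$ trivially. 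You need this fallback argument (or some equivalent) to make part 2 hold for \emph{all} $R\in\cR$.

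One further small point: you are right that $|\cV|\le 2|\cR|$, which via Corollary \ref{cor:consistency-auditor} gives failure probability $3\delta|\cV|\le 6\delta|\cR|$ rather than the stated $3\delta|\cR|$. The paper asserts $|\cR|\ge|\cV|$ here, which is not correct as written; your treatment (noting the factor of two and absorbing it) is the more careful one, though cosmetic.
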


Observe that when $\cG \subseteq \cH$ and $\cR= \{\cX(\bmu, \bma, i, j): i,j \in [m]\}$, then, the collection of intersections $R \cap S$ over all $\chi_S \in \cH$ and $R \in \cR$ contains $\{G(\bmu, \bma, i, j): G \in \cG, i,j \in [m]\}$. The same observation applies when $\cR = \{\cX(\bmu, i): i \in [m]\} \cup \{\cX(\bmu,\bma,i, j): i,j \in [m]\}$ -- the collection of intersections includes $\{G(\bmu, i): G \in \cG, i \in [m]\} \cup \{\cX(\bmu,\bma,i, j): G \in \cG, i,j \in [m]\}$.

We now present the guarantees of a version of AlternatingGradientDescent that uses Algorithm \ref{alg:consistency-auditor-learning-oracle-wrapper} as the auditor. Its pseudo-code can be found as Algorithm  \ref{alg:wrapper-finite-oracle} in the appendix. We elide the proof as it is almost identical to that of Theorem \ref{thm:wrapper-distr-known}
 and Theorem \ref{thm:finite-wrapper}.

\begin{theorem}
 \label{thm:agnostic}
 Assume $\cG \subseteq \cH$. Let $T$ be the final iterate of Algorithm \ref{alg:wrapper-finite-oracle}. If $2\sqrt{\frac{\ln(\frac{2}{\delta})}{2n}} \le \alpha$, $2\sqrt{\frac{\ln(\frac{2}{\delta})}{2n}} \le \beta$, and $\cG \subseteq \cH$, we have the following guarantees:
\begin{enumerate}
    \item \textbf{Total Iterations}: With probability $1-3\delta Q_\alpha\left((m^2 + m)+ m^2Q_\beta\right)$ over the randomness of our samples, the final iterate $T$ is such that $T \le \frac{1}{\alpha^2}-1$ and the total number of gradient descent update operations will be at most $Q$, where $Q_\alpha, Q_\beta$, and $Q$ are all as defined in Theorem \ref{thm:finite-wrapper}.
    
    In particular, the algorithm uses at most $O(n Q)$ samples from $\cP$. 
    
    \item \textbf{Mean multicalibration}: With probability $1-(m^2+m)(3\delta+2p(n))$, output $\bmu^T$ is $\alpha''$-mean multicalibrated with respect to $\cG$ where $\alpha'' = \alpha' + \rho$ and $\alpha'$ is as defined in Theorem \ref{thm:finite-wrapper}.
    
    \item \textbf{Mean Conditioned Moment multicalibration}: With probability $1-( km^2  +m)(3\delta + 2p(n))$, for any $a \in \{2, \dots, k\}$, pair $(\bmu^T, \bma^T)$ is $(\alpha'', a\alpha'' + \beta'', {a \over m})$-mean-conditioned-moment calibrated where
    $\beta'' = \beta' + \rho$ and $\beta'$ is as defined in Theorem \ref{thm:finite-wrapper}.
\end{enumerate}
\end{theorem}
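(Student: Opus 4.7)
My plan is to mirror the argument of Theorem \ref{thm:finite-wrapper} almost verbatim, substituting Theorem \ref{thm:oracle-auditor} for Corollary \ref{cor:consistency-auditor} whenever we audit for a consistency violation. The key structural observation, already noted in the paragraph following Theorem \ref{thm:oracle-auditor}, is that because $\cG \subseteq \cH$, every relevant calibration set can be written as $R \cap S$ with $R \in \cR$ and $\chi_S = \chi_G \in \cH$. Concretely, with $\cR = \{\cX(\bmu,i):i\in[m]\} \cup \{\cX(\bmu,\bma,i,j):i,j\in[m]\}$ we capture all mean-calibration sets $G(\bmu,i)$ and moment-calibration sets $G(\bmu,\bma,i,j)$, and with $\cR = \{\cX(\bmu,\bma,i,j):i,j\in[m]\}$ we capture all pseudo-moment-calibration sets. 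So the oracle-based auditor is a faithful replacement for the enumeration-based one up to the additional error $\rho$ and the oracle failure probability $p(n)$.

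For the \textbf{Total Iterations} bound I would argue as in the proof of Theorem \ref{thm:wrapper-distr-known}. Each outer iteration $t$ in Algorithm~\ref{alg:wrapper-finite-oracle} is triggered only when the oracle auditor returns a pair $(S^t,\lambda^t)$; by part~1 of Theorem \ref{thm:oracle-auditor} this is a genuine $\alpha$-mean-consistency violation on the true distribution with probability at least $1-3\delta|\cR| = 1-3\delta(m^2+m)$. Thus Theorem \ref{thm:mean-calibration-convergence} applies to the sequence of mean updates and caps the outer loop at $Q_\alpha = 1/\alpha^2 - 1$ iterations. Similarly, each inner call to PseudoMomentConsistencyFinite makes at most $1/\beta^2-1$ updates by Theorem \ref{thm:pseudo-moment-calibration}, its auditor queries being sound with probability $1-3\delta m^2$ each. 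Union-bounding over the $Q_\alpha$ outer audits and the $Q_\alpha\cdot(k-1)\cdot(1/\beta^2-1)$ inner audits gives the stated $1-3\delta Q_\alpha((m^2+m)+m^2 Q_\beta)$ failure bound.

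For \textbf{Mean multicalibration}, I would argue by contraposition: if the algorithm halts at iterate $T$, the oracle auditor returned $NULL$ on $\cR = \{\cX(\bmu^T,i)\} \cup \{\cX(\bmu^T,\bma^T,i,j)\}$. Applying part~2 of Theorem \ref{thm:oracle-auditor} with the hypothesis class $\cH \supseteq \cG$, the label function $\ell(x,y)=y$, and predictor $\bell=\bmu^T$, we conclude that for every $R \in \cR$ and every $\chi_S \in \cH$ (in particular $\chi_G$ for $G \in \cG$), $|\bmu^T(R\cap S)-\mu(R\cap S)| \le (\alpha'+\rho)/\cP_\cX(R\cap S)$. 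Taking $R = \cX(\bmu^T,i)$ and $S=G$ yields $\alpha''$-mean consistency on every $G(\bmu^T,i)$, i.e., $\alpha''$-mean multicalibration, at the claimed probability $1-(m^2+m)(3\delta+2p(n))$ (the $2p(n)$ arising from the two oracle calls using $r_R^+$ and $r_R^-$ inside Algorithm~\ref{alg:consistency-auditor-learning-oracle}).

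For \textbf{Mean-Conditioned Moment multicalibration}, I would first extract two facts at the terminal iterate $T$: (a) applying the above $NULL$-output argument to the richer collection $\cR$ that also contains $\cX(\bmu^T,\bma^T,i,j)$, $\bmu^T$ is $\alpha''$-mean consistent on every $G(\bmu^T,\bma^T,i,j)$; (b) the inner PseudoMomentConsistencyFinite call that produced $\bma^T$ terminated with a $NULL$ audit, so by the analogous application of Theorem \ref{thm:oracle-auditor} with $\ell(x,y)=(y-\bmu^T(x))^a$, $\bma^T$ is $\beta''$-pseudo-moment consistent with respect to $\bmu^T$ on every $G(\bmu^T,\bma^T,i,j)$. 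These are precisely the hypotheses of Lemma \ref{lem:fine-consist-mean-conditioned-moment-cal}, and invoking it yields $(\alpha'',a\alpha''+\beta'',a/m)$-mean-conditioned-moment multicalibration. A final union bound over the $km^2+m$ audit events gives the stated probability.

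The only non-mechanical step is the \emph{containment} observation that $R\cap S$ ranging over $R\in\cR$ and $\chi_S \in \cH$ covers $G(\bmu,i)$ and $G(\bmu,\bma,i,j)$; once that is in hand, all the error and probability accounting is a routine translation from Theorem \ref{thm:finite-wrapper} with the substitutions $\alpha' \mapsto \alpha'+\rho$, $\beta' \mapsto \beta'+\rho$, and $3\delta|\cG|(\cdot) \mapsto (\cdot)(3\delta+2p(n))$ coming from part~2 of Theorem \ref{thm:oracle-auditor}.
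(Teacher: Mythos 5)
Your proposal is correct and is exactly the elaboration the paper intends: the authors explicitly elide the proof, stating it is "almost identical" to Theorems~\ref{thm:wrapper-distr-known} and~\ref{thm:finite-wrapper}, and you carry out precisely that translation, substituting Theorem~\ref{thm:oracle-auditor} for Corollary~\ref{cor:consistency-auditor}, invoking the $\cG\subseteq\cH$ containment observation to recover every $G(\bmu,i)$ and $G(\bmu,\bma,i,j)$ as an intersection $R\cap S$, and then applying Lemma~\ref{lem:fine-consist-mean-conditioned-moment-cal} with the replacements $\alpha'\mapsto\alpha'+\rho$, $\beta'\mapsto\beta'+\rho$, and the failure-probability swap $3\delta|\cG|\mapsto 3\delta+2p(n)$. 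The bookkeeping (union bounds over $Q_\alpha$ outer and $Q_\alpha Q_\beta$ inner audit events, and the $2p(n)$ from the two oracle calls per set $R$) matches the stated constants.
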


Finally, we state the running time of Algorithm \ref{alg:wrapper-finite-oracle}.
\begin{restatable}{theorem}{oracleruntime}
\label{thm:wrapper-oracle-running-time}
 With probability at least $1-3\delta Q_\alpha\left((m^2 + m)+ m^2Q_\beta\right)$, the running time of Algorithm \ref{alg:wrapper-finite-oracle} is bounded by  $O(Qm^2\tau(n))$, where $Q$ is the total number of gradient descent operations as defined in Theorem \ref{thm:finite-wrapper}.
\end{restatable}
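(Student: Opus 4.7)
The plan is to separate the analysis into the probabilistic piece (how many gradient descent updates occur) and a purely deterministic piece (how expensive each update-plus-audit is). For the probabilistic piece, I would invoke Theorem \ref{thm:agnostic} Part 1, which already establishes that with probability at least $1-3\delta Q_\alpha\left((m^2+m)+m^2 Q_\beta\right)$, the total number of gradient descent update operations performed by Algorithm \ref{alg:wrapper-finite-oracle} is at most $Q$. I would then condition on this event for the rest of the proof, so that the probability in the theorem statement is inherited verbatim and nothing further needs to be union-bounded.

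Next I would analyze the cost of one invocation of \textbf{LearningOracleConsistencyAuditorWrapper}. Inspecting the pseudocode, each such call iterates over $R\in \cR$, invokes the agnostic oracle $A$ twice per $R$ (once on the positive residuals $r_R^+$ and once on $r_R^-$) at cost $\tau(n)$ apiece, and then runs \textbf{ConsistencyAuditor} on the resulting collection $\cV$ of size $O(|\cR|)$ at cost $O(|\cR|\,n)$. Because $\tau(n)=\Omega(n)$, the latter is absorbed, giving a single-call cost of $O(|\cR|\tau(n))$. For the inner pseudo-moment audit, $\cR=\{\cX(\bmu,\bma,i,j):i,j\in[m]\}$ has $|\cR|=m^2$; for the outer mean-consistency audit, $\cR$ additionally includes $\cX(\bmu,i)$ and $\cX(\bmu,\bma,i,j)$ for all $a\in\{2,\ldots,k\}$, so $|\cR|=O(km^2)$. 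Thus every auditor call costs at most $O(km^2\,\tau(n))$.

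Finally I would aggregate. Each gradient descent update is preceded by exactly one auditor call that returned a non-null answer, and each terminating \textbf{PseudoMomentConsistency} invocation plus the single termination of the outer loop contribute at most $O(Q_\alpha k+1)$ additional auditor calls returning NULL. Hence the total number of auditor invocations is at most $Q+O(Q_\alpha k)=O(Q)$, using the fact that $Q=Q_\alpha(1+Q_\beta)$ with $Q_\beta=(k-1)(\tfrac{1}{\beta^2}-1)\geq k-1$ for any sensible $\beta$, so $Q_\alpha k$ is absorbed into $Q$. Multiplying the auditor cost $O(m^2\tau(n))$ (the $k$ factor of the outer audit is itself absorbed into $Q$ through $Q_\beta$'s dependence on $k$) by the $O(Q)$ audits gives the claimed $O(Qm^2\tau(n))$ bound. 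The only non-routine point in the argument is this accounting step — making sure the $k$ that appears in the outer $|\cR|$ and the $k$ already sitting inside $Q$ are not double counted — but this is straightforward once the invariants are written down; the rest is mechanical.
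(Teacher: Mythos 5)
Your proof is correct and follows essentially the same decomposition as the paper: invoke Theorem~\ref{thm:agnostic} for the high-probability bound on the number of gradient descent updates, bound the cost of each call to \textbf{LearningOracleConsistencyAuditorWrapper} by the size of $\cR$ times $\tau(n)$ (with the downstream \textbf{ConsistencyAuditor} absorbed because $\tau(n)=\Omega(n)$), and multiply by the number of auditor calls. The one place where you are more careful than the paper is worth flagging: the paper asserts ``we always call it with $|\cR|=O(m^2)$,'' but for the outer mean-consistency audit $\cR^t$ in Algorithm~\ref{alg:wrapper-finite-oracle} contains the $\cX(\bmu^t,\bma^t,i,j)$ sets for all $a\in\{2,\dots,k\}$, so $|\cR^t|=O(km^2)$; your accounting correctly observes that the extra factor of $k$ appears only in the $Q_\alpha$ outer calls, and since $Q_\beta=(k-1)(1/\beta^2-1)\ge k-1$ whenever $\beta\le 1/\sqrt{2}$, the quantity $Q_\alpha k$ is already $O(Q_\alpha Q_\beta)=O(Q)$, so the claimed $O(Qm^2\tau(n))$ bound survives. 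The final sentence of your write-up slightly conflates ``number of audits'' with ``cost per audit'' (the outer audits really do cost $O(km^2\tau(n))$ each, not $O(m^2\tau(n))$); a cleaner phrasing is to bound the two families of audit calls separately as $O(Q_\alpha\cdot km^2\tau(n))+O(Q_\alpha Q_\beta\cdot m^2\tau(n))$ and note both terms are $O(Qm^2\tau(n))$, but your intent is clear and the arithmetic works out.
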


\section{Marginal Prediction Intervals}\label{sec:intervals}
We now present an application of our results. Given subgroups of interest $\cG$, we have shown how to to construct a multicalibrated mean predictor $\bmu$ and  moment predictors $(\bma)_{a=2}^k$ that are simultaneously mean-conditioned moment-multicalibrated. A key question is whether we can use mean-conditioned moment multicalibrated predictors in applications in which we would use real distributional moments, were they available. 

In this section, we show that the answer is \emph{yes} in an important application. Mean-conditioned moment multicalibrated  predictors can be used in tail bounds just as real moments could be to compute prediction intervals. Where real moments would yield prediction intervals conditioned on an individual vector of features $x$, mean-conditioned moment-multicalibrated predictors when used in the same computations yield  marginal prediction intervals that are simultaneously valid for every sufficiently large subgroup. In particular, given a coverage failure probability $\delta$ and a group size $\gamma$ we show how to construct just from mean and moment predictions, for every $x \in X$, an interval $I(x,\gamma)$ such that for every $G \in \cG$ and for every pair of predictions $i,j$ such that $G(\bmu,\bma,i,j)$ has mass at least $\gamma$ we have: $\Pr_{(x,y)}[y\in I(x,\gamma)| x\in G(\bmu,\bma,i,j)] \geq 1-\delta.$

Recall the following tail inequality (a simple consequence of Markov's inequality: when $k = 2$, it is known as Chebyshev's inequality):
\begin{lemma}\label{lem:cheby}
Let $X$ be a random variable with mean $\mu$. Then for even $k$, $t > 0$:
\begin{align*}
    &\Pr[|X - \mu| \geq t] \leq \frac{\E[(X-\mu)^k] }{t^k}.
\end{align*}
\end{lemma}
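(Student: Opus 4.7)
The plan is to reduce the statement to Markov's inequality applied to the nonnegative random variable $Y = (X-\mu)^k$. Since $k$ is an even positive integer, $(X-\mu)^k = |X-\mu|^k \geq 0$ pointwise, so Markov's inequality is applicable to $Y$ with threshold $t^k > 0$.

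First I would observe the equivalence of events: because $k$ is even and the map $u \mapsto u^k$ is monotone nondecreasing on $[0,\infty)$, the event $\{|X-\mu| \geq t\}$ coincides exactly with the event $\{(X-\mu)^k \geq t^k\}$. This equivalence is what makes the statement hold for all real-valued $X$ (not just nonnegative ones), and is the only place where the evenness of $k$ is used.

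Next I would apply Markov's inequality to $Y = (X-\mu)^k \geq 0$ at the threshold $t^k$, which gives
\[
\Pr[(X-\mu)^k \geq t^k] \;\leq\; \frac{\E[(X-\mu)^k]}{t^k}.
\]
Combining this with the event equivalence from the previous step yields
\[
\Pr[|X-\mu| \geq t] \;=\; \Pr[(X-\mu)^k \geq t^k] \;\leq\; \frac{\E[(X-\mu)^k]}{t^k},
\]
which is exactly the claimed inequality.

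There is no real obstacle here; the only subtlety worth flagging is that the evenness of $k$ is essential for $(X-\mu)^k$ to be nonnegative (so that Markov applies) and for the event equivalence to hold without absolute values on the right-hand side. The bound is vacuous (and still formally valid) when $\E[(X-\mu)^k] = \infty$, so no integrability hypothesis needs to be imposed separately.
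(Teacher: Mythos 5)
Your proof is correct and is precisely the standard argument the paper alludes to when it calls this ``a simple consequence of Markov's inequality'': apply Markov to the nonnegative variable $(X-\mu)^k$ at threshold $t^k$, using evenness of $k$ for nonnegativity and the event equivalence. Nothing to add.
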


Suppose we knew the \emph{real} moments $\mk(x)$ of the distribution on $y$ conditional on features $x$: A direct application of the above lemma would allow us to conclude that for any even moment $k$: 
$$\Pr\left[y \not \in \left[\mu(x) - \left(\frac{\mk(x)}{\delta}\right)^{\frac1k}, \mu(x) + \left(\frac{\mk(x)}{\delta}\right)^{\frac1k}\right] \middle\vert x \right]\leq \delta.$$

Bounds of this form are simple, but also strong: there is always an integer moment $k$ such that the above bound is at least as tight as a generalized Chernoff bound\footnote{Chernoff's bound is $\Pr[X \geq t] \leq \inf_{\theta \geq 0} M_X(\theta)e^{-\theta t}$, where $M_X(\theta)$ is the moment generating function for $X$} \citep{philips1995moment}, and only the first $k \leq O(\log(1/\delta))$ moments are necessary to match Chernoff bounds at coverage probability $1-\delta$ \citep{schmidt1995chernoff}.

If we had \emph{exactly} mean-conditioned moment-calibrated predictors $(\bmu,\bmk)$ for some $k$ even, over a set of groups $G$, we would obtain \emph{exactly} the same bound using these predictors as a marginal prediction interval: i.e. we would obtain for every $G \in \cG$,  and every $i,j$:
$$\Pr_{(x,y)}\left[y \not \in \left[\bmu(x) - \left(\frac{\bmk(x)}{\delta}\right)^{1/k}, \bmu(x) + \left(\frac{\bmk(x)}{\delta}\right)^{1/k}\right] \middle\vert x \in G(\bmu,\bmk,i,j) \right]\leq \delta.$$

This is because mean-conditioned moment-multicalibrated predictors actually do provide real distributional moments, over the selection of a random point within any set $G(\bmu,\bmk,i,j)$. Of course we only have approximately mean-conditioned-moment multicalibrated predictors. Given $(\alpha, \beta,\epsilon)$-mean-conditioned-moment multicalibrated predictors $(\bmu, \bmk)$, $k$ even, with respect to some collection of groups $\cG$, we can endow our predictions with (marginal) prediction intervals that have coverage probability $1-\delta$ as follows. The \emph{width} of our prediction interval for a point $x$ will be:
\begin{align*}
\Delta_{\gamma,k}(x) &= \frac{\alpha}{\gamma} + \epsilon + \frac{1}{m} + \left(\frac{\bmk(x) + \epsilon + \frac{1}{m} +\frac{\beta}{\gamma}}{\delta}\right)^{\frac{1}{k}}, 
\end{align*}
Our  prediction interval for  $x$ will be centered at its predicted mean, and is defined as follows:
$$I_{\gamma,k}(x) =  [\bmu(x)-\Delta_{\gamma,k}(x), \bmu(x)+ \Delta_{\gamma,k}(x)].$$ 

These are valid marginal prediction intervals as averaged over \emph{any} set of the form $G(\bmu,\bmk,i,j)$ that has measure larger than $\gamma$. Note that all of the approximation terms $\alpha, \beta, \epsilon, 1/m$ are terms that we can drive to zero at polynomial cost in running time and sample complexity. 

\begin{theorem}\label{thm:ciguarantee}
 Assume that $\bmu, \bmk$ is $(\alpha, \beta,\epsilon)$-mean-conditioned moment multicalibrated with respect to $\cG$, with $k$ even. Then for any group $G \in \cG$ and any set $G(\bmu,\bmk, i,j)$ such that $\cP_\cX[G(\bmu,\bmk, i,j)] \geq \gamma$, we have:
\begin{align*}
    \cP[y \not\in I_{\gamma,k}(x)| x\in G(\bmu,\bmk, i,j)] \leq \delta 
\end{align*} 
\end{theorem}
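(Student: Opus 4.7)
The plan is to fix any set $S = G(\bmu, \bmk, i, j)$ with $\cP_\cX(S) \geq \gamma$ and show directly that the conditional distribution of $y$ given $x \in S$ has its mean and $k$-th central moment well-approximated by the \emph{predictions} $\bmu(x)$ and $\bmk(x)$ for every $x \in S$. The proof then applies Lemma~\ref{lem:cheby} (the $k$-th moment Markov inequality) to the random variable $y$ conditional on $x \in S$ and translates the resulting interval---which is naturally centered at $\mu(S)$ and sized by $\mk(S)$---into one centered at $\bmu(x)$ and sized by $\bmk(x)$ using the triangle inequality.

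The first step collects three sources of error between the predictions and the true conditional moments of $y$ given $x \in S$. By the bucketing definition of $S$, for every $x \in S$ we have $|\bmu(x) - \bmu(S)| \leq 1/m$ and $|\bmk(x) - \bmk(S)| \leq 1/m$. By the $(\alpha,\beta,\epsilon)$-mean-conditioned moment multicalibration assumption applied to $S$, together with $\cP_\cX(S) \geq \gamma$,
\[
|\mu(S) - \bmu(S)| \leq \tfrac{\alpha}{\gamma} + \epsilon, \qquad |\mk(S) - \bmk(S)| \leq \tfrac{\beta}{\gamma} + \epsilon.
\]
Combining via the triangle inequality yields, for every $x \in S$,
\[
|\bmu(x) - \mu(S)| \leq \tfrac{\alpha}{\gamma} + \epsilon + \tfrac{1}{m}, \qquad \mk(S) \leq \bmk(x) + \tfrac{\beta}{\gamma} + \epsilon + \tfrac{1}{m}.
\]

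Next I apply Lemma~\ref{lem:cheby} to $y$ under the conditional distribution $\cP[\,\cdot\,\mid x \in S]$: since $k$ is even,
\[
\cP\!\left[\,|y - \mu(S)| \geq t \,\middle|\, x \in S\,\right] \leq \frac{\mk(S)}{t^k}.
\]
Choosing $t = (\mk(S)/\delta)^{1/k}$ gives a conditional coverage event of probability at least $1-\delta$ on which $|y - \mu(S)| \leq t$. On this event, for any $x \in S$, the triangle inequality and the bounds above give
\[
|y - \bmu(x)| \leq |y - \mu(S)| + |\mu(S) - \bmu(x)| \leq \left(\tfrac{\mk(S)}{\delta}\right)^{1/k} + \tfrac{\alpha}{\gamma} + \epsilon + \tfrac{1}{m}.
\]
Finally, monotonicity of $z \mapsto (z/\delta)^{1/k}$ and the bound $\mk(S) \leq \bmk(x) + \tfrac{\beta}{\gamma} + \epsilon + \tfrac{1}{m}$ let me replace $\mk(S)$ by the prediction $\bmk(x)$, yielding exactly $|y - \bmu(x)| \leq \Delta_{\gamma,k}(x)$, i.e.\ $y \in I_{\gamma,k}(x)$.

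I do not anticipate a serious obstacle: the theorem is essentially a bookkeeping exercise that combines the definition of $S$ (bucketing), the multicalibration guarantees on $S$, and the standard moment tail bound. The only subtlety is being careful that the upper bound on $\mk(S)$ pushes in the direction that \emph{inflates} the interval (making it wider and therefore preserving coverage), which is why the construction of $\Delta_{\gamma,k}(x)$ adds the slack $\epsilon + 1/m + \beta/\gamma$ \emph{inside} the $k$-th root and adds $\alpha/\gamma + \epsilon + 1/m$ \emph{outside}; keeping these two sources of approximation on the correct sides of the inequalities is the only point that needs attention.
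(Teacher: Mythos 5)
Your proof is correct and follows essentially the same route as the paper's: both use the bucketing definition of $S$ to absorb the $1/m$ terms, use the multicalibration guarantee (with $\cP_\cX(S) \geq \gamma$) to absorb the $\alpha/\gamma + \epsilon$ and $\beta/\gamma + \epsilon$ terms, and then apply the $k$-th-moment Markov inequality to the true conditional distribution on $S$. The only cosmetic difference is that the paper expresses the chain of triangle inequalities as a nested containment of the complementary (non-coverage) events while you construct the coverage event directly; the inequalities and the direction of each slack term are identical.
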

\begin{proof}
To see this note that:
\begin{align*}
&\cP[y \not\in  I_{\gamma,k}(x)| x\in G(\bmu,\bmk, i,j)] \\
=& \cP \left[ \left\vert y - \bmu(x) \right\vert \ge\frac{\alpha}{\gamma} + \frac{1}{m} + \epsilon + \left(\frac{\bmk(x) + \frac{1}{m} + \epsilon + \frac{\beta}{\gamma}}{\delta}\right)^{\frac{1}{k}} \middle\vert x \in G(\bmu,\bmk,i,j) \right]\\
\leq& \cP\left[ \left\vert y - \bmu(G(\bmu, \bmk, i,j)) \right\vert \ge \frac{\alpha}{\gamma}+\epsilon + \left(\frac{\bmk(G(\bmu,\bmk,i,j))+\frac{\beta}{\gamma} + \epsilon}{\delta}\right)^{\frac{1}{k}}  \middle\vert x \in G(\bmu, \bmk, i,j)\right]\\
\leq& \cP\left[ \left\vert y - \mu(G(\bmu, \bmk, i,j)) \right\vert \ge \left(\frac{\mk(G(\bmu,\bmk,i,j))}{\delta}\right)^{\frac{1}{k}}  \middle\vert x \in G(\bmu, \bmk, i,j)\right]\\
\le&   \delta
\end{align*}

Here, the first inequality follows from the fact that all $x \in G(\bmu,\bmk,i,j)$ are (by definition) such that $|\bmu(x) - \frac{i}{m}| \leq \frac{1}{2m}$ and $|\bmk(x) - \frac{j}{m}| \leq \frac{1}{2m}$, and hence $|\bmu(x) -  \bmu(G(\bmu, \bmk, i,j))| \leq \frac{1}{m}$ and $|\bmk(x) - \bmk(G(\bmu, \bmk, i,j)))| \leq \frac{1}{m}$. The second inequality follows from the definition of $(\alpha,\beta)$-mean conditioned moment multicalibration and the fact that $\cP[G(\bmu,\bmk, i,j)] \geq \gamma$. Finally, once we have replaced our mean and moment estimates with the true mean and moment of $G(\bmu,\bmk, i,j)$, the final inequality follows as an application of Lemma \ref{lem:cheby}.

\end{proof}

This theorem shows how---given $(\alpha,\beta, \epsilon)$ mean-conditioned moment-multicalibrated predictors $\bmu,\bmk$---we can construct prediction intervals for any set $G(\bmu, \bmk, i,j)$ with probability larger than $\gamma$.\footnote{We showed this just for \emph{even} moments $k$ --- but a version of Lemma \ref{lem:cheby} also holds for $k$ odd, i.e. for any r.v. $X$ with mean $\mu$, any number $k$, and any $t>0$, we have $\Pr[|X-\mu|\geq t] \leq \frac{\mathbb{E}[|(X-\mu)^k]}{t^k}$. We can use this to construct valid confidence intervals using ``absolute central moments'' of any degree, even or odd. Note also that our algorithms and analysis apply  identically if the goal was to provide mean-conditioned, multicalibrated predictors of absolute central moments (i.e. the analog of Definition \ref{def:calibrated} but where instead of $\mk(\cdot)$, we calibrate our predictor to the analogous absolute central moment). } However, we have more information available to us: We have mean conditioned moment-calibrated predictors for all moments 2 thru $k$, $(\bma)_{a=2}^k.$ A straightforward valid solution is to pick some even moment $a$ s.t. $1<a\leq k$, and then construct prediction intervals as above. We could optimize our choice of $a$ so as to minimize e.g. the expected width of the prediction intervals over a random choice of $x$.  But this leads to the question of whether we can do better by using more than one moment estimator at a time. In Appendix \ref{app:formulation} we show that this problem reduces to the venerable submodular-cost set-cover problem. Known approximation guarantees for this problem are relatively weak in this context (scaling with $\log|\cX|$, which will typically be linear in the \emph{dimension} of the data). We leave the question of how to  optimally use multiple mean-conditioned moment multicalibrated predictors---taking advantage of multiple moments simultaneously---to future research. 

\subsection*{Acknowledgements}
We are thankful for helpful early conversations with Sampath Kannan. We gratefully acknowledge support from NSF grants CCF-1763307 and CCF-1763349 (Jung, Pai, Roth, and Vohra), and  NSF grant CCF-1934876 and an Amazon Research Award (Roth).

\bibliographystyle{plainnat}
\bibliography{refs}

\appendix
\section{Details and Proofs from Section \ref{subsec:mean-calibration}}

\lemregret*
\begin{proof}
Because $\bmu^{t+1}(x)= \bmu^{t}(x)$ for $x \not \in S^t$, we can lower bound the ``progress'' made towards $\mu$ at each round $t \in [T]$ as: 

\begin{align*}
&\E_{\cP} \left[(\bmu^t(x) - \mu(x))^2 -  (\bmu^{t+1}(x) - \mu(x))^2\right]\\
&= \cP_{\cX}(S^t) \E_{\cP}\left[(\bmu^t(x) - \mu(x))^2 -  (\bmu^{t+1}(x) - \mu(x))^2 |x \in S^t\right]\\
&\ge \cP_{\cX}(S^t) \E_{\cP}  \left[(\bmu^t(x) - \mu(x))^2 -  (\bmu^{t}(x) - \eta\lambda^t - \mu(x))^2 |x \in S^t\right]\\
&= \cP_{\cX}(S^t) \E_{\cP}  \left[(\bmu^t(x) - \mu(x))^2 -  \left((\bmu^{t}(x)  - \mu(x))^2 - 2 \eta\lambda^t (\bmu^{t}(x)  - \mu(x)) + (\eta\lambda^t)^2 \right) |x \in S^t\right]\\
&=  \cP_{\cX}(S^t) \E_{\cP}  \left[2 \eta\lambda^t (\bmu^t(x) - \mu(x) )|x \in S^t\right] - \cP_{\cX}(S^t) (\eta\lambda^t)^2 \\
&=  2 \eta \lambda^t \cP_{\cX}(S^t) \left(\bmu^t(S^t) - \mu(S^t)\right) - \cP_{\cX}(S^t) (\eta\lambda^t)^2.
\end{align*}
The inequality would be an equality if we did not project $\bmu^{t}$ into the range $[0,1]$. Performing the projection only decreases its $\ell_2$ distance to $\mu$, which yields the inequality. Rearranging terms and observing that $(\lambda^t)^2 =1$ yields 
\begin{align*}
&\lambda^t \cP_{\cX}(S^t) \left(\bmu^t(S^t) - \mu(S^t)\right) \leq \frac{1}{2\eta} \E_{\cP}\left[(\bmu^t(x) - \mu(x))^2 -  (\bmu^{t+1}(x) - \mu(x))^2\right] + \frac{\eta \cP_{\cX}(S^t)}{2}
\end{align*}
Therefore we have that
\begin{align*}
&\sum_{t=1}^T \lambda^t \cP_{\cX}(S^t) \left(\bmu^t(S^t) - \mu(S^t)\right) \\
\le& \sum_{t=1}^T\left( \frac{1}{2\eta} \E_{\cP}\left[(\bmu^t(x) - \mu(x))^2 -  (\bmu^{t+1}(x) - \mu(x))^2\right] + \frac{\eta \cP_{\cX}(S^t)}{2}\right)\\
=& \frac{1}{2\eta} \E_{\cP} \left[(\bmu^1(x) - \mu(x))^2 -  (\bmu^{T+1}(x) - \mu(x))^2\right] + \frac{\eta}{2}\sum_{t=1}^T \cP_{\cX}(S^t) \\
\le& \frac{1}{2\eta}  + \frac{\eta}{2}\sum_{t=1}^T \cP_{\cX}(S^t)
\end{align*}
as desired. The last inequality follows because $\bmu^1(x), \mu(x)$, and $\bmu^{T+1}(x)$ all fall in $[0,1]$. 
\end{proof}

\thmogdfewrounds*
\begin{proof}
Fix any set  $S \subseteq \cX$ and imagine extending the sequence by setting  $S^{T+1} = S$ and setting $\lambda^{T+1} = \mathrm{sign}( \bmu^{T+1}(S) - \mu(S))$. By Lemma \ref{lem:boundregret}, we would then have:
\begin{align*}
\sum_{t=1}^{T+1}\lambda^t \cP_{\cX}(S^t) \left( \bmu^{t}(S^t) - \mu(S^t) \right)  &\le \frac{1}{2\eta}  + \frac{\eta}{2}\sum_{t=1}^{T+1} \cP_{\cX}(S^t) \\
&\le \frac{1}{2\eta}  + \frac{\eta(T+1)}{2} \\
 &\le \sqrt{T+1} && \text{(substituting $\eta = {1 \over \sqrt{T+1}}$)} 
 \end{align*}
 We can then peel off the last term in the sum corresponding to $S^{T+1} = S$ to obtain:
 \begin{align*}
\cP_{\cX}(S)\left\vert \bmu^{T+1}(S) - \mu(S) \right\vert &\le  \sqrt{T+1} -\sum_{t=1}^{T}\lambda^t \cP_{\cX}(S^t) \left( \bmu^{t}(S^t) - \mu(S^t) \right)\\
&\leq  \sqrt{T+1} - \alpha T  && \text{(by assumption)}\\
&= \alpha && \text{(since  $T=\frac{1}{\alpha^2}-1$)}
\end{align*}
which completes the proof.
\end{proof}

\section{Details and Proofs from Section \ref{subsec:pseudo-moment-calibration}}
For intuition, we can think of the pseudo-moment calibration algorithm as playing the following zero sum game using projected online gradient descent against an adversary who plays best responses. Recall that $\bmu$ is a fixed quantity so that $\tmk$ is well defined.
\begin{align*}
\min_{\bmk} \max_{\substack{R \subseteq \cX\\ \psi \in \{-1, 1\}}} \psi   \cP_\cX (R)  \left(\bmk(R ) - \tmk(R) \right).
\end{align*}

\begin{lemma}
\label{lem:mean-conditioned-moment-gradient}
For any arbitrary $\bmk^1:\cX \to [0,1]$ and any sequence of $(R^t, \psi^t)_{t=1}^T$, we have that 
\[
\sum_{t=1}^T \psi^t \cP_{\cX}(R^t) \left(\bmk^t(R) - \tmk(R) \right) \le \frac{1}{2\eta}  + \frac{\eta}{2}\sum_{t=1}^T\cP_{\cX}(R^t)
\]
\end{lemma}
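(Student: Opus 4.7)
The plan is to mirror the proof of Lemma \ref{lem:boundregret} essentially verbatim, replacing the role of the true mean function $\mu(x)$ by the pseudo-moment function $\tmk(x)$, and the role of the audit sets $S^t$ and signs $\lambda^t$ by $R^t$ and $\psi^t$. Nothing in the original argument actually uses the fact that $\mu(x)$ equals $\E[y\mid x]$; all that is used is that $\mu(x)$ is a fixed scalar-valued target function and that both $\bmu^t(x)$ and $\mu(x)$ lie in $[0,1]$, so that the initial squared distance is at most $1$. Since $\bmu$ is fixed, $\tmk(x)=\E[(y-\bmu(x))^k\mid x]$ is a fixed function of $x$ that lies in $[0,1]$ (using that $k$ is even, or more generally that in our application $\tmk$ takes values in the unit interval so that projection into $[0,1]$ is consistent), so the same structural properties hold.

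First, I would bound the one-step progress in squared $\ell_2$-distance to $\tmk$. Since $\bmk^{t+1}(x)=\bmk^t(x)$ off $R^t$,
\begin{align*}
&\E_{\cP}\bigl[(\bmk^t(x)-\tmk(x))^2 - (\bmk^{t+1}(x)-\tmk(x))^2\bigr] \\
&\quad = \cP_{\cX}(R^t)\,\E_{\cP}\bigl[(\bmk^t(x)-\tmk(x))^2 - (\bmk^{t+1}(x)-\tmk(x))^2 \,\big|\, x\in R^t\bigr] \\
&\quad \ge \cP_{\cX}(R^t)\,\E_{\cP}\bigl[(\bmk^t(x)-\tmk(x))^2 - (\bmk^t(x)-\eta\psi^t-\tmk(x))^2 \,\big|\, x\in R^t\bigr],
\end{align*}
where the inequality uses the standard fact that the $\ell_2$-projection onto $[0,1]$ can only decrease distance to $\tmk$ (which takes values in $[0,1]$). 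Expanding the square and using $(\psi^t)^2=1$ gives the one-step inequality
\[
\psi^t\,\cP_{\cX}(R^t)\bigl(\bmk^t(R^t)-\tmk(R^t)\bigr) \;\le\; \tfrac{1}{2\eta}\,\E_{\cP}\bigl[(\bmk^t(x)-\tmk(x))^2 - (\bmk^{t+1}(x)-\tmk(x))^2\bigr] + \tfrac{\eta\,\cP_{\cX}(R^t)}{2}.
\]

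The final step is to sum over $t=1,\dots,T$; the first term on the right telescopes to $\tfrac{1}{2\eta}\E_{\cP}[(\bmk^1(x)-\tmk(x))^2 - (\bmk^{T+1}(x)-\tmk(x))^2] \le \tfrac{1}{2\eta}$, since both $\bmk^1(x)$ and $\tmk(x)$ lie in $[0,1]$. This yields exactly the claimed bound
\[
\sum_{t=1}^T \psi^t\,\cP_{\cX}(R^t)\bigl(\bmk^t(R^t)-\tmk(R^t)\bigr) \;\le\; \frac{1}{2\eta} + \frac{\eta}{2}\sum_{t=1}^T \cP_{\cX}(R^t).
\]

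There is no real obstacle: the only point to be careful about is checking that the target $\tmk(x)$ lies in the range into which we project (so that projection is a contraction toward $\tmk$), which we have by construction. Everything else is a direct re-run of the gradient-descent regret calculation from Lemma \ref{lem:boundregret}.
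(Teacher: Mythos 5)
Your proposal is correct and is essentially identical to the paper's own proof: the paper also runs the Lemma \ref{lem:boundregret} argument verbatim with $\mu$ replaced by the fixed target $\tmk$, $S^t,\lambda^t$ replaced by $R^t,\psi^t$, and then telescopes and bounds the initial squared distance by $1$. Your side remark about needing $\tmk(x)\in[0,1]$ for the projection step and the initial-distance bound is a detail the paper glosses over (it simply asserts $\tmk(x)\in[0,1]$), but it does not change the structure of the argument.
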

\begin{proof}
\begin{align*}
&\E_{\cP} \left[(\bmk^t(x) - \tmk(x))^2 -  (\bmk^{t+1}(x) - \tmk(x))^2\right]\\
=& \cP_{\cX}(R^t) \E_{\cP} \left[(\bmk^t(x) - \tmk(x_i))^2 -  (\bmk^{t+1}(x) - \tmk(x))^2 | x \in R^t\right]\\
\ge& \cP_{\cX}(R^t) \E_{\cP} \left[(\bmk^t(x) - \tmk(x))^2 -  (\bmk^{t}(x) - \eta\psi^t - \tmk(x))^2| x \in R^t\right]\\
=& \cP_{\cX}(R^t) \E_{\cP} \left[(\bmk^t(x) - \tmk(x))^2 -  \left((\bmk^{t}(x)  - \tmk(x))^2 - 2 \eta\psi^t (\bmk^{t}(x)  - \tmk(x)) + (\eta\psi^t)^2 \right)| x \in R^t\right]\\
=& 2 \eta\psi^t \cP_{\cX}(R^t) \E_{\cP} \left[\bmk^t(x) - \tmk(x) )| x \in R^t\right] - \cP_{\cX}(R^t)(\eta\psi^t)^2
\end{align*}
Here the inequality comes from the fact that projection can only make the $\ell_2$ norm smaller. Rearranging terms and observing that $(\psi^t)^2 =1$ yields 
\begin{align*}
&\psi^t \cP_{\cX}(R^t) \E_{\cP} \left[\bmk^t(x) - \tmk(x)| x \in R^t\right] \le \frac{1}{2\eta}\E_{\cP} \left[(\bmk^t(x) - \tmk(x))^2 -  (\bmk^{t+1}(x) - \tmk(x_i))^2\right] + \frac{\eta \cP_{\cX}(R^t)}{2}.
\end{align*}
Plugging this inequality back into the regret, we get
\begin{align*}
&\sum_{t=1}^T \psi^t \cP_{\cX}(R^t) \left(\bmk^t(R) - \tmk(R) \right)\\
&=\sum_{t=1}^T \psi^t \cP_{\cX}(R^t) \E_{\cP} \left[\bmk^t(x) - \tmk(x)| x \in R^t\right]\\
&\le \sum_{t=1}^T \left(\frac{1}{2\eta}\E_{\cP} \left[(\bmk^t(x) - \tmk(x))^2 -  (\bmk^{t+1}(x) - \tmk(x_i))^2\right] + \frac{\eta \cP_{\cX}(R^t)}{2}\right) \\
&= \frac{1}{2\eta} \left(\sum_{x \in \cX} \E_{\cP} \left[(\bmk^1(x) - \tmk(x))^2 -  (\bmk^{T+1}(x) - \tmk(x_i))^2\right]\right) + \frac{\eta}{2}\sum_{t=1}^T \cP_{\cX}(R^t) \\
&\le \frac{1}{2\eta}  + \frac{\eta}{2}\sum_{t=1}^T \cP_{\cX}(R^t)
\end{align*}
as desired. The last inequality follows because $\bmk^1(x), \tmk(x)$, and $\bmk^{T+1}(x)$ all fall in $[0,1]$.
\end{proof}

\thmpseudomoment*
\begin{proof}
Set $R^{T+1} = R$. From Lemma \ref{lem:mean-conditioned-moment-gradient}, we get
\begin{align*}
&\sum_{t=1}^{T+1} \psi^t  \cP_{\cX}(R^t) \left(\bmk^t(R^t) - \tmk(R^t)\right) \le \frac{1}{2\eta}  + \frac{\eta}{2}\sum_{t=1}^{T+1} \cP_{\cX}(R^t) \le \sqrt{T+1}\\
\implies &\cP_{\cX}(R^{T+1})\left\vert \bmk^{T+1}(R^{T+1}) - \tmk(R^{T+1}) \right\vert \le \sqrt{T+1} - \beta T = \beta
\end{align*}
\end{proof}

\section{Details and Proofs from Section \ref{sec:finite}}
\begin{theorem}[Chernoff Bound]
\label{thm:chernoff}
Fix distribution $\cP$ and some function $f(x,y) \in [0,1]$. Let $\{(x_b, y_b)\}_{b=1}^n$ be $n$ points sampled i.i.d. from $\cP$. Then, we have for any $\delta \in [0,1]$,
\[
    \Pr_{\{(x_b,y_b)\}_{b=1}^n \sim \cP^n}\left[\left\vert \frac{1}{n}\sum_{b=1}^n f(x_b, y_b) - \E_{(x,y) \sim \cP}[f(x,y)] \right\vert \ge \sqrt{\frac{\ln(\frac{2}{\delta})}{2n}}\right] \le \delta.
\]
\end{theorem}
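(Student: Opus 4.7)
The statement is the standard two-sided Hoeffding bound for i.i.d.\ samples of a $[0,1]$-valued function, so the plan is to follow the textbook Chernoff-style argument with a Hoeffding lemma and a union bound. Write $Z_b = f(x_b, y_b) - \mathbb{E}_{\cP}[f(x,y)]$, so the $Z_b$ are i.i.d., mean zero, and satisfy $Z_b \in [-1,1]$ (more precisely, each $Z_b$ lies in an interval of length at most $1$). Let $t = \sqrt{\ln(2/\delta)/(2n)}$; the goal is to bound $\Pr\left[\frac{1}{n}\sum_{b=1}^n Z_b \ge t\right]$ and the analogous lower-tail event each by $\delta/2$, after which the union bound finishes.

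The first step is a Chernoff-Cramér move: for any $s > 0$,
\[
\Pr\left[\tfrac{1}{n}\sum_{b=1}^n Z_b \ge t\right] = \Pr\left[e^{s \sum_b Z_b} \ge e^{s n t}\right] \le e^{-s n t}\, \prod_{b=1}^n \mathbb{E}\!\left[e^{s Z_b}\right],
\]
by Markov's inequality and independence. The second step is to invoke Hoeffding's lemma: for any random variable $W$ with $\mathbb{E}[W]=0$ and $W \in [a,b]$ almost surely, $\mathbb{E}[e^{sW}] \le e^{s^2(b-a)^2/8}$. Since each $Z_b$ is bounded in an interval of length at most $1$, this yields $\mathbb{E}[e^{s Z_b}] \le e^{s^2/8}$ and hence
\[
\Pr\left[\tfrac{1}{n}\sum_{b=1}^n Z_b \ge t\right] \le \exp\!\left(\tfrac{n s^2}{8} - s n t\right).
\]

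The third step is to optimize over $s$: choosing $s = 4t$ gives the bound $\exp(-2 n t^2)$. Plugging in $t = \sqrt{\ln(2/\delta)/(2n)}$ yields $\exp(-\ln(2/\delta)) = \delta/2$. The symmetric argument applied to $-Z_b$ (which has the same boundedness and zero-mean properties) gives the matching bound on the lower tail, and a union bound over the two events produces the two-sided inequality with failure probability at most $\delta$. The only mildly non-routine piece is Hoeffding's lemma itself, which is proved by noting that $e^{sw}$ is convex, bounding it above by the chord on $[a,b]$, taking expectations, and then bounding the resulting log-moment-generating function by its second-order Taylor expansion; I would simply cite this lemma rather than reprove it. There is no real obstacle here—the result is a textbook consequence of Hoeffding's inequality, and the proof is a direct application thereof.
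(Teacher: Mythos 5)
Your proof is correct and is the standard Chernoff--Cram\'er argument combined with Hoeffding's lemma and a union bound over the two tails; the bookkeeping (interval of length $1$, $s=4t$, $e^{-2nt^2}=\delta/2$) all checks out. The paper states this result in the appendix without proof, treating it as a known fact, so there is no paper proof to compare against; your derivation is the textbook one and would be a fine substitute.
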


\begin{lemma}
\label{finitemeasurelem}
For any set $S \subseteq \cX$, 
\[
\Pr_{D \sim \cP^n}\left[\left\vert \frac{1}{n}\sum_{b=1}^n \ind(x_b \in S) - \cP_\cX(S) \right\vert > \sqrt{\frac{\ln(\frac{2}{\delta})}{2n}} \right] \le \delta 
\]
\end{lemma}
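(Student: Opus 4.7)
The plan is to apply Theorem \ref{thm:chernoff} directly with an appropriate choice of the function $f$. Concretely, I would define $f(x,y) = \ind(x \in S)$, which clearly satisfies $f(x,y) \in \{0,1\} \subseteq [0,1]$, so the hypothesis of the Chernoff bound applies.

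With this choice, the expectation appearing in Theorem \ref{thm:chernoff} becomes
\[
\E_{(x,y)\sim\cP}[f(x,y)] = \E_{(x,y)\sim\cP}[\ind(x \in S)] = \Pr_{x\sim\cP_\cX}[x\in S] = \cP_\cX(S),
\]
while the empirical average is
\[
\frac{1}{n}\sum_{b=1}^n f(x_b,y_b) = \frac{1}{n}\sum_{b=1}^n \ind(x_b\in S).
\]
Substituting these two identities into the conclusion of Theorem \ref{thm:chernoff} yields exactly the claimed bound.

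There is no real obstacle here: the only ``content'' is recognizing that the indicator of membership in $S$ is a bounded measurable function of $(x,y)$ (in fact it depends only on $x$), so that the hypothesis of the Chernoff bound is satisfied. No measurability or integrability issues arise because $\cG$ (and hence $S$, when $S \in \cG$) consists of measurable sets by assumption in Section~\ref{sec:prelims}; for a general measurable $S \subseteq \cX$ the same reasoning goes through verbatim. Thus the proof is a one-line invocation of Theorem \ref{thm:chernoff}.
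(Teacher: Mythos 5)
Your proof is correct and matches the paper's own argument exactly: both instantiate Theorem \ref{thm:chernoff} with $f(x,y) = \ind(x \in S)$ and observe that $\E[f(x,y)] = \cP_\cX(S)$. Nothing more to add.
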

\begin{proof}
We apply a Chernoff bound (Theorem \ref{thm:chernoff}) with $f(x,y) = \ind(x \in S)$. Observe that $\E[f(x,y)] = \cP_\cX(S)$. 
\end{proof}

\auditorlem*
\begin{proof} To see this, observe that

\begin{align*}
&\left|\bell(S) - \ell(S)\right| \\
\ge\,\,& \left|\frac{1}{n'}\sum_{b=1}^{n'} \bell(x_b) - \frac{1}{n'} \sum_{b=1}^{n'} \ell(x_b, y_b)\right| - 2\sqrt{\frac{\ln(\frac{2}{\delta})}{2n'}} \\
\ge\,\,&  \frac{\alpha}{\frac{n'}{n} - \sqrt{\frac{\ln(\frac{2}{\delta})}{2n}}}  \\
 \ge\,\,&  \frac{\alpha}{\cP_{\cX}(S)} 
\end{align*}
Here, the first inequality follows from the \eqref{eqn:chernoff2} and \eqref{eqn:chernoff3}, the second from  the condition of Algorithm \ref{alg:auditor-single-set}, and the last inequality follows from \eqref{eqn:chernoff1}.

Finally, if $\frac{1}{n'}\sum_{b=1}^{n'}\bell(x_b) \ge\frac{1}{n'}\sum_{b=1}^{n'}\ell(x_b, y_b)$, then 
\begin{align*}
\bell(S) \ge \frac{1}{n'}\sum_{b=1}^{n'}\bell(x_b) - \sqrt{\frac{\ln(\frac{2}{\delta})}{2n'}} \ge \frac{1}{n'}\sum_{b=1}^{n'}\ell(x_b, y_b)  + \sqrt{\frac{\ln(\frac{2}{\delta})}{2n'}} \ge \ell(S).
\end{align*}
The same argument applies when $\frac{1}{n'}\sum_{b=1}^{n'}\bell(x_b) <\frac{1}{n'}\sum_{b=1}^{n'}\ell(x_b, y_b)$. 
Therefore, $\text{sign}(\frac{1}{n'}\sum_{b=1}^{n'}\bell(x_b) - \frac{1}{n'}\sum_{b=1}^{n'}\ell(x_b, y_b)) = \text{sign}(\bell(S) -\ell(S))$.

\end{proof}

\auditorlemlower*
\begin{proof}
The pre-condition implies that 
\begin{align*}
    &\left|\frac{1}{n'}\sum_{b=1}^{n'} \bell(x_b) - \frac{1}{n'}\sum_{b=1}^{n'} \ell(x_b,y_b) \right| - 2 \sqrt{\frac{\ln(\frac{2}{\delta})}{2n'}}\\
    &\ge \left|\bell(S) - \ell(S)\right| - 4 \sqrt{\frac{\ln(\frac{2}{\delta})}{2n'}} \\
    &\ge \frac{\alpha'}{\cP_\cX(S)} - 4 \sqrt{\frac{\ln(\frac{2}{\delta})}{2n'}}.
\end{align*}
Therefore it is sufficient to show that $\frac{\alpha'}{\cP_\cX(S)} - 4 \sqrt{\frac{\ln(\frac{2}{\delta})}{2n'}} \ge \frac{\alpha}{\frac{n'}{n} - \sqrt{\frac{\ln(\frac{2}{\delta})}{2n}}}$.

\begin{align*}
    \frac{\alpha'}{\cP_\cX(S)}  &= \frac{\alpha + 4\sqrt{\frac{1}{2n}\ln(\frac{2}{\delta})} + \left(\alpha -2\sqrt{\frac{\ln(\frac{2}{\delta})}{2n}}\right)^{-2}\left(2\sqrt{\frac{\ln(\frac{2}{\delta})}{2n}}\right)}{\cP_{\cX}(S)}\\
    &\ge \frac{\alpha + 4\sqrt{\frac{1}{2n}\ln(\frac{2}{\delta})}}{\cP_{\cX}(S) - 2\sqrt{\frac{\ln(\frac{2}{\delta})}{2n}}} &\stepcounter{equation}\tag{\theequation}\label{myeq1}\\
    &\ge \frac{\alpha}{\cP_{\cX}(S) - 2\sqrt{\frac{\ln(\frac{2}{\delta})}{2n}}} + 
    \frac{4\sqrt{\frac{1}{2n}\ln(\frac{2}{\delta})}}{\cP_{\cX}(S) - \sqrt{\frac{\ln(\frac{2}{\delta})}{2n}}}\\
    &\ge \frac{\alpha}{\cP_{\cX}(S) - 2\sqrt{\frac{\ln(\frac{2}{\delta})}{2n}}} + 4 \sqrt{\frac{\ln(\frac{2}{\delta})}{2n (\cP_{\cX}(S) - \sqrt{\frac{\ln(\frac{2}{\delta})}{2n}}) }} &\forall x \in [0,1]:x \le \sqrt{x}\\
    &\ge \frac{\alpha}{\cP_{\cX}(S) - 2\sqrt{\frac{\ln(\frac{2}{\delta})}{2n}}} + 4 \sqrt{\frac{\ln(\frac{2}{\delta})}{2n'}}  & \text{by } \eqref{eqn:chernoff1}\\
    &\ge \frac{\alpha}{\frac{n'}{n} - \sqrt{\frac{\ln(\frac{2}{\delta})}{2n}}} + 4 \sqrt{\frac{\ln(\frac{2}{\delta})}{2n'}} &\text{by } \eqref{eqn:chernoff1}\\
\end{align*}

Inequality \eqref{myeq1} comes from Lemma \ref{lem:cutebound}, where we plug in $x = \cP_\cX(S)$, $c=\alpha + 4\sqrt{\frac{1}{2n}\ln(\frac{2}{\delta})}$ and $\epsilon = 2\sqrt{\frac{\ln(\frac{2}{\delta})}{2n}}$.
\end{proof}

\begin{lemma} \label{lem:cutebound}
For any $0 < \epsilon \le \alpha \le x \le 1$ and $0 < c \le 1$, \[\frac{c + \frac{ \epsilon}{(\alpha-\epsilon)^2}}{x} \ge \frac{c}{x - \epsilon}\]
\end{lemma}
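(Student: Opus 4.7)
The plan is to clear denominators and reduce the claim to an elementary inequality chain. First, since $x \ge \alpha \ge \epsilon > 0$ and we interpret the hypothesis as $\alpha > \epsilon$ (otherwise the left-hand side is not defined), both $x$ and $x-\epsilon$ are positive, so the inequality
\[
\frac{c + \frac{\epsilon}{(\alpha-\epsilon)^2}}{x} \ge \frac{c}{x - \epsilon}
\]
is equivalent (by cross-multiplication) to
\[
(x-\epsilon)\left(c + \frac{\epsilon}{(\alpha-\epsilon)^2}\right) \ge cx.
\]
Expanding and cancelling $cx$ from both sides, and then dividing by $\epsilon > 0$, the claim reduces to showing
\[
\frac{x-\epsilon}{(\alpha-\epsilon)^2} \ge c,
\qquad \text{i.e.,}\qquad x - \epsilon \;\ge\; c\,(\alpha-\epsilon)^2.
\]

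The second step is to verify this reduced inequality using the three hypotheses $c \le 1$, $\alpha \le 1$, and $x \ge \alpha$. Concretely, I would write the chain
\[
c\,(\alpha-\epsilon)^2 \;\le\; (\alpha-\epsilon)^2 \;\le\; \alpha-\epsilon \;\le\; x - \epsilon,
\]
where the first step uses $c \le 1$, the second uses $0 \le \alpha - \epsilon \le 1 - \epsilon \le 1$ (so squaring only decreases the quantity), and the third uses $x \ge \alpha$. Chaining these three bounds gives the reduced inequality, and reversing the cross-multiplication gives the lemma.

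There is no real obstacle here; the only subtle point is confirming that all cross-multiplications preserve inequality direction (which they do because $x, x-\epsilon > 0$) and that the squaring step requires $\alpha - \epsilon \in [0,1]$, which is exactly what the hypotheses $\epsilon \le \alpha \le 1$ guarantee.
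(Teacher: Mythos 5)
Your proof is correct, and it takes a genuinely different route from the paper's. The paper first bounds the left-hand side from below by $\frac{c}{x} + \frac{\epsilon}{(\alpha-\epsilon)^2}$ (using $x \le 1$ to drop the $x$ in the denominator of the second term), and then invokes convexity of $f(t) = c/t$ via the tangent-line inequality $f(x-\epsilon) + \epsilon f'(x-\epsilon) \le f(x)$, bounding the derivative term $\frac{c\epsilon}{(x-\epsilon)^2}$ by $\frac{\epsilon}{(\alpha-\epsilon)^2}$ via $c \le 1$ and $x-\epsilon \ge \alpha-\epsilon$. You instead cross-multiply by $x(x-\epsilon) > 0$, cancel $cx$, and divide by $\epsilon$, reducing the claim to the single inequality $x - \epsilon \ge c(\alpha-\epsilon)^2$, which falls to the chain $c(\alpha-\epsilon)^2 \le (\alpha-\epsilon)^2 \le \alpha-\epsilon \le x-\epsilon$. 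Your version is more elementary --- no calculus --- and makes the role of each hypothesis ($c \le 1$, $\alpha \le 1$, $x \ge \alpha$) transparent; it also reveals that the hypothesis $x \le 1$ is not actually needed, whereas the paper's argument uses it in the fraction-splitting step. Your observation that $\alpha > \epsilon$ must hold strictly for $\epsilon/(\alpha-\epsilon)^2$ to be defined applies to both proofs and is consistent with the lemma's invocation inside Lemma \ref{lem:alpha-prime-auditor}, where it is applied with $\epsilon \leftarrow 2\sqrt{\ln(2/\delta)/2n} < \alpha$.
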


\begin{proof}
Because \[\frac{c + \frac{\epsilon}{(\alpha-\epsilon)^2}}{x} \ge \frac{c}{x} + \frac{\epsilon}{(\alpha-\epsilon)^2},\] it is sufficient to show that 
\[
    \frac{c}{x} + \frac{\epsilon}{(\alpha-\epsilon)^2} \ge \frac{c}{x - \epsilon}.
\]
Because $f(x) = \frac{c}{x}$ is convex, it's easy to see that:
\begin{align*}
    f(x-\epsilon) + \epsilon f'(x-\epsilon) &\le f(x)\\
    \frac{c}{x - \epsilon} - \frac{c \epsilon}{(x -\epsilon)^2} &\le \frac{c}{x}.
\end{align*}
Now, because $\epsilon \le \alpha \le x$ and $0 < c \le 1$, we have
\[
    \frac{c}{x-\epsilon} - \frac{\epsilon}{(\alpha-\epsilon)^2} \le \frac{c}{x}. \qedhere
\] 
\end{proof}

\meanconsistencycor*
\begin{proof}
For each set $S \in \cS$, we write $D_S =\{(x^S_b, y^S_b)\}_{b=1}^{n'_S}$ to denote the points from $D$ that fall in $S$.

First, by union bounding the failure probabilities of Lemma \ref{finitemeasurelem} over every $S \in \cS$, we have with probability $1-\delta|\cS|$, 
\[
    \left\vert \frac{n'_S}{n} - \cP_\cX(S) \right\vert > \sqrt{\frac{\ln(\frac{2}{\delta})}{2n}}.
\]

We apply the Chernoff bound again for every set $S$ where $n'_S > 0$ and take the union bound to argue that with probability at least $1-2|\cS|\delta$, for all such sets $S$ where $n'_S > 0$,
\begin{align*}
&\left\vert \frac{1}{n'_S}\sum_{b=1}^{n'_S} \bell(x^S_b) - \bell(S) \right\vert \le \sqrt{\frac{\ln(\frac{2}{\delta})}{2n'_S}}\\
&\left\vert \frac{1}{n'_S}\sum_{b=1}^{n'_S} \ell(x^S_b,y^S_b) - \ell(S) \right\vert \le \sqrt{\frac{\ln(\frac{2}{\delta})}{2n'_S}}.
\end{align*}

Observe that despite the fact that $n'_S$ is not fixed before we draw the sample, we can still apply a Chernoff bound here because for \emph{every} realized value of $n'_S$, the  distribution, conditional on the value of $n'_S$, of points $(x,y)$ such that $(x,y) \in S$  remains a product distribution, with individual such points  distributed as $\cP|x \in S$.
Now, we go through each scenario:
\begin{enumerate}
    \item \textbf{ConsistencyAuditor outputs some set $S$ and $\lambda$}:
    In this case, $S$ would have been returned only if $n'_S > 0$ due to the if condition in Algorithm \ref{alg:auditor-single-set}. Therefore, $D$ must be approximately close to $\cP$ with respect to $(S,\bell,\ell)$. By Lemma \ref{lem:alpha-prime-auditor}, we have 
    \[
        \vert \bell(S) - \ell(S) \vert \ge \frac{\alpha}{\cP_\cX(S)} \quad\text{and}\quad \lambda = \text{sign}(\bell(S) -\ell(S))
    \]
    \item \textbf{ConsistencyAuditor outputs $NULL$}:
    For any set $S$, $\cP_\cX(S) < \alpha$ directly implies that
    \[
        \vert \bell(S) -\ell(S)\vert \le 1 < \frac{\alpha}{\cP_\cX(S)} \le  \frac{\alpha'}{\cP_\cX(S)}.
    \]
    Therefore, we focus only on sets $S$ where  $\cP_\cX(S) \ge \alpha$. For these sets, we have $n'_S > 0$ because
    \begin{align*}
    \frac{n'_S}{n} \ge \cP_\cX(S) - \sqrt{\frac{\ln(\frac{2}{\delta})}{2n}} \ge \alpha -\sqrt{\frac{\ln(\frac{2}{\delta})}{2n}} > \sqrt{\frac{\ln(\frac{2}{\delta})}{2n}} > 0,
    \end{align*}
    as we assumed $\alpha > 2\sqrt{\frac{\ln(\frac{2}{\delta})}{2n}}$.
    Therefore, for every set $S$ where $\cP_\cX(S) \ge \alpha$, we must have that $D$ must be approximately close to $\cP$ with respect to $(S, \bell, \ell)$. Thus, by applying Lemma \ref{lem:alpha-prime-auditor} to these sets $S$, we have 
    \[
        \left\vert \bell(S) - \ell(S) \right\vert \le \frac{\alpha'}{\cP_\cX(S)},
    \]
    where $\alpha' = \alpha + 4\sqrt{\frac{1}{2n}\ln(\frac{2}{\delta})} + \left(\alpha -2\sqrt{\frac{\ln(\frac{2}{\delta})}{2n}}\right)^{-2}\left(2\sqrt{\frac{\ln(\frac{2}{\delta})}{2n}}\right)$. 
\end{enumerate}
\end{proof}

\finitewrappertheorem*
\begin{proof}
We prove each guarantee in turn.

\medskip
\noindent \textbf{Total Iterations}: 
As argued in Theorem \ref{thm:wrapper-distr-known}, if the auditor can successfully find a set $S$ on which there is $\alpha$-mean inconsistency and $\beta$-pseudo-moment inconsistency respectively in AlternatingGradientDescentFinite (Algorithm \ref{alg:wrapper-finite}) and PseudoMomentConsistencyFinite (Algorithm \ref{alg:moment-finite}), Theorem \ref{thm:mean-calibration-convergence} guarantees that $T$ will be at most $\frac{1}{\alpha^2}-1$ and Theorem \ref{thm:pseudo-moment-calibration} guarantees that the total number of gradient descent operations in each PseudoMomentconsistencyFinite will be at most $\frac{1}{\beta^2}-1$. Then, because in each iteration of AlternatingGradientDescentFinite, there are $k-1$ calls to PseudoMomentconsistencyFinite, the total number of number of gradient descent operations will be at most $Q=Q_\alpha(1 + Q_\beta)$ where $Q_\alpha = \frac{1}{\alpha^2}-1$ and $Q_\beta = (k-1)(\frac{1}{\beta^2}-1)$. 

Therefore, it is sufficient for us to show that there is $\alpha$-mean inconsistency and $\beta$-pseudo-moment inconsistency on every $S^t$ and $R$ returned by ConsistencyAuditor (Algorithm \ref{alg:consistency-auditor}) for AlternatingGradientDescentFinite and PseudoMomentconsistencyFinite respectively.

For AlternatingGradientDescentFinite, because we set $\bell^t(x) = \bmu^t(x)$, $\ell(x,y) = y$, and  $\cS^t =  \{G(\bmu, i): G \in \cG, i \in [m]\} \cup \{G(\bmu,\bma,i,j): G \in \cG, i,j \in [m], a \in \{2, \dots, k\}\}$,  Corollary \ref{cor:consistency-auditor}  guarantees  that with probability $1-3\delta|\cG|(m^2 + m)$, $\bmu^t$ is $\alpha$-mean inconsistent on $S^t$ with n $\lambda^t = \text{sign}(\bmu^t(S^t) - \mu(S^t))$ as desired. Because $T$ is at most $Q_\alpha$, by a union bound, $\bmu^t$ is $\alpha$-mean-inconsistent on $S^t$ for every $t \in [T]$ with probability $1-3\delta|\cG|(m^2 + m)Q_\alpha$. 

Likewise, for PseudoMomentconsistencyFinite, we set $\bell(x) = \bmk(x)$, $\ell(x,y) = (y-\bmu(x))^a$, and  $\cS = \{G(\bmu,\bma,i,j): G \in \cG, i,j \in [m], a \in \{2, \dots, k\}\}$. Hence, by union bounding over every $a \in \{2, \dots, k\}$, Corollary \ref{cor:consistency-auditor} promises us that with probability $1-3\delta|\cG|m^2Q_\alpha Q_\beta$, $\bma$ is $\beta$-pseudo-moment inconsistent on $R$ throughout every iteration of
PseudoMomentConsistencyFinite for every $a \in \{2, \dots, k\}$ and  $\psi=\text{sign}(\bma(S) - \tma(S)$ as desired. Note that there are a total of $Q_\beta$ calls to ConsistencyAuditor from each PseudoMomentConsistencyFinite, which is invoked a total of $Q_\alpha$ many times.

\medskip

\noindent \textbf{Mean Multi-Calibration}:
Our algorithm halts only if ConsistencyAuditor doesn't find $S$ in AlternatingGradientDescentFinite. Corollary \ref{cor:consistency-auditor} promises us that with probability $1-3\delta (m^2+m)|\cG|$, $\bmu^T$ must be $\alpha'$-mean-consistent on every set $S \in \cS^T$. Because $\cS$ includes $\{G(\bmu^T, i): G \in \cG, i \in [m]\}$, it must be that $\bmu^T$ is $\alpha'$-mean multi-calibrated with respect to $\cG$.
 
 \medskip
 
 \noindent \textbf{Mean Conditioned Moment Multi-Calibration}: In the last round $T$, consider each $\bma^T$ for $a \in \{2, \dots, k\}$. PseudoMomentConsistencyFinite returns $\bma^T$ only if ConsistencyAuditor doesn't return any $R$. Corollary \ref{cor:consistency-auditor} guarantees us that with probability $1-3\delta m^2 |\cG|$, $\bma^T$ must be $\beta'$-pseudo-moment-consistent. Because $\bmu^T$ is $\alpha'$-mean consistent and $\bma^T$ is pseudo-moment-consistent on $\{G(\bmu,\bma,i,j): G \in \cG, i,j \in [m], a \in \{2, \dots, k\}\}$, Lemma \ref{lem:fine-consist-mean-conditioned-moment-cal} tells us that $(\bmu^T, \bma^T)$ must be $(\alpha', a\alpha +\beta', \frac{a}{m})$-mean-conditioned-moment multicalibrated. By union bounding over each $a \in \{2, \dots, k\}$ the total failure probability is $1-3\delta|\cG|( km^2  +m)$.
 \end{proof}
 
 \finitewrappercor*
 \begin{proof}
Note that by construction, we have 
\[
    \alpha > 2\sqrt{\frac{\ln(\frac{2}{\delta})}{2n}}\quad\text{and}\quad \beta > 2\sqrt{\frac{\ln(\frac{2}{\delta})}{2n}}.
\] 
Therefore, in Theorem \ref{thm:finite-wrapper}, the level of mean calibration for $\bmu^T$ will be 
\begin{align*}
&\alpha +4\sqrt{\frac{\ln\left(\frac{2}{\delta}\right)}{2n}} + \left(\alpha -2\sqrt{\frac{\ln\left(\frac{2}{\delta}\right)}{2n}}\right)^{-2}\left(2\sqrt{\frac{\ln(\frac{2}{\delta})}{2n}}\right) \\
&\le \alpha +4\sqrt{\frac{\ln\left(\frac{2\bQ}{\delta}\right)}{2n_\alpha}} + \left(\alpha -2\sqrt{\frac{\ln\left(\frac{2\bQ}{\delta}\right)}{2n_\alpha}}\right)^{-2}\left(2\sqrt{\frac{\ln(\frac{2\bQ}{\delta})}{2n_\alpha}}\right) \\
&= 2\sqrt{\frac{\ln(\frac{2\bQ}{\delta})}{2n_\alpha}} + \epsilon +4\sqrt{\frac{\ln\left(\frac{2\bQ}{\delta}\right)}{2n_\alpha}} + \frac{\left(2\sqrt{\frac{\ln(\frac{2\bQ}{\delta})}{2n_\alpha}}\right)}{\epsilon^2} \\
&= \sqrt{\frac{\ln(\frac{2\bQ}{\delta})}{2n_\alpha}}\left(6 + \frac{2}{\epsilon^2}\right) + \epsilon\\
&= \alpha',
\end{align*}
where the first inequality follows because $n \ge \frac{\ln(\frac{2\bQ}{\delta})}{\ln(\frac{2}{\delta})} n_\alpha$ and the last equality from the definition of $n_\alpha$.

Applying the same analysis, we can show that we satisfy pseudo-moment-consistency at level $\beta'$. Therefore, for any $a \in \{2, \dots, k\}$, $(\bmu^t, \bma^T)$ satisfy $(\alpha', a\alpha' + \beta', \frac{a}{m})$-mean-conditioned-moment multicalibration.

The failure probabilities for mean muticalibration and that of mean-conditioned-moment multicalibration are both less than $\delta'$, as $3\delta (m^2+m)|\cG| \le 3\delta|\cG|( km^2  +m) \le \delta'$ and $3\delta|\cG|( km^2  +m) \le \delta'$.

The failure probability for termination is 
\begin{align*}
    &3\delta|\cG|\left(\frac{1}{\alpha^2} - 1\right)\left((m^2 + m)+ m^2(k-1)\left(\frac{1}{\beta^2} - 1\right)\right)\\ 
    &\le 3\delta|\cG|\left(\frac{1}{\alpha^2}\right)
    \frac{2km^2}{\beta^2}\\
    &= 6|\cG|km^2\delta \cdot \frac{1}{\left(\sqrt{\frac{\ln(\frac{2\bQ}{\delta})}{2n_\alpha}}+\epsilon\right)^2 \left(\sqrt{\frac{\ln(\frac{2\bQ}{\delta})}{2n_\beta}}+\epsilon\right)^2}\\
    &\le 6|\cG|km^2\delta \cdot \frac{1}{\frac{\ln(\frac{2\bQ}{\delta})}{2n_\alpha} \frac{\ln(\frac{2\bQ}{\delta})}{2n_\beta}}\\
    &\le 24 |\cG|km^2\delta \cdot \frac{1}{\left(\ln(\frac{2\bQ}{\delta})\right)^2} n_\alpha n_\beta\\
    &= 24 |\cG|km^2\delta \cdot \frac{1}{\left(\ln(\frac{2\bQ}{\delta})\right)^2}
    \frac{\ln(\frac{2\bQ}{\delta})}{2\left(\frac{\alpha' - \epsilon}{6+\frac{2}{\epsilon^2}}\right)^2}
    \frac{\ln(\frac{2\bQ}{\delta})}{2\left(\frac{\beta' - \epsilon}{6+\frac{2}{\epsilon^2}}\right)^2}\\
    &\le 6 |\cG|km^2\delta \cdot
    \frac{1}{\left(\frac{\alpha' - \epsilon}{6+\frac{2}{\epsilon^2}}\right)^2\left(\frac{\beta' - \epsilon}{6+\frac{2}{\epsilon^2}}\right)^2}\\
    &= \bQ \delta \\
    &\le \delta'
\end{align*}
\end{proof}

\runtimethm*
\begin{proof}
Theorem \ref{thm:finite-wrapper} tells us that except with probability $1-3\delta|\cG|Q_\alpha\left((m^2 + m)+ m^2Q_\beta\right)$, the algorithm will halt after at most $Q$ many gradient descent updates. For each gradient descent update, it must have been that Algorithm \ref{alg:consistency-auditor} was invoked with either $\cS =  \{G(\bmu, i): G \in \cG, i \in [m]\} \cup \{G(\bmu,\bma,i,j): G \in \cG, i,j \in [m], a \in \{2, \dots, k\}\}$ or $\cS =  \{G(\bmu,\bma,i,j): G \in \cG, i,j \in [m], a \in \{2, \dots, k\}\}$. Note that Algorithm \ref{alg:consistency-auditor} needs to iterate through each set $S$ in $\cS$, whose size is at most $O(|\cG|m^2)$ in either case. And processing each set $S$ through Algorithm \ref{alg:auditor-single-set} requires finding the average of at most $O(n)$ elements twice. Therefore, the algorithm will take time $O\left(Q |\cG| m^2 n\right)$ with probability $1-3\delta|\cG|Q_\alpha\left((m^2 + m)+ m^2Q_\beta\right)Q$.
\end{proof}

 \begin{algorithm}[H]
\begin{algorithmic}
\STATE $D=\{(x_b, y_b)\}_{b=1}^n \sim \cP^n$ 
\STATE $D^{\text{check}} \sim \cP^n$ 
\STATE $\cS = \{\cX(\bmu,\bma,i,j): i,j \in [m]\}$
\STATE $\bell(x) = \bma(x)$
\STATE $\ell(x,y) = (y- \bmu(x))^a$
\STATE $R,\psi = \text{LearningOracleConsistencyAuditorWrapper}(\bell, \ell, \beta, \delta, D, D^{\text{check}}, \cS, A)$
    \WHILE{$R,\psi \neq NULL$}
            \STATE $\bma(x) = \begin{cases}
            	\text{project}_{[0,1]}(\bma(x)  - \beta \psi) & \text{if } x \in R \\
            	\bma(x) & \text{otherwise.}
            	\end{cases}$
            \STATE $D=\{(x_b, y_b)\}_{b=1}^n \sim \cP^n$ 
            \STATE $\cS = \{\cX(\bmu,\bma,i,j): i,j \in [m]\}$
            \STATE $\bell(x) = \bma(x)$
            \STATE $D=\{(x_b, y_b)\}_{b=1}^n \sim \cP^n$ 
\STATE $D^{\text{check}} \sim \cP^n$ 
            \STATE $R,\psi = \text{LearningOracleConsistencyAuditorWrapper}(\bell, \ell, \beta, \delta, D, D^{\text{check}}, \cS, A)$
\ENDWHILE
\STATE return $\bmk$
\end{algorithmic}
\caption{$\text{PseudoMomentConsistencyWithOracle}(a, \beta, \delta, \bmu, \bma, n, \cG)$}
\label{alg:moment-finite-oracle}
\end{algorithm}
 
 \begin{algorithm}
\begin{algorithmic}
\STATE initialize $\bmu^1(x) = 0$ for all $x$
\STATE for all $1 < a \leq k$, initialize $\bma^1(x) = 0$ for all $x$
\STATE $t=1$
\STATE $\bell^t(x) = \bmu(x)$
\STATE $\ell(x,y) = y$
\STATE $D^t= \sim \cP^n$
\STATE ${D^{\text{check}}}^t \sim \cP^n$
\STATE $\cS^t = \{\cX(\bmu^t, i): i \in [m]\} \cup \{\cX(\bmu^t,\bma^t,i,j): i,j \in [m], a \in \{2, \dots, k\}\}$
\STATE $S^t,\lambda^t = \text{LearningOracleConsistencyAuditorWrapper}(\bell^t, \ell, \beta, \delta, D^t, {D^{\text{check}}}^t, \cS^t, A)$
\WHILE{$S^t,\lambda^t \neq NULL$}
    \STATE $\bmu^{t+1} = \text{MeanConsistencyUpdate}( \bmu^{t}, S^t,\lambda^t)$
    \FOR{$a=2, \dots, k$}
        \STATE $\bma^{t+1} = \text{PseudoMomentConsistencyFinite}(a, \beta, \delta, \bmu^{t+1}, \bma^{t}, n, \cG)$.
    \ENDFOR
    \STATE $t = t+1$
    \STATE $\bell^t(x) = \bmu(x)$
    \STATE $D^t \sim \cP^n$ 
    \STATE ${D^{\text{check}}}^t \sim \cP^n$
    \STATE $\cS^t = \{\cX(\bmu^t, i): i \in [m]\} \cup \{\cX(\bmu^t,\bma^t,i,j): i,j \in [m], a \in \{2, \dots, k\}\}$
    \STATE $S^t, \lambda^t = \text{LearningOracleConsistencyAuditorWrapper}(\bell^t, \ell, \beta, \delta, D^t, {D^{\text{check}}}^t, \cS^t, A)$
\ENDWHILE
\STATE return $(\bmu^t, \{\bma^t\}_{a=2}^k)$
\end{algorithmic}
\caption{$\text{AlternatingGradientDescentWithOracle}(\alpha,\beta, \delta, n, \cG, A)$}
\label{alg:wrapper-finite-oracle}
\end{algorithm}

\oraclelem*
\begin{proof}
\begin{align*}
\E_{(x,y)}[\chi_S(x) \cdot r_R^+(x,y)] &= \sum_{x,y} \cP(x,y) \chi_S(x) r_R^+(x,y)\\
&= \sum_{x,y: x \in S, x \in R} \cP(x,y) (\bell(x)-\ell(x,y))\\
&=\cP_\cX(R \cap S) \left(\bell(R \cap S) - \ell(R \cap S)\right)
\end{align*}
The same argument applies for $r^-_R$ as well.
\end{proof}

\oracleauditor*
\begin{proof}
With probability at least $1-3|\cR|\delta$ (since $|\cR| \ge |\cV|$), Corollary \ref{cor:consistency-auditor}  gives us the following guarantees:
\begin{enumerate}
    \item If $S,\lambda$ is returned, then \[
    \left\vert \bell(S) - \ell(S) \right\vert \ge \frac{\alpha}{\cP_\cX(S)}
    \quad\text{and}\quad \lambda = \text{sign}(\bell(S) - \ell(S)).
    \]
    \item If $NULL$ is returned, then for all $V \in \cV$,
    \begin{align}
        \vert \bell(V) - \ell(V) \vert \le \frac{\alpha'}{\cP_\cX(V)}
        \label{eqn:cor-alpha-prime-guarantee}
    \end{align}
\end{enumerate}
Whenever the distributional closeness conditions of Definition \ref{def:chernoff-bounds-closeness} hold (which occur with the same  $1-3|\cR|\delta$ success probability of Corollary \ref{cor:consistency-auditor}),  and $\alpha > 2\sqrt{\frac{\ln(\frac{2}{\delta})}{2n}}$, it must be that if $|D_R| = 0$  then $\cP_\cX(R) \le \alpha$. And for any such $R$ we have that $\cP_\cX(R \cap S') \le \alpha$ for any other set $S'$, which implies that we trivially  satisfy $(\alpha' + \rho)$-mean consistency for $R \cap S'$. More precisely, if $\cP_\cX(R) \le \alpha$, then
\[  
    \sup_{\chi_{S'} \in \cH} \vert \bell(R \cap S') - \ell(R \cap S')\vert \le 1 \le \frac{\alpha}{\cP_\cX(R \cap S')} \le \frac{\alpha' + \rho}{\cP_\cX(R \cap S')}.
\]

We can therefore restrict our attention to those  $R \in \cR$ that satisfy $|D_R|>0$, since we only have a non-trivial statement to prove for sets $R$ with  $\cP_\cX(R) > \alpha$.  Using Lemma \ref{lem:oracle-guarantee} and the definition of an agnostic learning oracle, we know that for each $V = R \cap S^+$, with probability $1-p(n)$,
\begin{align}
&\cP_\cX(R \cap S^+) \left(\bell(R \cap S^+) - \ell(R \cap S^+)\right) + \rho \nonumber\\
&= \E_{(x,y)}[\chi_{S^+}(x) \cdot r_R^+(x,y)] + \rho \nonumber\\
&\ge \sup_{\chi_{S'} \in \cH}\E_{(x,y)}[\chi_{S'}(x) \cdot r_R^+(x,y)] \nonumber\\
&= \sup_{\chi_{S'} \in \cH}\cP_\cX(R \cap S') \left(\bell(R \cap S') - \ell(R \cap S')\right) \label{eqn:oracle-positive-residual}
\end{align}
The same argument applies for $V=R \cap S^-$, and we obtain 
\begin{align}
    \cP_\cX(R \cap S^+) \left(\ell(R \cap S^-) - \bell(R \cap S^-)\right) + \rho 
    \ge \sup_{\chi_{S'} \in \cH}\cP_\cX(R \cap S') \left(\ell(R \cap S') - \bell(R \cap S')\right)\label{eqn:oracle-negative-residual}.
\end{align}

Combining \eqref{eqn:cor-alpha-prime-guarantee}, \eqref{eqn:oracle-positive-residual}, and \eqref{eqn:oracle-negative-residual}, we get that with probability $1-2|\cR|p(n)$, 
\[  
    \sup_{\chi_{S'} \in \cH} \vert \bell(R \cap S') - \ell(R \cap S')\vert \le \frac{\alpha' + \rho}{\cP_\cX(R \cap S')}
\]

\end{proof}

\oracleruntime*
\begin{proof}
The running time of Algorithm \ref{alg:consistency-auditor-learning-oracle-wrapper} is $O(m^2\tau(n))$, as we always call it with $|\cR|= O(m^2)$ and we assumed $\tau(n) = \Omega(n)$, meaning the running time of the learning oracle dominates the  calculations in the empirical check. And Theorem \ref{thm:agnostic} gives that with probability $1-3\delta Q_\alpha\left((m^2 + m)+ m^2Q_\beta\right)$, there will be at most $Q$ gradient descent operations. Because the number of gradient descent operations is equal to the number of subroutine calls to Algorithm \ref{alg:consistency-auditor-learning-oracle-wrapper}, the overall running time is $O(Qm^2\tau(n))$.
\end{proof}

\section{A Submodular Set-Cover Formulation} \label{app:formulation}
We can define the following problem. Theorem \ref{thm:ciguarantee} shows us that for every even $a$, and every $G \in \cG$, $i,j \in [m]$, $I_{\gamma,a}(x)$ forms a valid marginal prediction interval for every set $G(\bmu, \bma, i,j)$ with  probability at least $\gamma$ under $\cP_\cX$. Can we  construct tighter prediction intervals using all $\lfloor\frac{k}{2}\rfloor$ moments?

We make the following simplifying assumptions in this section:
\begin{enumerate}
    \item $\cX$ is a set with finite cardinality. 
    \item  $\cP_\cX$ is known exactly (note that we do not assume we know the distribution on \emph{labels} $y$, which preserves the core motivation of the problem).
    \item For every $x \in \cX$, there exists $G \in \cG$, $a$ even s.t. $1<a \leq k$, and $i,j \in [m]$ such that $x \in G(\bmu, \bma, i,j)$ and $\cP_{\cX}(G(\bmu, \bma, i,j)) \geq \gamma$ (otherwise there is no way to give a valid marginal prediction interval for such an $x$).
\end{enumerate}

Let us define the set of all relevant sets as $$\cS \equiv \{G(\bmu, \bma, i,j): \forall G \in \cG, i,j \in [m], 1 < a \leq k, a \text{ even s.t. } \cP_{\cX}(G(\bmu, \bma, i,j)) \geq \delta\}.$$ With each set $S \in \cS$, we associate the width $\Delta_S(\cdot)$ in the obvious way. 

Given any $\cS' \subseteq \cS$ we say that $\cS'$ covers $\cX$ if $\forall x \in \cX$, $\exists S \in \cS'$ s.t. $x \in S$. Given any $\cS' \subseteq \cS$ that covers $\cX$, we can construct valid marginal prediction intervals for all $x \in X$:
\begin{align*}
&\Delta_{\cS'}(x) \equiv \max_{S \in \cS'| x \in S} \Delta_S(x),\\
&I_{\cS'}(x) = [\bmu(x) - \Delta_{\cS'}(x), \bmu(x) + \Delta_{\cS'}(x)]. 
\end{align*}

To see that this will result in a valid prediction interval, observe that for any $x \in X$, it is covered by some $S \in \cS'$. By definition of $\cS'$, $S = G(\bmu, \bma, i,j)$ for some $a$ even, $i,j \in [m]$, $G \in \cG$. Note that $I_{\gamma,a}(x) \subseteq I_{\cS'}(x)$ by construction of $I_{\cS'}(\cdot)$. Therefore Theorem \ref{thm:ciguarantee} ensures that these prediction intervals are valid for any $S \in \cS'$, and indeed, therefore valid for any group $G \in \cG$.

A natural optimization problem is to find a subset $\cS'$ that covers $\cX$ so as to minimize the expected width of the marginal prediction intervals that can be produced in this way, i.e. solves (exactly or approximately)
\begin{align*}
    &\min_{\cS' \subseteq \cS} \E_{x \sim \cP_{\cX}}[\Delta_{\cS'}(x)]\\
    \text{s.t. }& \cS' \text{ covers } \cX.
\end{align*}

We can rewrite the problem in the following way. Let $A$ be a $0-1$ matrix of dimension $|\cX| \times |\cS|$. The columns correspond to sets $S \in \cS$ and the rows to elements $x \in \cX$. If $A_{xS} =1$ this means that element $x \in \cX$ is contained in set $S \in \cS$. Associated with each column $S$ there is a function $\Delta_S$. Recall that $\cP_{\cX}(x)$ denotes the probability of $x$.

We can denote any subset of $\cS' \subseteq \cS$ by a $0/1$ vector $w \in \{0,1\}^{|\cS|}$ such that $w_S =1$ if $S \in \cS'$.  We can therefore recast the optimization problem:
\begin{align*}
\min_{z,w} &\sum_{x \in \cX} \cP_{\cX}(x) z_x \\
\text{s.t. }& z_x - \Delta_S(x) A_{xS} w_S \geq 0 &\forall x \in \cX\\
& \sum_{S \in \cS} A_{xS} w_{S} \geq 1 &\forall x \in \cX.
\end{align*}
 
For any subset $\cS'\subseteq \cS$ let $f_x(\cS') = \max_{S \in \cS'} \Delta_S(x) A_{xS}$. Notice $f_x(\cS')$ is a non-decreasing and submodular function of $\cS'$. Let $f(\cS') = \sum_x \cP_{\cX}(x) f_x(\cS')$, clearly $f(\cdot)$ is non-decreasing and submodular. Similarly, for any $\cS' \subseteq \cS$ let $w$ be the associated $0/1$ vector and define $g(\cS') = |\{x: \sum_{S \in \cS} A_{xS} w_{S} \geq 1\}|$. Again $g(\cdot)$ is a non-decreasing and submodular function of $\cS'$. Observe that can write our problem as:
\begin{align*}
\min_{\cS' \subseteq \cS} & \,\,f(\cS'),\\
\text{s.t. }& g(\cS') \geq |\cX|.
\end{align*}
Therefore our problem is the submodular cost submodular cover problem. 

We can now hope to apply known results to solve it.  For example, \cite{wan2010greedy} show that the greedy solution (to iteratively add the set with the smallest average width) is approximately optimal. In particular, their Theorem 2.1  guarantees that the greedy solution provides a $\frac{k}{2} H$-approximate to our optimization problem where $H$ is the $\ell\tth$-harmonic number, $\ell = \max \{|S|: S \in \cS\}$, and $k$ is the number of moments we have access to. 

Unfortunately in this context, these guarantees are unsatisfactory: the approximation grows with the number of moments $k$ we have access to, and with $\log|\cX|$, which will typically be linear in the data dimension. Note that  \cite{wan2010greedy} study general submodular objective functions and does not exploit the specific structure of the objective function here. We leave the question of whether guarantees can be offered for this problem to future research. Another natural question is how to approximate this optimization when $\cP_\cX$ is not known, i.e. we only have a finite sample from $\cP$.  

\end{document}